\newcommand{\reviewtext}[1]{\textcolor{black}{#1}}  %
\newcounter{insight}
\DeclareRobustCommand{\insight}{%
  \refstepcounter{insight}%
  Insight~\theinsight:\xspace%
}
\theoremstyle{plain}
\newtheorem{theorem}{Theorem}[section]
\newtheorem{proposition}[theorem]{Proposition}
\newtheorem{lemma}[theorem]{Lemma}
\theoremstyle{definition}
\newtheorem{assumption}[theorem]{Assumption}
\theoremstyle{remark}
\theoremstyle{theorem}
\newtheorem{informalproposition}[theorem]{Informal Proposition}
\newtheorem{informaltheorem}[theorem]{Informal Theorem}
\Crefname{assumption}{Assumption}{Assumptions}
\crefname{assumption}{Assumption}{Assumptions}
\Crefname{ALC@unique}{Line}{Lines}
\crefname{ALC@unique}{line}{lines}
\newcommand{\sref}[1]{\S\ref{#1}}
\renewcommand{\paragraph}{%
  \@startsection{paragraph}{4}%
  {\z@}{0ex \@plus 0ex \@minus 0ex}{-1em}%
  {\normalfont\normalsize\bfseries}%
}
\newlist{lemenum}{enumerate}{1} %
\setlist[lemenum]{label=(\roman*),ref=\thelemma\,(\roman*),topsep=0pt}
\Crefname{lemenumi}{Lemma}{Lemmas}
\newlist{corenum}{enumerate}{1} %
\setlist[corenum]{label=(\roman*),ref=\thecorollary\,(\roman*),topsep=0pt}
\Crefname{corenumi}{Corollary}{Corollaries}
\newlist{thmenum}{enumerate}{1} %
\setlist[thmenum]{label=(\roman*),ref=\thetheorem\,(\roman*),topsep=0pt}
\Crefname{thmenumi}{Theorem}{Theorems}
\newlist{propenum}{enumerate}{1} %
\setlist[propenum]{label=(\roman*),ref=\thedefinition\,(\roman*),topsep=0pt}
\Crefname{propenumi}{Property}{Properties}
\newlist{assenum}{enumerate}{1} %
\setlist[assenum]{label=(\roman*),ref=\theassumption\,(\roman*),topsep=0pt}
\Crefname{assenumi}{Assumption}{Assumptions}
\pgfplotsset{compat=newest}
\definecolor{chaptercolor}{HTML}{1A254B}
\definecolor{darkblue}{HTML}{1A254B}
\definecolor{linkcolor}{HTML}{2B50AA}
\definecolor{citecolor}{HTML}{2B50AA}
\definecolor{lightlinkcolor}{HTML}{9A8F97}
\definecolor{darklinkcolor}{HTML}{1A254B}
\definecolor{light}{HTML}{F8F8F8}
\definecolor{lightblue}{HTML}{A7BED3}
\definecolor{red}{HTML}{F2545B}
\definecolor{blue}{HTML}{2b50aa}
\NewDocumentCommand{\incfig}{mo}{
  \begin{center}
    \IfValueT{#2}{\def\svgwidth{#2}}{\def\svgwidth{\columnwidth}}
    \import{./figures/}{#1.pdf_tex}
  \end{center}
}
\NewDocumentCommand{\incplt}{O{\columnwidth}m}{%
  \begin{center}
    \adjustbox{center}{\adjustbox{width=#1+10pt}{\includegraphics[width=#1]{./plots/output/#2.pdf}}}
  \end{center}
}
\newcommand{\figref}[2]{Figure~\hyperref[#1]{\ref{#1} (#2)}}
\newcolumntype{S}{>{\scriptsize\selectfont}l}
\newcommand{\downres}[1]{\smash{\tcbox[downbox]{\ensuremath{\downarrow\!#1}}}}
\newcommand{\upres}[1]{\smash{\tcbox[upbox]{\ensuremath{\uparrow\!#1}}}}
\newcommand{\reso}[1]{\,\scriptsize{(#1)}}
\newtcolorbox{examplebox}[2]{%
    colback=gray!20,       %
    colframe=black,        %
    width=\textwidth,      %
    colbacktitle=black,    %
    coltitle=white,        %
    fonttitle=\bfseries,   %
    boxsep=0pt,            %
    left=5pt,              %
    right=5pt,             %
    top=5pt,               %
    bottom=5pt,            %
    title={#1},            %
    after skip=\bigskipamount, %
    before skip=\bigskipamount, %
    breakable=true,
    }
\DeclareFontFamily{U}{mathb}{\hyphenchar\font45}
\DeclareFontShape{U}{mathb}{m}{n}{
      <5> <6> <7> <8> <9> <10> gen * mathb
      <10.95> mathb10 <12> <14.4> <17.28> <20.74> <24.88> mathb12
      }{}
\DeclareSymbolFont{mathb}{U}{mathb}{m}{n}
\DeclareMathSymbol{\Asterisk}      {2}{mathb}{"06}
\newcommand*{\abs}[1]{| #1 |}
\NewDocumentCommand{\norm}{sm}{\IfBooleanTF{#1}{\|#2\|}{\left\| #2 \right\|}}
\NewDocumentCommand{\normF}{sm}{\IfBooleanTF{#1}{\|#2\|_{\mathrm{F}}}{\left\| #2 \right\|_{\mathrm{F}}}}
\NewDocumentCommand{\dTV}{sm}{d_{\mathrm{TV}}\IfBooleanTF{#1}{(#2)}{\left( #2 \right)}}
\newcommand*{\const}{\mathrm{const}}
\newcommand*{\method}{\textsc{SIFT}\xspace}
\newcommand*{\adamethod}{\textsc{Adaptive SIFT}\xspace}
\newcommand*{\methodfast}{\textsc{SIFT-Fast}\xspace}
\newcommand*{\methodl}{\textsc{SIFT}($\lambda'$)\xspace}
\newcommand*{\methodfastl}{\textsc{SIFT-Fast}($\lambda'$)\xspace}
\newcommand*{\methodp}{\textsc{SIFT}($\rho^2$)\xspace}
\newcommand*{\prompt}{\ensuremath{\vx^\star}}
\newcommand*{\loss}{\ensuremath{\mathcal{L}}}
\DeclareMathOperator*{\defeq}{\,\dot{=}\,}
\DeclareMathOperator*{\argmax}{arg\,max}
\DeclareMathOperator*{\argmin}{arg\,min}
\DeclareMathOperator*{\spn}{span}
\DeclarePairedDelimiter\parentheses{(}{)}
\DeclarePairedDelimiter\brackets{[}{]}
\DeclarePairedDelimiter\braces{\{}{\}}
\newcommand{\R}{\mathbb{R}}
\newcommand{\Nat}{\mathbb{N}}
\renewcommand{\vec}[1]{\boldsymbol{#1}}
\newcommand{\mat}[1]{\boldsymbol{#1}}
\newcommand{\spa}[1]{\mathcal{#1}}
\newcommand{\opt}[1]{#1^\star}
\NewDocumentCommand{\irred}{som}{\ensuremath{\sigma_{\hspace{-1pt}\infty}\IfBooleanTF{#1}{^2}{}(#3\IfValueTF{#2}{;#2}{})}}
\NewDocumentCommand{\Ind}{m}{\mathbbm{1}\{{#1}\}}
\NewDocumentCommand{\fnPr}{}{\mathbb{P}}
\RenewDocumentCommand{\Pr}{om}{\fnPr\IfValueT{#1}{_{#1}}\parentheses*{#2}}
\RenewDocumentCommand{\H}{mo}{\mathrm{H}\IfValueTF{#2}{\!\left[#1\ \middle|\ #2\right]}{\brackets*{#1}}}
\NewDocumentCommand{\Hsm}{mo}{\mathrm{H}\IfValueTF{#2}{[#1 \mid #2]}{\brackets{#1}}}
\NewDocumentCommand{\I}{mmo}{\mathrm{I}\IfValueTF{#3}{\!\left(#1;#2\ \middle|\ #3\right)}{\parentheses*{#1; #2}}}
\NewDocumentCommand{\Ism}{mmo}{\mathrm{I}\IfValueTF{#3}{(#1;#2 \mid #3)}{\parentheses{#1; #2}}}
\NewDocumentCommand{\E}{somo}{\ensuremath{\mathbb{E}\IfValueT{#2}{_{#2}}{} \IfBooleanTF{#1}{#3}{\IfValueTF{#4}{\!\left[#3\ \middle|\ #4\right]}{\brackets*{#3}}}}}
\NewDocumentCommand{\Esm}{somo}{\ensuremath{\mathbb{E}\IfValueT{#2}{_{#2}}{} \IfBooleanTF{#1}{#3}{\IfValueTF{#4}{\!\left[#3\ \middle|\ #4\right]}{\brackets{#3}}}}}
\NewDocumentCommand{\Var}{somo}{\mathrm{Var}\IfValueT{#2}{_{#2}}{} \IfBooleanTF{#1}{#3}{\IfValueTF{#4}{\!\left(#3\ \middle|\ #4\right)}{\parentheses*{#3}}}}
\NewDocumentCommand{\Varsm}{somo}{\mathrm{Var}\IfValueT{#2}{_{#2}}{} \IfBooleanTF{#1}{#3}{\IfValueTF{#4}{\left(#3\ \middle|\ #4\right)}{\parentheses{#3}}}}
\NewDocumentCommand{\Cov}{som}{\mathrm{Cov}\IfValueT{#2}{_{#2}}{} \IfBooleanTF{#1}{#3}{\brackets*{#3}}}
\NewDocumentCommand{\Cor}{som}{\mathrm{Cor}\IfValueT{#2}{_{#2}}{} \IfBooleanTF{#1}{#3}{\brackets*{#3}}}
\NewDocumentCommand{\grad}{e_}{\boldsymbol{\nabla}\IfValueT{#1}{_{\!\!#1}\,}}
\NewDocumentCommand{\BigO}{m}{O\parentheses*{#1}}
\NewDocumentCommand{\BigOTil}{m}{\widetilde{O}\parentheses*{#1}}
\NewDocumentCommand{\transpose}{m}{#1^\top}
\NewDocumentCommand{\inv}{m}{#1^{-1}}
\NewDocumentCommand{\diag}{som}{\mathrm{diag}\IfValueT{#2}{_{#2}}{} \IfBooleanTF{#1}{\braces{#3}}{\braces*{#3}}}
\NewDocumentCommand{\N}{somm}{\mathcal{N}\IfBooleanTF{#1}{\left(}{(}\IfValueT{#2}{#2;}{} #3, #4\IfBooleanTF{#1}{\right)}{)}}
\NewDocumentCommand{\GP}{omm}{\mathcal{GP}(\IfValueT{#1}{#1;}{} #2, #3)}
\newcommand{\fpre}{f^{\mathrm{pre}}}
\newcommand{\vzero}{\vec{0}}
\newcommand{\ve}{\vec{e}}
\newcommand{\vf}{\vec{f}}
\newcommand{\vfsub}[1]{\vec{f}_{\!#1}}
\newcommand{\vfpre}{\vec{f}^{\mathrm{pre}}}
\newcommand{\vk}{\vec{k}}
\newcommand{\vv}{\vec{v}}
\newcommand{\vw}{\vec{w}}
\newcommand{\vwpre}{{\vw^{\mathrm{pre}}}}
\newcommand{\vx}{\vec{x}}
\newcommand{\vs}{\vec{s}}
\newcommand{\vsp}{\vec{s'}}
\newcommand{\vxp}{\vec{x'}}
\newcommand{\vy}{\vec{y}}
\newcommand{\vysub}[1]{\vec{y}_{\!#1}}
\newcommand{\vmusub}[1]{\boldsymbol{\mu}_{\!#1}}
\newcommand{\vphi}{\boldsymbol{\phi}}
\newcommand{\vtheta}{\boldsymbol{\theta}}
\newcommand{\mzero}{\mat{0}}
\newcommand{\mA}{\mat{A}}
\newcommand{\mB}{\mat{B}}
\newcommand{\mC}{\mat{C}}
\newcommand{\mG}{\mat{G}}
\newcommand{\mI}{\mat{I}}
\newcommand{\mK}{\mat{K}}
\newcommand{\mKsub}[1]{\mat{K}_{\!#1}}
\newcommand{\mPhi}{\mat{\Phi}}
\newcommand{\mPi}{\mat{\Pi}}
\newcommand{\mV}{\mat{V}}
\newcommand{\mVsub}[1]{\mV_{\!#1}}
\newcommand{\mW}{\mat{W}}
\newcommand{\mWsub}[1]{\mW_{\!#1}}
\newcommand{\mWhat}{\mat{\widehat{W}}}
\newcommand{\mWpre}{{\mat{W}^{\mathrm{pre}}}}
\newcommand{\mLambda}{\mat{\Lambda}}
\newcommand{\mSigma}{\mat{\Sigma}}
\newcommand{\spA}{\spa{A}}
\newcommand{\spD}{\spa{D}}
\newcommand{\spW}{\spa{W}}
\newcommand{\spX}{\spa{X}}
\newcommand{\spY}{\spa{Y}}
\title{Efficiently Learning at Test-Time: \\ Active Fine-Tuning of LLMs}
\author{Jonas Hübotter\thanks{Correspondence to \texttt{jonas.huebotter@inf.ethz.ch}}, \,Sascha Bongni, Ido Hakimi, Andreas Krause \\
ETH Z\"urich, Switzerland
}
\begin{document}

\maketitle

\vspace{-13pt}
\begin{abstract}
  Recent efforts in fine-tuning language models often rely on automatic data selection, commonly using Nearest Neighbors retrieval from large datasets.
  However, we theoretically show that this approach tends to select redundant data, limiting its effectiveness or even hurting performance.
  To address this, we introduce \method, a data selection algorithm designed to reduce uncertainty about the model's response given a prompt, which unifies ideas from retrieval and active learning.
  Whereas Nearest Neighbor retrieval typically fails in the presence of information duplication, \method accounts for information duplication and optimizes the overall information gain of the selected examples.
  We focus our evaluations on fine-tuning at test-time for prompt-specific language modeling on the Pile dataset, and show that \method consistently outperforms Nearest Neighbor retrieval, with minimal computational overhead. %
  Moreover, we show that our uncertainty estimates can predict the performance gain of test-time fine-tuning, and use this to develop an adaptive algorithm that invests test-time compute proportional to realized performance gains.
  We provide the \href{https://github.com/jonhue/activeft}{\texttt{activeft}} (Active Fine-Tuning) library which can be used as a drop-in replacement for Nearest Neighbor retrieval.\looseness=-1\looseness=-1
\end{abstract}
\vspace{-5pt}

\vspace{-2pt}
\section{Introduction}\label{sec:introduction}
\vspace{-2pt}

\begin{wrapfigure}{r}{0.4\textwidth}
  \vspace{-27pt}
  \incplt[0.4\textwidth]{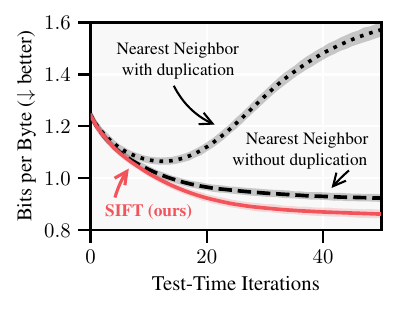}
  \vspace{-15pt}
  \caption{Selecting fine-tuning data using \method~(\textcolor{red}{red}) robustly outperforms Nearest Neighbor retrieval~(black) and avoids the failure-mode of Nearest Neighbor retrieval where the same data is selected repeatedly, which is a common result of information duplication.\looseness=-1}
  \label{fig:main}
  \vspace{-10pt}
\end{wrapfigure}

The standard paradigm of machine learning separates training and testing.
Training aims to learn a model by \emph{inductively} extracting general rules from data, and testing applies this model to new, unseen data.
We investigate an alternative \emph{transductive} paradigm where the model is fine-tuned at test-time specifically to the given task.
Variations of this paradigm have been studied since the inception of machine learning as a field.
Early examples are local learning~\citep{cleveland1979robust,cleveland1988locally,atkeson1997locally} and local fine-tuning~\citep{bottou1992local}.
More recently, with the advent of large pre-trained models which have good representations and are strong foundations for fine-tuning, the idea of \emph{test-time fine-tuning} has re-gained attention~\citep{krause2018dynamic,krause2019dynamic,sun2020test}.
\cite{hardt2023test} show that fine-tuning on data related to the prompt to a large language model (LLM) can significantly improve performance.
Also, test-time fine-tuning is the central component of state-of-the-art approaches to the ARC challenge~\citep{chollet2019measure,cole2023dataset,akyurek2024surprising}, a non-saturated benchmark which is intended to test reasoning capabilities based on ``core knowledge'' rather than mere memorization.\looseness=-1

\paragraph{\emph{Active Fine-Tuning:} Effective data selection for fine-tuning LLMs}

Test-time fine-tuning demands automatic data selection since manually selecting data for each test instance is infeasible.
Moreover, the sample efficiency of test-time fine-tuning is a central bottleneck as the number of gradient steps is directly proportional to inference time.
Previous works on data selection for fine-tuning LLMs have fundamentally relied on Nearest Neighbor retrieval within some embedding space~\citep{hardt2023test,xia2024less}.
We show theoretically and empirically that Nearest Neighbor retrieval is insufficient for fine-tuning LLMs since it can lead to the selection of redundant data.
Notably, recent works using influence functions for data selection such as \cite{xia2024less} have pointed out this limitation.
In contrast, a large body of work on (inductive) active learning has studied non-redundant data selection~\citep[e.g.,][]{sener2017active,ash2019deep,yehuda2022active,kirsch2019batchbald} that covers the data manifold well~(cf.~\cref{fig:schema}).
Retrieval and active learning can be seen as two extreme ends of a spectrum: retrieval selects relevant but potentially redundant data, while active learning selects diverse but potentially irrelevant data.\looseness=-1

\begin{wrapfigure}{r}{0.4\textwidth}
  \vspace{-0.5cm}
  \centering
  \includesvg[width=0.4\columnwidth]{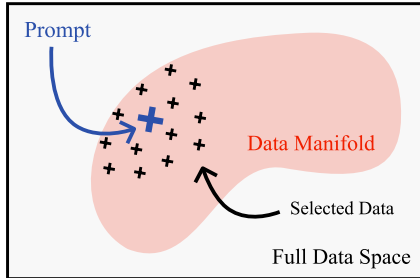}
  \vspace{-0.5cm}
  \caption{We consider a scenario where we have a pre-trained language model capturing a latent manifold~(\textcolor{red}{red}) in the large sequence space~(\textcolor{gray}{white}). We aim to improve the models performance on a given prompt~(\textcolor{blue}{blue}) by \emph{efficiently} fine-tuning the model on \emph{few} relevant and diverse data points~(\textcolor{black}{black}) at test-time.\looseness=-1}
  \label{fig:schema}
  \vspace{-1cm}
\end{wrapfigure}

We bridge this gap by unifying ideas from retrieval and active learning in \method, an algorithm based on emerging literature on transductive active learning~\citep{hubotter2024transductive} that \emph{\textbf{S}elects \textbf{I}nformative data for \textbf{F}ine-\textbf{T}uning} as illustrated in \cref{fig:schema}.
Our results show that \method leads to substantial improvements in performance and efficiency.
Concretely, we show the following:\looseness=-1

\begin{enumerate}[align=left, leftmargin=12pt, labelindent=\parindent, listparindent=\parindent, labelwidth=0pt, itemindent=!]
  \item \textbf{Nearest Neighbor retrieval is insufficient~(\sref{sec:problem_setting})}:
  We prove that selecting the top-$N$ highest scoring points from a large dataset according to a fixed scoring function leads to the selection of redundant data.\looseness=-1
  \item \textbf{\method estimates uncertainty about responses~(\sref{sec:preliminaries})}:
  We develop the notion of \emph{uncertainty about the response to the prompt}, and derive an anytime high probability bound to the total variation distance between the model's distribution over responses and the ground truth which is governed by this uncertainty.\looseness=-1
  \looseness=-1
\end{enumerate}
\leavevmode
\begin{enumerate}[resume, align=left, leftmargin=12pt, labelindent=\parindent, listparindent=\parindent, labelwidth=0pt, itemindent=!]
  \vspace{-1.3\baselineskip}
  \item \textbf{\method provably reduces uncertainty~(\sref{sec:method})}: We propose \method, an algorithm that selects data which reduces uncertainty about the response to the prompt. We prove statistical rates for the uncertainty reduction (\sref{sec:convergence}) and show that \method is compute-efficient, with minimal overhead compared to Nearest Neighbor retrieval (\sref{sec:compute_efficiency}).\looseness=-1
  \item \textbf{\method performs better and is more robust than Nearest Neighbor retrieval~(\sref{sec:results})}: We find that fine-tuning an LLM on data selected by \method consistently and robustly improves performance, which is not the case with Nearest Neighbor retrieval.
  Moreover, our results suggest that at test-time, an LLM might be able to learn more effectively through fine-tuning than from its context.\looseness=-1
  \item \textbf{\method can invest test-time compute proportionally to performance gains~(\sref{sec:compute_proportional})}: We observe that our uncertainty estimates can accurately predict the performance gain of test-time fine-tuning.
  Motivated by this, we dynamically adapt compute to the expected performance gain.\looseness=-1
\end{enumerate}

\section{Test-Time Fine-Tuning}\label{sec:problem_setting} %

We define test-time fine-tuning of LLMs~\citep{hardt2023test} as follows.
We consider a domain~$\spX$ of token sequences and assume that we have access to a large dataset of examples $\spD \subseteq \spX$ which we call the \emph{data space}.
We further assume that we have access to a pre-trained autoregressive language model that maps token sequences $\spX$ to probability distributions over the next token from a vocabulary of size~$V$.
Our work addresses the central question: \begin{center}
  \emph{Given a prompt $\prompt \in \spX$, how can we effectively select fine-tuning data \\ from the large dataset $\spD$ such that the fine-tuned model performs well on the prompt?}
\end{center}
We then fine-tune the model for a single gradient step on each selected sequence.\looseness=-1

Locally adjusting a model at test-time has gained popularity with few-shot in-context learning~\citep{brown2020language,wei2022chain,bubeck2023sparks,openai2024learning} and retrieval augmented generation \citep[RAG,][]{lewis2020retrieval,guu2020retrieval,borgeaud2022improving}.
In contrast to this approach, test-time fine-tuning works by fine-tuning the parameters of a pre-trained model at test-time specifically to each prompt.
Notably, test-time fine-tuning takes time linear in the number of tokens whereas in-context learning with a transformer has quadratic complexity~\citep{vaswani2017attention}.
Next to this, \cite{hardt2023test} and other works have found (test-time) fine-tuning to perform substantially better than in-context learning~\citep{hu2021lora,mosbach2023few}.
This work further improves the performance of test-time fine-tuning.
Prior work has also studied how one can explicitly meta-learn the ability to perform test-time fine-tuning~\citep{finn2017model,sun2024learning}, though we find this capability to emerge even from models that are not explicitly trained in this way.\looseness=-1

The central question studied in this work also arises when fine-tuning LLMs during post-training.
For example, in targeted instruction tuning, the goal is to fine-tune a model to obtain desired capabilities, which are commonly embodied by a set of examples~$\prompt$~\citep{xia2024less}.
The extension of our work to such a ``batched'' setting is straightforward.\looseness=-1

\subsection{Nearest Neighbor Retrieval is Insufficient}\label{sec:problem_setting:nn_insufficient}

\begin{wrapfigure}{r}{0.5\textwidth}
  \vspace{-15pt}
  \begin{mdframed}[
    backgroundcolor=light,
    linecolor=black,
    linewidth=1.2pt,
    innerleftmargin=5pt,
    innerrightmargin=5pt,
    innertopmargin=5pt,
    innerbottommargin=8pt
  ]
    \textbf{Prompt:} What is the age of Michael Jordan and \textcolor{red}{how many kids does he have?} \\[5pt]
    \textbf{Nearest Neighbor:}
    {\small\begin{enumerate}[wide, labelindent=0pt]
      \item The age of Michael Jordan is 61 years.
      \item Michael Jordan was born on February 17, 1963.
    \end{enumerate}}\vspace{3pt}
    \textbf{\method (ours):}
    {\small\begin{enumerate}[wide, labelindent=0pt]
      \item The age of Michael Jordan is 61 years.
      \item \textcolor{red}{Michael Jordan has five children.}
    \end{enumerate}}
  \end{mdframed}
  \vspace{-7pt}
  \caption{We retrieve two data points to answer the prompt. Nearest Neighbor selects redundant data, while \method yields maximal information~(cf.~\sref{sec:examples}).\looseness=-1}
  \label{fig:qualitative_example}
  \vspace{-0.4cm}
\end{wrapfigure}

Prior work on data selection for fine-tuning has relied on Nearest Neighbor retrieval.
The idea of making predictions on $\prompt$ depending on its nearest neighbors has been around as long as machine learning itself~\citep{fix1951discriminatory,cover1967nearest}.
\cite{bottou1992local} were the first to apply this idea to the fine-tuning of convolutional neural networks by selecting the nearest neighbors of a test image in pixel-space.
More recently, due to advances in representation learning~\citep{devlin2018bert,reimers2019sentence} and efficiency~\citep[e.g.,][]{johnson2019billion}, Nearest Neighbor retrieval has regained attention and been applied to test-time fine-tuning~\citep{hardt2023test}.\looseness=-1

\cite{xia2024less} use influence functions~\citep{cook1977detection,koh2017understanding,pruthi2020estimating} to select data for fine-tuning LLMs.
This line of work aims to select data that reduces a first-order Taylor approximation to the test loss after fine-tuning, an approach that corresponds to Nearest Neighbor retrieval in a certain embedding space.
They highlight two main limitations of the use of influence functions and Nearest Neighbor retrieval for data selection:\looseness=-1
\vspace{-5pt}\begin{itemize}[align=left, leftmargin=9pt, labelindent=\parindent, listparindent=\parindent, labelwidth=0pt, itemindent=!]
    \item Nearest Neighbor retrieval leads to the selection of redundant data.
    \Cref{fig:qualitative_example} illustrates this limitation with a qualitative example.
    We formalize this limitation in \cref{prop:insufficiency_nn}, which we summarize here informally:
    \begin{informalproposition}\label{prop:insufficiency_nn_informal}
      Selecting the top-$N$ nearest neighbors from the data space (according to cosine similarity or Euclidean distance) may not reduce the uncertainty about the response to the prompt beyond fine-tuning on the closest neighbor.
      Every additional passage may be redundant.\looseness=-1
    \end{informalproposition}

    \item Nearest Neighbor retrieval selects data with high positive cosine similarity to the prompt.
    Yet, data with high \emph{negative} cosine similarity can be equally informative as data with high positive cosine similarity~\citep[Appendix K.2]{xia2024less}, but is ignored by standard Nearest Neighbor retrieval.
\end{itemize}
\vspace{-5pt}
In this work, we propose \method and show that it naturally addresses both limitations.
\reviewtext{\method unifies work on retrieval, which finds relevant but redundant data, and active learning~(AL), which finds non-redundant but irrelevant data.
In \sref{sec:extended_related_work}, we discuss how \method relates to prior work in retrieval and AL.}\looseness=-1

\section{Preliminaries: Uncertainty Estimation for Fine-Tuning}\label{sec:preliminaries}

We suppose the assigned probability that $y \in [V]$ is the class label of an input $\vx \in \spX$ is given by $s_y(\vf^\star(\vx))$, where $s_y$ is the softmax $\smash{s_y(\vf) \defeq \exp(f_y) / (\sum_{i=1}^V \exp(f_i))}$.
That is, $\vf^\star(\vx)$ denotes the ``ground truth'' logits for a given input $\vx$.
In the context of language modeling, $V$ is the number of tokens in the vocabulary, and $y$ denotes the index of the next token.
We defer all proofs to \cref{sec:proofs}.\looseness=-1

We use a surrogate model to quantify the informativeness of data, which we define next.\looseness=-1

\begin{assumption}[\emph{Surrogate model:} Linear model class within a known latent space]\label{assumption:linear}
  We assume ${\vf^\star(\vx) = \mW^\star \vphi(\vx)}$ with unknown ${\mW^\star \in \R^{V \times d}}$ and where ${\vphi(\cdot) \in \R^d}$ denotes known embeddings.
\end{assumption}\vspace{-6pt}

The surrogate model uses the latent space induced by the pre-trained model to describe the data manifold. %
We emphasize that while \method relies on this surrogate model for data selection, it still fine-tunes the full pre-trained model, including latent features.
Surrogate dense embedding models of this kind have been used extensively for data selection via Nearest Neighbor retrieval~\citep[e.g.,][]{lewis2020retrieval,karpukhin2020dense,borgeaud2022improving,xia2024less}, and to understand the training dynamics and generalization of large neural networks \citep[e.g.,][]{jacot2018neural,lee2019wide,malladi2023kernel,templeton2024scaling,park2024linear}.
Furthermore, a surrogate model that assumes linearity in some fixed latent space may be a reasonable approximation for test-time fine-tuning since the latent space of the unfrozen model is not expected to change substantially by a few gradient steps.\looseness=-1

In this work, we explore a scenario where we have a pre-trained model $\smash{\vfpre(\vx) = \mWpre \vphi(\vx)}$.
We let ${\vf(\vx; \mW) \defeq \mW \vphi(\vx)}$ and denote by $\loss(\mW; D)$ the negative log-likelihood loss of $\vf(\cdot; \mW)$ on a dataset $D$ of inputs $\vx$ with corresponding class labels $y$: $\smash{\loss(\mW; D) \defeq -\sum_{(\vx, y) \in D} \log s_y(\vf(\vx; \mW))}$.\looseness=-1

\paragraph{Uncertainty Estimation}

Our first intermediate goal is to estimate the uncertainty about the response to a given prompt $\prompt$ after having fine-tuned on selected data $D_n$ of size $n$.
To this end, we generalize prior work on confidence sets under categorical feedback~\citep[i.e., class feedback,][]{amani2021ucb,zhang2024online} to our fine-tuning setting.
We consider the function class ${\spW_B \defeq \{\mW \in \R^{V \times d} \mid \normF{\mW - \mWpre} \leq B\}}$ where $\normF{\cdot}$ denotes the Frobenius norm %
and with $B$ a constant such that $\mW^\star \in \spW_B$.
Then given data~$D_n$, we can refine the prior estimate~$\mWpre$ of~$\mW^\star$ by minimizing the regularized negative log-likelihood loss \vspace{-2pt}\begin{align}
  \loss^\lambda(\mW; D_n) \defeq \loss(\mW; D_n) + \frac{\lambda}{2} \normF{\mW - \mWpre}^2 \label{eq:reg_nll_loss}
  \vspace{-2pt}
\end{align} with regularization coefficient $\lambda > 0$.
We write its minimizer as $\smash{\mWsub{n} \defeq \argmin_{\mW \in \spW_B} \loss^\lambda(\mW; D_n)}$.
We will further denote the ground truth probability distribution over the response to $\vx$ by $\vs^\star(\vx) \defeq \vs(\vf^\star(\vx))$ and our approximation after selection of $n$ samples by $\vs_n(\vx) \defeq \vs(\vf(\vx; \mWsub{n}))$.\looseness=-1

We construct confidence sets of the form $[\vs_n(\vx) \pm \beta_n(\delta) \sigma_n(\vx)]$ centered around this prediction, and show their uniform anytime validity.
The width of these sets is characterized by our central quantity~$\sigma_n(\vx)$ which we define next.
We consider the inner-product kernel $k(\vx,\vxp) \defeq \transpose{\vphi(\vx)} \vphi(\vxp)$ and define for a set of inputs $X = \{\vx_1, \dots, \vx_n\} \subseteq \spD$: \begin{align}
  \sigma_{X}^2(\vx) \defeq k(\vx, \vx) - \transpose{\vk_{X}}(\vx) \inv{(\mKsub{X} + \lambda\kappa\mI_{n})} \vk_{X}(\vx) \label{eq:variance}
\end{align} where $\vk_{X}(\vx) = (k(\vx_1,\vx), \dots, k(\vx_n,\vx)) \in \R^n$, $\mKsub{X} \in \R^{n \times n}$ is the kernel matrix satisfying $(\mKsub{X})_{i,j} = k(\vx_i,\vx_j)$, and $\smash{\kappa \defeq \sup_{\vx \in \spX, \mW \in \spW_B} 1/\lambda_{\min}(\mA(\vx; \mW))}$.
Here, $\mA(\vx; \mW) \in \R^{V \times V}$ is the matrix satisfying ${(\mA(\vx; \mW))_{i,j} \defeq s_i(\vx; \mW)(\Ind{i = j} - s_j(\vx; \mW))}$ which is the proper generalization of the derivative of the sigmoid function, standard in the analysis of binary feedback~\citep{faury2020improved,pasztor2024bandits}. %
We write $\smash{\sigma_n^2(\vx) \defeq \sigma_{X_n}^2(\vx)}$ where $X_n \subseteq \spD \subseteq \spX$ are the inputs in $D_n$.
With this we are ready to state our first result, namely that for careful choice of $\beta_n(\delta)$, the confidence sets contain $\vs^\star(\vx)$ simultaneously for all $\vx \in \spX$ and $n \geq 1$ with probability at least $1 - \delta$.\looseness=-1

\begin{theorem}[Confidence Sets]\label{thm:confidence_sets}
  Let \cref{assumption:linear} hold and $\mW^\star \in \spW_B$.
  Let $\delta \in (0,1)$ and set \begin{align}
    \beta_n(\delta) \defeq 2 \sqrt{V (1 + 2 B)} \brackets*{B + \frac{L V^{3/2} d}{\lambda} \log\parentheses*{\frac{2}{\delta}\sqrt{1 + \frac{n}{d \lambda}}}} \in \BigO{\log(n / \delta)} \label{eq:beta}
  \end{align} where ${L \defeq \sup_{\vx \in \spX, \mW \in \spW_B} \lambda_{\max}(\mA(\vx; \mW))}$.
  Then \begin{align*}
    \Pr{\forall n \geq 1, \vx \in \spX : \dTV*{\vs_n(\vx), \vs^\star(\vx)} \leq \beta_n(\delta) \sigma_n(\vx)} \geq 1 - \delta
  \end{align*} where $\dTV{\vs,\vsp} \defeq \frac{1}{2} \sum_i \abs{s_i - s'_i}$ is the total variation distance.
\end{theorem}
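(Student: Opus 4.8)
The plan is to adapt the self-concordant analysis of generalized linear bandits \citep{faury2020improved,amani2021ucb,pasztor2024bandits,zhang2024online} to the matrix parameter $\mW$ and to the total-variation metric, in three steps. Write $\vtheta \defeq \vecop{\mW} \in \R^{Vd}$, similarly $\vthetahat_n$, $\vtheta^\star$, $\vthetapre$, and $\mPhi(\vx) \defeq \mI_V \otimes \transpose{\vphi(\vx)}$, so that $\vf(\vx;\mW) = \mPhi(\vx)\vtheta$. \textbf{Step 1 (reduce TV to a local-norm logit error).} For fixed $\vx$, the multivariate mean-value theorem applied to the softmax along the segment from $\mWsub{n}$ to $\mW^\star$ — using linearity of $\mW \mapsto \vf(\vx;\mW)$ and the fact that the Jacobian of $\vf \mapsto \vs(\vf)$ at $\vf(\vx;\mW)$ is $\mA(\vx;\mW)$ — gives $\vs^\star(\vx) - \vs_n(\vx) = \widetilde{\mA}(\vx)\,\mPhi(\vx)(\vtheta^\star - \vthetahat_n)$ for the averaged curvature $\widetilde{\mA}(\vx) \defeq \int_0^1 \mA(\vx; (1-t)\mWsub{n} + t\mW^\star)\,dt$, which inherits $\tfrac1\kappa\mI \preceq \widetilde{\mA}(\vx) \preceq L\mI$ by convexity of $\spW_B$. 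Hence, abbreviating $\vDelta(\vx) \defeq \mPhi(\vx)(\vthetahat_n - \vtheta^\star)$ and using $\norm{\vu}_1 \le \sqrt V\,\norm{\vu}_2$ together with $\widetilde{\mA}(\vx)^2 \preceq L\,\widetilde{\mA}(\vx)$,
\begin{align*}
\dTV*{\vs_n(\vx),\vs^\star(\vx)} \;=\; \tfrac12\norm{\widetilde{\mA}(\vx)\vDelta(\vx)}_1 \;\le\; \tfrac{\sqrt V}{2}\norm{\widetilde{\mA}(\vx)\vDelta(\vx)}_2 \;\le\; \tfrac{\sqrt{VL}}{2}\,\norm{\vDelta(\vx)}_{\widetilde{\mA}(\vx)},
\end{align*}
so the theorem reduces to bounding the $\widetilde{\mA}(\vx)$-weighted norm of the logit error.

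\textbf{Step 2 (turn the logit error into $\sigma_n$).} Let $\mH_n \defeq \sum_{t=1}^n \transpose{\mPhi(\vx_t)}\,\overline{\mA}_t\,\mPhi(\vx_t) + \lambda\mI_{Vd}$, where $\overline{\mA}_t$ is the analogous path-averaged curvature at $\vx_t$ between $\vthetahat_n$ and $\vtheta^\star$; then $\mH_n - \lambda\mI_{Vd}$ is exactly the integral of the Hessian of $\loss$ along $[\vthetahat_n,\vtheta^\star]$, and the same Loewner bounds give $\mH_n \succeq \tfrac1\kappa\,\mI_V \otimes (\transpose{\mPhi_n}\mPhi_n + \lambda\kappa\mI_d)$, where $\mPhi_n \in \R^{n\times d}$ stacks the rows $\transpose{\vphi(\vx_t)}$. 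By Cauchy–Schwarz in the $\mH_n$-geometry and these bounds, $\norm{\vDelta(\vx)}_{\widetilde{\mA}(\vx)}^2 \le L\kappa\,\transpose{\vphi(\vx)}(\transpose{\mPhi_n}\mPhi_n + \lambda\kappa\mI_d)^{-1}\vphi(\vx)\cdot\norm{\vthetahat_n - \vtheta^\star}_{\mH_n}^2$, and the standard kernel-ridge identity with regularizer $\mu = \lambda\kappa$ identifies $\transpose{\vphi(\vx)}(\transpose{\mPhi_n}\mPhi_n + \lambda\kappa\mI_d)^{-1}\vphi(\vx) = \sigma_n^2(\vx)/(\lambda\kappa)$, precisely the quantity in \eqref{eq:variance}. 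Combining with Step 1, $\dTV*{\vs_n(\vx),\vs^\star(\vx)} \le \tfrac{L\sqrt V}{2\sqrt\lambda}\,\sigma_n(\vx)\,\norm{\vthetahat_n - \vtheta^\star}_{\mH_n}$; this is already simultaneous over all $\vx \in \spX$, since $\beta_n$ must not depend on $\vx$ and the only $\vx$-dependence lives in $\sigma_n(\vx)$.

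\textbf{Step 3 (bound $\norm{\vthetahat_n - \vtheta^\star}_{\mH_n}$, anytime).} First-order optimality of the constrained regularized MLE $\mWsub{n} = \argmin_{\mW\in\spW_B}\loss^\lambda(\mW;D_n)$ (with $\mW^\star$ feasible), together with $\grad\loss(\vtheta^\star;D_n) - \grad\loss(\vthetahat_n;D_n) = (\mH_n - \lambda\mI_{Vd})(\vtheta^\star - \vthetahat_n)$, yields after the usual sign-chasing $\norm{\vthetahat_n - \vtheta^\star}_{\mH_n} \le \norm{\grad\loss(\vtheta^\star;D_n)}_{\mH_n^{-1}} + \sqrt\lambda\,\normF{\mW^\star - \mWpre} \le \norm{\grad\loss(\vtheta^\star;D_n)}_{\mH_n^{-1}} + \sqrt\lambda B$, using $\mH_n \succeq \lambda\mI_{Vd}$. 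The point is that $\grad\loss(\vtheta^\star;D_n) = \sum_{t=1}^n \transpose{\mPhi(\vx_t)}(\vs^\star(\vx_t) - \ve_{y_t})$ is a martingale: since $y_t \sim \vs^\star(\vx_t)$, $\E{\ve_{y_t}}[\mathcal{F}_{t-1}] = \vs^\star(\vx_t)$ and the conditional covariance of $\ve_{y_t}$ is exactly $\mA(\vx_t;\mW^\star) \preceq L\mI$. An \emph{anytime} self-normalized (method-of-mixtures) concentration bound for vector-valued martingales with categorical increments — the multinomial-logistic analogue of the bound in \citep{amani2021ucb,zhang2024online}, which needs no union bound over $n$ — then controls $\norm{\grad\loss(\vtheta^\star;D_n)}_{\mH_n^{-1}}$ by $\BigO{\sqrt{Vd\log(1 + n/(d\lambda))} + \log(1/\delta)}$ simultaneously over $n \ge 1$ with probability $1-\delta$. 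Plugging this into the previous displays and reorganizing the constants $L, V, d, \lambda, B$ reproduces the stated $\beta_n(\delta)$ of \eqref{eq:beta}, which finishes the proof.

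The hard part will be Step 3: making the self-normalized concentration rigorous for the categorical martingale in the matrix geometry (the $\log\det$ term, hence the $Vd\log(\cdot)$ factor, arises from the method-of-mixtures normalization), and, hand in hand with it, the self-concordance bookkeeping needed to sandwich $\widetilde{\mA}(\vx)$ and each $\overline{\mA}_t$ between $\tfrac1\kappa\mI$ and $L\mI$ uniformly — exactly the curvature obstacle familiar from logistic bandits, sidestepped here only because $\kappa$ and $L$ are assumed finite and uniform over $\spW_B \times \spX$. A secondary technicality is handling the projection onto the ball $\spW_B$ in the optimality step and tracking where the $\sqrt{1 + 2B}$-type factor of \eqref{eq:beta} enters from the self-concordance inequality.
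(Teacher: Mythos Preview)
Your plan is correct and follows the standard self-concordant GLM-bandit analysis; it will reproduce the theorem with the stated constants. The paper, however, takes a much shorter and more modular route: it quotes Corollary~1 of \cite{amani2021ucb} as a black box (this already gives $\norm{\vs_n(\vx)-\vs^\star(\vx)}_2 \le 2L\tilde\beta_n(\delta)\sqrt{\kappa(1+2B)}\,\norm{\vphi(\vx)}_{\mVsub{n}^{-1}}$ uniformly and anytime), then applies two elementary lemmas---$\dTV{\vs,\vsp}\le \tfrac{\sqrt V}{2}\norm{\vs-\vsp}_2$ from Cauchy--Schwarz, and the kernel-ridge identity $\sqrt{\kappa\lambda}\,\norm{\vphi(\vx)}_{\mVsub{n}^{-1}}=\sigma_n(\vx)$---and finally reduces general $\mWpre$ to $\mWpre=\mzero$ by a change of variables. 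Your Steps~1--3 are exactly the \emph{content} of the cited Amani--Thrampoulidis result unpacked from scratch: your Step~1 is their mean-value-plus-self-concordance argument, your Step~2 is the paper's kernel lemma combined with the Loewner sandwich on $\mH_n$, and your Step~3 is the vector self-normalized martingale bound behind their $\tilde\beta_n$. So the approaches coincide in substance; what you gain is a self-contained proof (and a clearer view of where each factor $L,\kappa,\sqrt{1+2B},V^{3/2}d$ enters), what the paper gains is brevity and a clean separation between the borrowed confidence-ellipsoid technology and the two conversions specific to this paper's setup.
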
\vspace{-5pt}

We use $\sigma_n(\vx)$ as a proxy to the \emph{uncertainty about the response to $\vx$} after having fine-tuned on the selected data~$D_n$, since it directly governs the size of the confidence sets around our current estimate of response probabilities.
This uncertainty is a key quantity not just in classification:
In \cref{sec:proofs_regression}, we state analogous confidence sets for regression with the standard squared error loss, building on results by \cite{abbasi2013online} and \cite{chowdhury2017kernelized}.\looseness=-1

\paragraph{The Close Relationship of Regularized Loss Minimization and Test-Time Fine-Tuning}

Recall that test-time fine-tuning does not solve the regularized objective of~\cref{eq:reg_nll_loss}, but instead takes a single gradient step.
So why do we expect the surrogate model $\vf(\cdot; \mWsub{n})$ be closely related to the fine-tuned~$\vfpre$?
To answer this question, we contrast two alternative models: \begin{itemize}
  \item $\mW_{\!\lambda} \defeq \argmin_{\mW} \loss^\lambda(\mW)$, \hfill \emph{(minimizer of regularized loss)}
  \item $\mWhat_{\!\eta} \defeq \mWpre - \eta \grad \loss(\mWpre)$ with any step size $\eta > 0$, \hfill \emph{(single gradient-step fine-tuning)}
\end{itemize} where we keep the dataset $D$ fixed and omit the dependency on $D$.
Our following proposition shows that both models are close if the loss landscape is relatively smooth and for careful choice of $\lambda \approx \frac{1}{\eta}$.\looseness=-1

\begin{proposition}\label{prop:reg_loss_min_vs_ttft}
  It holds that $\normF*{\mWsub{1/\eta} - \mWhat_{\!\eta}\,} \leq \eta \, \normF*{\grad \loss(\mWsub{1/\eta}) - \grad \loss(\mWpre)}$.
\end{proposition}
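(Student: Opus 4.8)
The plan is to exploit the fact that, when $\lambda = 1/\eta$, the stationarity condition characterizing the regularized minimizer $\mWsub{1/\eta}$ is itself an \emph{implicit} (proximal) gradient step from $\mWpre$, and then to compare this term-by-term with the \emph{explicit} gradient step $\mWhat_{\!\eta}$.

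First I would observe that $\loss^\lambda(\cdot)$ is everywhere differentiable --- the softmax cross-entropy composed with the linear map $\mW \mapsto \mW\vphi(\vx)$ is smooth --- and, thanks to the quadratic regularizer, strictly convex and coercive, so its (unconstrained) minimizer $\mWsub{1/\eta}$ exists, is unique, and is the point at which the gradient vanishes:
\begin{align*}
  \mzero = \grad \loss^{1/\eta}(\mWsub{1/\eta}) = \grad \loss(\mWsub{1/\eta}) + \frac{1}{\eta}\parentheses*{\mWsub{1/\eta} - \mWpre}.
\end{align*}
Rearranging yields $\mWsub{1/\eta} = \mWpre - \eta\, \grad \loss(\mWsub{1/\eta})$; that is, $\mWsub{1/\eta}$ is obtained from $\mWpre$ by a gradient step whose gradient is evaluated at the \emph{new} point rather than at $\mWpre$.

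Next I would subtract the definition $\mWhat_{\!\eta} = \mWpre - \eta\, \grad \loss(\mWpre)$ from the identity above; the $\mWpre$ terms cancel and leave
\begin{align*}
  \mWsub{1/\eta} - \mWhat_{\!\eta} = \eta\parentheses*{\grad \loss(\mWpre) - \grad \loss(\mWsub{1/\eta})}.
\end{align*}
Taking Frobenius norms on both sides and using that $\normF{\cdot}$ is unchanged under negation gives $\normF{\mWsub{1/\eta} - \mWhat_{\!\eta}} = \eta\, \normF{\grad \loss(\mWsub{1/\eta}) - \grad \loss(\mWpre)}$, which is the claimed inequality (in fact with equality).

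There is essentially no serious obstacle here; the only point requiring a little care is justifying that the $\argmin$ defining $\mW_\lambda$ is attained and characterized by first-order stationarity, which follows from the smoothness, strict convexity, and coercivity of $\loss^\lambda$. (If instead one minimized the regularized loss over the constrained set $\spW_B$, the equality would weaken to an inequality, since the Euclidean projection onto the convex set $\spW_B$ is non-expansive; the stated inequality subsumes this case.)
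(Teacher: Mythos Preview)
Your proof is correct and follows exactly the same route as the paper: use the first-order optimality condition for $\mWsub{1/\eta}$ to write it as an implicit gradient step from $\mWpre$, subtract the explicit step $\mWhat_{\!\eta}$, and take Frobenius norms to obtain the bound (in fact with equality). Your additional remarks on existence/uniqueness and on the constrained variant are sound but go slightly beyond what the paper spells out.
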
\vspace{-5pt}

Recent works have also observed $\mWsub{1/\eta} \approx \mWhat_{\!\eta}$ empirically~\citep{ali2019continuous,ali2020implicit}.
Intuitively, with a larger step size, $\smash{\mWhat_{\!\eta}}$ is farther away from $\mWpre$, and hence corresponds to the regularized estimate with less regularization.
This connection between regularized loss minimization and test-time fine-tuning is closely linked to the tight connection between regularization and early stopping~\citep{morgan1989generalization,yao2007early,li2020gradient}.
We will use this connection in the following to derive \method in the context of fine-tuning.\looseness=-1

\section{\method: Efficiently Reducing Uncertainty about the Response}\label{sec:method}

We introduce \method, an algorithm for selecting data for fine-tuning that effectively reduces the uncertainty about the response to the prompt~$\prompt \in \spX$.
Note that we can compute the uncertainty $\sigma_X(\prompt)$ about the response to the prompt $\prompt$ for any selected data ${X \subseteq \spD}$ in closed-form, since its definition (cf.~\cref{eq:variance}) depends only on the selected inputs~$X$. %
\method minimizes this uncertainty about $\prompt$:\looseness=-1 \begin{align*}
  \vx_{n+1} &\defeq \argmin_{\vx \in \spD} \sigma_{X_n \cup \{\vx\}}^2(\prompt) = \argmax_{\vx \in \spD} \transpose{\vk_{X_n \cup \{\vx\}}}(\prompt) \inv{(\mKsub{X_n \cup \{\vx\}} + \lambda' \mI_{n+1})} \vk_{X_n \cup \{\vx\}}(\prompt). \tag{\methodl}
\end{align*}
\method selects data that minimizes a bound on the approximation error of the surrogate model, and then fine-tunes the full LLM using this data.
We discuss the design choices, including the choice of embeddings, that make \method efficient in \sref{sec:compute_efficiency}.
In \sref{sec:method_details:balancing_relevance_diversity}, we illustrate with an example of how \method balances relevance and diversity, where we also see that the free parameter $\lambda' = \lambda \kappa$ controls this trade-off.
Larger $\lambda'$ emphasize relevance of selected data, while smaller $\lambda'$ emphasize diversity.
Probabilistically, \method can be interpreted as maximizing the information gain of the selected data~$X_n$ on the response to the prompt $\prompt$ in a tractable model.
We formally introduce this interpretation of \method in \sref{sec:method_maximizes_info_gain}.\looseness=-1

\subsection{Uncertainty Provably Vanishes}\label{sec:convergence}

We prove that unlike with Nearest Neighbor retrieval, the uncertainty about the response to the prompt vanishes if \method is used to select data for fine-tuning.
We give an informal overview here, and defer the formal treatment to \sref{sec:method_details:uncertainty_vanishes}.
Our theoretical analysis shows that test-time fine-tuning can fully reduce uncertainty only if the data space contains sufficient information to determine the correct response.
If the data space does not contain all relevant information, the remaining uncertainty is quantified by the limiting uncertainty after seeing ``all data in the data space infinitely often'', which we call the \emph{irreducible uncertainty} and denote by $\irred{\prompt}$.
We provide the formal definition in \sref{sec:method_details:uncertainty_vanishes}, but intuitively, the irreducible uncertainty is the largest quantity satisfying $\sigma_X(\prompt) \geq \irred{\prompt}$ for all $X \subseteq \spD$.
We then specialize the result of \cite{hubotter2024transductive} to show that the uncertainty about the response to the prompt shrinks at the rate~$\smash{\BigOTil{1/\sqrt{n}}}$ until it reaches the irreducible uncertainty:
\begin{informaltheorem}[Convergence Guarantee]\label{informal_thm:convergence}
  Fix any $\lambda' > 0$ and let \methodl select $X_n$ from the data space $\spD$.
  Then for all $n \geq 1$ and $\prompt \in \spX$, \begin{align*}
    \sigma_n^2(\prompt) - \irred*{\prompt} \leq \frac{\BigO{\lambda' \log n}}{\sqrt{n}}.
  \end{align*}
\end{informaltheorem}
Naturally, convergence is slower with a larger regularization parameter / smaller step size.
Notably, the irreducible uncertainty depends on the data space.
With a large and diverse data space, the irreducible uncertainty is typically negligible.
This statistical guarantee is a key property of \method.
As we show in \cref{prop:insufficiency_nn}, Nearest Neighbor retrieval fails to satisfy a guarantee of this kind.\looseness=-1

\subsection{Compute-Efficient Data Selection}\label{sec:compute_efficiency}

We have established how to select informative data for fine-tuning.
Next to good statistical efficiency, good computational efficiency is key for selecting data at test-time.
In the following, we describe design choices such that \method has negligible overhead compared to Nearest Neighbor retrieval.\looseness=-1

\paragraph{Sequence-Level Selection}
In the self-supervised paradigm, each sequence of tokens $\vx \in \spD$ corresponds to a dataset of next-token predictions $x_{1:k} \mapsto x_{k+1}$.
Rather than selecting individual next-token predictions from the data space of all sub-sequences $x_{1:k}$, we select full sequences $\vx$ from the significantly smaller data space $\spD$, then fine-tune for a single gradient step on each sub-sequence within $\vx$.
This is a common practice in prior works that use Nearest Neighbor retrieval for data selection~\citep[e.g.,][]{xia2024less,hardt2023test}.\looseness=-1

\paragraph{Surrogate Sequence Embedders}
We use a surrogate sequence embedding model to generate embeddings of the data space and prompts.
We use the same embedding model as \cite{hardt2023test} which is a large Roberta model~\citep{liu2019roberta} with 355M parameters that was fine-tuned for one pass on the Pile training set.
The embedding dimension is $1024$.
Unlike \cite{hardt2023test}, we additionally normalize the embeddings to unit length, the reasons for which we discuss in \sref{sec:analysis_active_fine_tuning}.\looseness=-1

We obtain decent performance with this surrogate model.
Nevertheless, our theoretical results indicate that using embeddings extracted from the LLM to be fine-tuned could further improve the performance of \method.
Empirical neural tangent embeddings~\citep{wei2022more,holzmuller2023framework} and influence function embeddings~\citep{xia2024less} can be implemented efficiently and offer alternative latent spaces capturing the pre-trained model.
We hypothesize that the decent performance of the surrogate model is explained by the similarity of emergent latent spaces of language models that were trained on similar data.\looseness=-1

\begin{wrapfigure}{r}{0.4\textwidth}
  \vspace{-25pt}
  \incplt[0.4\textwidth]{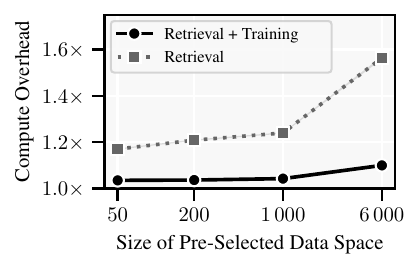}
  \vspace{-15pt}
  \caption{The (multiplicative) computational overhead of \method compared to Nearest Neighbor retrieval is minimal. The compute overhead with a 1k data space is less than $1.05\times$.}
  \label{fig:k_ablation}
  \vspace{-10pt}
\end{wrapfigure}
\paragraph{Efficient Implementation of \method}
In our experiments, we pre-select~$200$~candidates via Nearest Neighbor retrieval with Faiss~\citep{johnson2019billion} and then apply \method to select $50$~sequences from this smaller data space.
On the Pile dataset, we find that performance can be increased further by pre-selecting more candidates~(cf.~\cref{fig:k_ablation_perf} in \sref{sec:efficient_computation_via_conditional_embeddings}) but the marginal gains diminish.
The precise performance benefit of pre-selecting more candidates may differ on other datasets.
We describe in \sref{sec:efficient_computation_via_conditional_embeddings} how \method can be solved iteratively without computing the inverse in every iteration.
When a matrix of the size of the pre-selected data space fits in GPU memory, we find that \method has a negligible computational overhead compared to Nearest Neighbor retrieval.
We report results with an NVIDIA RTX 4090 GPU in \cref{fig:k_ablation}.\footnote{We use the client-server architecture described by \cite{hardt2023test} with CPU-only servers.}
While our main implementation of \method is fast if the data space is small, it does not scale linearly with the size of the data space~$K$.
In \sref{sec:efficient_computation_via_conditional_embeddings}, we show that a priority queue can be used to achieve an almost-linear runtime in $K$.\looseness=-1

\vspace{-3pt}
\section{Results}\label{sec:results}
\vspace{-3pt}

We focus on language modeling with causal language models.
Following \cite{hardt2023test}, we fine-tune a pre-trained LLM for a single gradient step each on $N = 50$ selected data points in the order that they are selected, most to least relevant.
We use the Pile dataset~\citep{gao2020pile} for evaluation, restricting our use to data which is obtained and used in compliance with the terms of service of the data host.
This version of the Pile contains a diverse set of 17 high-quality sub-datasets, ranging from Q\&A to code, scientific publications, math, and more.
Concretely, we use the Pile training set containing 210M sequences of total size~1.3TB as data space for data selection, and we evaluate on the Pile test~set.\footnote{We evaluate on 1\% of the test set (0.1\% with Phi-3), corresponding to $1`812$ sequences.}
We report the \emph{bits per byte} metric as recommended by~\cite{gao2020pile}, which is proportional to the negative log-likelihood loss normalized by a dataset-specific constant.
Error bars correspond to 90\% confidence intervals computed via bootstrapping with $1`000$ samples.\looseness=-1

\paragraph{Base Models and Baselines}
We evaluate the GPT-2 model~\citep{radford2019language} with 124M parameters also evaluated by \cite{hardt2023test}, with the default learning rate of the \texttt{transformers} library~\citep{wolf2020huggingface}.
We obtain analogous results with GPT-2-large~(774M~parameters) and the state-of-the-art Phi-3~\citep[3.8B,][]{abdin2024phi}.\footnote{We detail hyperparameter choices for larger models in \sref{sec:experiment_details}.}
With Phi-3, we use low-rank adaptation~\citep[LoRA,][]{hu2021lora}, fine-tuning slightly less than 1\% of the model's total parameters. %
We compare \method with $\lambda' = 0.01$ to Nearest Neighbor retrieval~(NN) and the failure mode of Nearest Neighbor retrieval that repeatedly selects the closest neighbor.
The failure mode of Nearest Neighbor retrieval~(NN-F) corresponds to an extreme case of redundancy in the data space which we suspect to be a realistic scenario in larger or less curated datasets.
Finally, we compare to Uncertainty Sampling~(US), which is a widely used active learning strategy~\citep{lewis1995sequential,settles2009active} that selects the data with the highest uncertainty in the model's response by selecting according to $\smash{\vx_{n+1} = \argmax_{\vx\in\spD} \sigma_n^2(\vx)}$.
We compare to the heuristic that uses US to choose from the $200$ nearest neighbors, in which case US can be understood as finding a diverse cover of this pre-selected data space~\citep[see, e.g.,][]{holzmuller2023framework,kirsch2019batchbald}.
In contrast, \method \emph{minimizes} the uncertainty in the model's response to the prompt~$\prompt$, leading to a ``denser'' cover close to~$\prompt$ and a ``coarser'' cover further away from~$\prompt$.\looseness=-1

\begin{figure}[]
  \vspace{-10pt}
  \begin{minipage}[b]{0.65\textwidth}
    \centering
    \incplt[\textwidth]{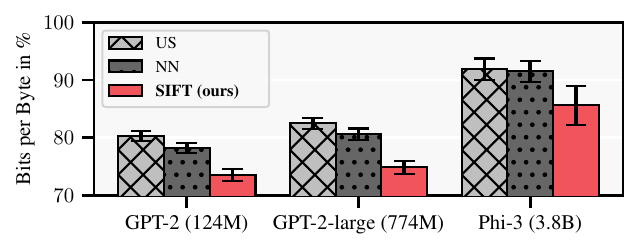}
    \vspace{5pt}
  \end{minipage}
  \hfill
  \begin{minipage}[b]{0.333\textwidth}
    \centering
    \incplt[\textwidth]{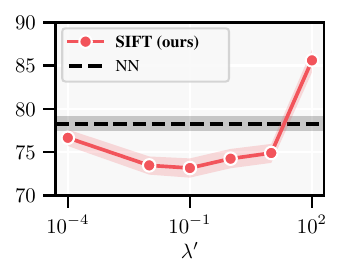}
  \end{minipage}
  \vspace{-15pt}
  \caption{Bits per byte (in \% relative to the base model, $\downarrow$ better) after $50$ test-time iterations.
  \textbf{Left:}~Performance gains of \method are consistent across models. The failure-mode of Nearest Neighbor consistently performs worse than the base model.
  \Cref{table:gpt2large_aft,table:phi3_aft} in \sref{sec:full_results} detail our results with GPT-2-large and Phi-3 analogously to \cref{table:main_results_per_dataset}.
  \textbf{Right:}~Most choices of $\lambda'$ lead to comparable performance. %
  With $\lambda' \to \infty$, \methodl repeatedly selects the nearest neighbor.\looseness=-1
  }
  \label{fig:main_results}
  \vspace{-15pt}
\end{figure}

\begin{wraptable}{r}{0.6\textwidth}
  \vspace{-0.35cm}
  \centering
  \small
  \begin{tabularx}{0.6\textwidth}{@{\hspace{0mm}}S@{\hspace{2mm}}c@{\hspace{2mm}}c@{\hspace{2mm}}c@{\hspace{2mm}}c@{\hspace{2mm}}r}
    \toprule
    & \textbf{US} & \textbf{NN} & \textbf{\textsc{NN-F}} & \textbf{\method} & $\Delta$ \\[1pt]
    \hline \\[-6pt]
    NIH Grants & 93.1\reso{1.1} & 84.9\reso{2.1} & 91.6\reso{16.7} & \textbf{53.8}\reso{8.9} & \downres{31.1} \\[2pt]
    US Patents & 85.6\reso{1.5} & 80.3\reso{1.9} & 108.8\reso{6.6} & \textbf{62.9}\reso{3.5} & \downres{17.4} \\[2pt]
    GitHub & 45.6\reso{2.2} & 42.1\reso{2.0} & 53.2\reso{4.0} & \textbf{30.0}\reso{2.2} & \downres{12.1} \\[2pt]
    Enron Emails & \textbf{68.6}\reso{9.8} & \textbf{64.4}\reso{10.1} & 91.6\reso{20.6} & \textbf{53.1}\reso{11.4} & \downres{11.3} \\[2pt]
    Wikipedia & 67.5\reso{1.9} & \textbf{66.3}\reso{2.0} & 121.2\reso{3.5} & \textbf{62.7}\reso{2.1} & \downres{3.6} \\[2pt]
    Common Crawl & 92.6\reso{0.4} & 90.4\reso{0.5} & 148.8\reso{1.5} & \textbf{87.5}\reso{0.7} & \downres{2.9} \\[2pt]
    PubMed Abstr. & 88.9\reso{0.3} & 87.2\reso{0.4} & 162.6\reso{1.3} & \textbf{84.4}\reso{0.6} & \downres{2.8} \\[2pt]
    ArXiv & 85.4\reso{1.2} & \textbf{85.0}\reso{1.6} & 166.8\reso{6.4} & \textbf{82.5}\reso{1.4} & \downres{2.5} \\[2pt]
    PubMed Central & \textbf{81.7}\reso{2.6} & \textbf{81.7}\reso{2.6} & 155.6\reso{5.1} & \textbf{79.5}\reso{2.6} & \downres{2.2} \\[2pt]
    Stack Exchange & 78.6\reso{0.7} & 78.2\reso{0.7} & 141.9\reso{1.5} & \textbf{76.7}\reso{0.7} & \downres{1.5} \\[2pt]
    Hacker News & \textbf{80.4}\reso{2.5} & \textbf{79.2}\reso{2.8} & 133.1\reso{6.3} & \textbf{78.4}\reso{2.8} & \downres{0.8} \\[2pt]
    FreeLaw & \textbf{63.9}\reso{4.1} & \textbf{64.1}\reso{4.0} & 122.4\reso{7.1} & \textbf{64.0}\reso{4.1} & \upres{0.1} \\[2pt]
    DeepMind Math & \textbf{69.4}\reso{2.1} & \textbf{69.6}\reso{2.1} & 121.8\reso{3.1} & \textbf{69.7}\reso{2.1} & \upres{0.3} \\[2pt]
    \hline \\[-8pt]
    \emph{All} & 80.2\reso{0.5} & 78.3\reso{0.5} & 133.3\reso{1.2} & \textbf{73.5}\reso{0.6} & \downres{4.8} \\
    \bottomrule
  \end{tabularx}
  \caption{Bits per byte (in \% relative to the base model, $\downarrow$) after $50$ test-time iterations on individual datasets of the Pile. We only include datasets with at least $10$ examples in our test set. \textbf{Bold} numbers denote the best performing selected subset. Numbers in parentheses are standard errors. $\Delta$ denotes the performance gain of \method over the strongest baseline.}
  \label{table:main_results_per_dataset}
  \vspace{-10pt}
\end{wraptable}

\paragraph{\insight \method consistently selects better data for fine-tuning than Nearest Neighbor retrieval.\hfill}
We show in \cref{fig:main} that \method outperforms NN and avoids its failure mode where the same data point is selected repeatedly.
In \figref{fig:main_results}{left}, we show that the performance gains of \method are consistent across models.
\Cref{table:main_results_per_dataset} compares the performance of \method against NN across all datasets of the Pile, using GPT-2 as base model.
Overall, we find that \method improves performance both on datasets where NN already performs well, such as GitHub, and on datasets where NN performs poorly, such as NIH Grants.
On all datasets of the Pile, \method performs at least as well as the strongest baseline (within margin of error), suggesting that it is a robust method for data selection.
We observe the trend that relative performance gains of \method over Nearest Neighbor retrieval \emph{increase with model capability}.
That is, with stronger base models, informativeness of selected data appears to become more important.\looseness=-1

\paragraph{\insight \method is robust to the choice of $\lambda'$.}
We evaluate \method with varying choices of $\lambda'$, and summarize the results in \figref{fig:main_results}{right}.
We include extended results in \cref{table:lambda_results} of \sref{sec:additional_results}, showing that for all evaluated $\lambda'$ between $1\mathrm{e-}8$ and $10$, \method performs at least on-par with Nearest Neighbor retrieval on \emph{all} datasets of the Pile, often outperforming it.
This suggests that \method is robust to the choice of $\lambda'$.
Nevertheless, there may be an advantage to adaptively tuning $\lambda'$ (e.g., via cross-validation).
In particular, choosing the best $\lambda'$ for each dataset, \method outperforms all baselines on every dataset.\looseness=-1

\paragraph{\insight \method selects data the ``right'' number of times.}
Nearest Neighbor retrieval implicitly relies on non-redundancy within the data space to not select duplicate information, as illustrated in the example of \cref{fig:qualitative_example}.
This is almost never the case in practice, and in the extreme case of duplicate data, Nearest Neighbor selects the same data point repeatedly.
\method does not rely on excluding previously selected data points.
Instead, \method may select the same data point any number of times, adaptively taking more than one gradient step on it, if beneficial.
To ensure that the selected data is maximally informative, \method takes into account the redundancy of data points explicitly.
This makes \method robust to information duplication by design.%

\begin{wrapfigure}{r}{0.4\textwidth}
  \vspace{-35pt}
  \incplt[0.4\textwidth]{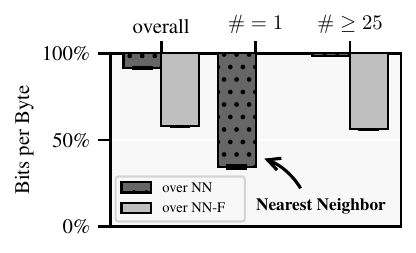}
  \vspace{-15pt}
  \caption{Bits per byte (in \% relative to NN / NN-F, $\downarrow$ better) after $50$ test-time iterations. Error bars correspond to standard errors. The left bars measure the performance gain over all of the Pile. The middle and right bars measure the performance gain for all prompts where \method selects $\#$ unique points.}
  \label{fig:advantage_over_nn}
  \vspace{-15pt}
\end{wrapfigure}
We illustrate this in \cref{fig:advantage_over_nn} where we evaluate the performance gain of \method over Nearest Neighbor and its failure mode.
As expected, we find that on all test prompts where \method selects many unique points, \method outperforms repeatedly selecting the closest neighbor by a large margin.
Interestingly, we also find that on all test prompts where \method selects only a single point, \method outperforms Nearest Neighbor by a large margin.
This suggests that in some cases repeatedly taking gradient steps on the closest neighbor is beneficial, and \method identifies these cases.\looseness=-1

\paragraph{\insight Test-time fine-tuning can significantly improve language modeling ability.}
Our results from \cref{fig:absolute_perf_larger_models} indicate that test-time fine-tuning improves the performance of the base LLM substantially, surprisingly, even with a state-of-the-art model such as Phi-3.
Our Phi-3 with test-time fine-tuning and \method achieves $0.595$~bits per byte, outperforming the previous leader in the \href{https://paperswithcode.com/sota/language-modelling-on-the-pile}{Pile language modeling benchmark}, a $30\times$~larger model.\footnote{We compare to prior work in the Pile language modeling benchmark in \cref{table:pile_benchmark} of \sref{sec:pile_benchmark}.}
We also evaluate the recent Llama-3.2 family of models~\citep{dubey2024llama}, and with Llama-3.2~(3B) as base model we achieve~$0.557$ bits per byte, a significant improvement upon the previous state-of-the-art.
We compare test-time fine-tuning to the common in-context learning, where we include as much of the data as possible into the context window of the test instance, in addition to its original context, by concatenating text in order of selection.
While in-context learning tends to improve the performance of the base model, we find that fine-tuning at test-time tends to outperform or perform on-par with in-context learning. Furthermore, the compute cost of in-context learning grows quadratically with the context window size, meaning that including long texts within large context windows is expensive.
Remarkably, test-time fine-tuning consistently outperforms  in-context learning by more than $25\%$ on math and coding, tasks that require more complex reasoning~(\sref{sec:full_results}).\looseness=-1

\begin{figure}[]
  \vspace{-10pt}
  \centering
  \incplt[\textwidth]{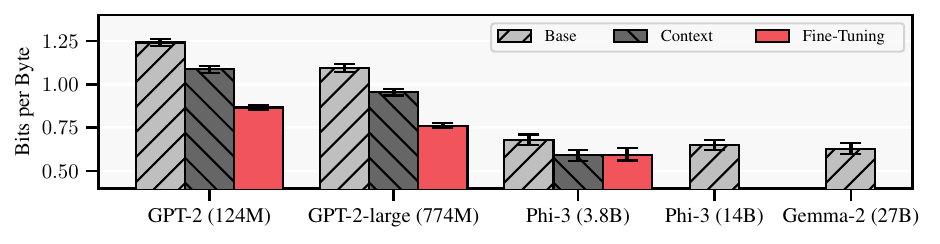}
  \vspace{-15pt}
  \caption{%
  Bits per byte ($\downarrow$ better), comparing fine-tuning and in-context learning with $50$~test-time examples selected by \method.
  We find that fine-tuning systematically outperforms or performs on-par with in-context learning, even when fine-tuning only a LoRA adapter as with Phi-3.
  Test-time fine-tuning with Phi-3 (3.8B) surpasses the performance of the more than $3\times$~larger Phi-3~(14B) and the $7\times$~larger Gemma-2~(27B).
  \looseness=-1}
  \label{fig:absolute_perf_larger_models}
  \vspace{-10pt}
\end{figure}

\paragraph{Further Insights}
In \sref{sec:analysis_active_fine_tuning}, we discuss additional findings on active fine-tuning such as that the performance gains of \method over Nearest Neighbor retrieval \emph{grow with dataset size}, and that normalizing embeddings is important for the effectiveness of data selection.
In \sref{sec:analysis_test_time_fine_tuning}, we discuss additional findings on test-time fine-tuning, for example, the trend that \emph{larger models learn faster at test-time}.\looseness=-1

\section{Compute-Proportional Test-Time Fine-Tuning}\label{sec:compute_proportional}

We have shown that test-time fine-tuning can improve language modeling ability and that \method is a robust method for data selection, outperforming Nearest Neighbor retrieval.
However, a key shortcoming of previous approaches to test-time fine-tuning is that they spend a fixed amount of test-time compute, regardless of the nature of the prompt, the available data, or the model.
This is not computationally scalable in many practical applications, since a fixed test-time compute budget leads to non-proportionate performance gains.
For example, for the prompt ``Hello'' to a chatbot we would not like to spend any test-time compute, while for a more complex prompt we would like to spend more compute.
In this section, we evaluate whether uncertainty estimates can be used to adaptively stop test-time fine-tuning such that the realized performance gain is proportional to the compute used.\looseness=-1

\begin{figure}
  \vspace{-5pt}
  \incplt[\textwidth]{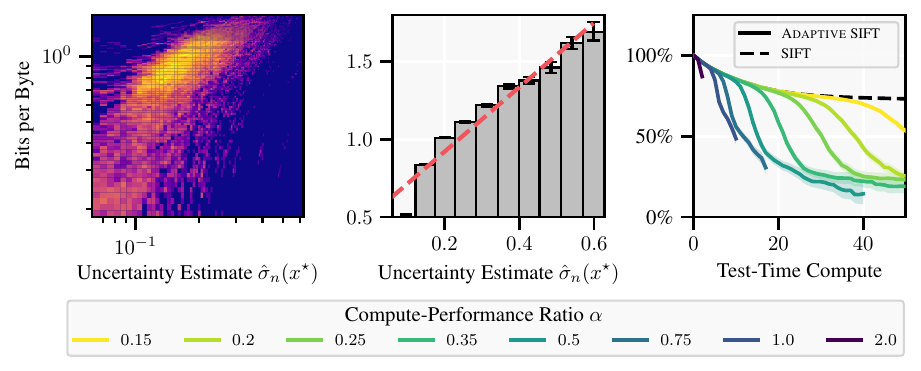}
  \vspace{-15pt}
  \caption{\textbf{Left:}~We visualize the empirical density of the uncertainty estimates $\hat{\sigma}_n$ wrt.\ the bits per byte $\mathrm{bpb}_n$.
  Brighter colors indicate higher density on a logarithmic scale.
  We observe a strong linear relationship between uncertainty estimates and bits per byte.
  \textbf{Middle:}~We construct a ``reliability diagram'' of  uncertainty estimates.
  Notably, since we evaluate with respect to bits per byte rather than an accuracy, canonical calibration plots are not applicable.
  In particular, it is well known that bits per byte do not go to zero for perfect models due to irreducible \emph{aleatoric} uncertainty, which is not captured by our \emph{epistemic} uncertainty estimates.
  Nevertheless, we observe that our epistemic uncertainty estimates are predictive of the model's performance.
  The red line indicates a linear fit.
  \textbf{Right:}~We visualize the bits per byte (in \% relative to the base model, $\downarrow$ better) of all prompts whose model is fine-tuned at a given iteration.
  We find that by adaptively stopping with respect to the known uncertainties $\sigma_n$, we can spend test-time compute proportional to realized performance gains~(see also~\cref{fig:early_stopping_details} in~\sref{sec:additional_results}).
  \emph{Remarks:}~Results are with GPT-2.
  In the left and middle plots, we remove the lowest and highest 0.25\% of uncertainty estimates (i.e., the outliers) for better visualization.
  In the left plot, we additionally remove the lowest and highest 0.25\% of bits per byte.
  }
  \label{fig:uncertainty_estimation}
\end{figure}

\paragraph{\insight The response uncertainty can \emph{predict} performance gain.}
We find that $\sigma_n(\prompt)$ is monotonically and linearly correlated at coefficient~$\approx 0.4$ with the model error after $n$ test-time iterations, i.e., the bits per byte~$\mathrm{bpb}_n(\prompt)$.
This is remarkable because $\sigma_n$ contains information only from the surrogate embedding model, and is normalized such that $\sigma_0(\prompt) = 1$.
To determine the importance of the base model, we also evaluate the denormalized uncertainty estimate $\hat{\sigma}_n(\prompt) \defeq \sigma_n(\prompt) \cdot \mathrm{bpb}_0(\prompt)$, which unlike $\sigma_n$ cannot be evaluated at test-time.
We multiply $\sigma_n$ by $\mathrm{bpb}_0$ to ensure that the uncertainty measure is in the same units as the performance metric, correcting for the use of normalized surrogate embeddings.
We find that $\hat{\sigma}_n(\prompt)$ is strongly correlated at coefficient~$\gtrapprox 0.5$ with the bits per byte. %
We summarize correlations in \cref{table:uncertainty_estimation} of \sref{sec:additional_results} and visualize the predictive capability of~$\hat{\sigma}_n$ in \figref{fig:uncertainty_estimation}{left} and \figref{fig:uncertainty_estimation}{middle}.
Our findings indicate that approximations of the base model's uncertainty, before test-time fine-tuning, can be beneficial.
In future work, we intend to determine whether generating embeddings from the base model can provide such scale-correction.\looseness=-1

Recall that \method minimizes the response uncertainty $\sigma_n$ to the given prompt.
The predictive ability of uncertainty estimates provides an intuitive explanation for the effectiveness of \method.\looseness=-1

\paragraph{\emph{Compute-Proportional Performance Gains:} Early stopping at the ``right'' time.}
Motivated by the predictive power of uncertainty estimates, we evaluate whether they can be used to \emph{adaptively stop} test-time fine-tuning such that the realized performance gain is proportional to the compute used.
In the following, we propose a such a stopping criterion for \method.
Using the approximation of the error via uncertainty estimates discussed above and that $\sigma_0(\prompt) = 1$: \begin{align}
  \text{performance gain} = \frac{\mathrm{bpb}_0(\prompt)}{\mathrm{bpb}_n(\prompt)} \approx \frac{\sigma_0(\prompt)}{\sigma_n(\prompt)} = \frac{1}{\sigma_n(\prompt)}.
\end{align}
We would like to stop fine-tuning when further test-time compute does not yield proportional performance gain, i.e., when ``${\text{performance gain} < \alpha \cdot n}$'' with $n$ approximating the compute of $n$ iterations and $\alpha$ a constant comparing the units of compute and performance.
Plugging in our above approximation of the performance gain, we propose to stop test-time fine-tuning \emph{before} iteration $n$ if \begin{align}
  \sigma_n(\prompt) > (\alpha n)^{-1}. \tag{\adamethod}
\end{align}
Intuitively, this stops fine-tuning the LLM when its progress in crafting a better response stalls.
For complex prompts that benefit from fine-tuning, \adamethod spends more test-time compute, whereas for prompts where the model is already strong or where the data space is not informative, \adamethod spends less test-time compute.
\figref{fig:uncertainty_estimation}{right} shows that the performance gains of this approach are proportional to the compute used.

\paragraph{Towards Scaling Laws of Test-Time Fine-Tuning}
Interestingly, our results bear resemblance to scaling laws of LLM pre-training~\citep{kaplan2020scaling,henighan2020scaling,hoffmann2022training}.
These scaling laws express the performance of a model as a function of the compute used for pre-training (e.g., the number of parameters or training tokens).
Such scaling laws are crucial for determining how to optimally spend a fixed amount of compute.
Recently, scaling laws for ``test-time inference'' have gained attention, where test-time compute is usually spent on search~(e.g., beam search) with a variable number of forward passes of a few-shot prompted base LLM~\citep{brown2024large,snell2024scaling}.
Our results suggest that similar scaling laws exist for test-time fine-tuning, expressing the performance of a model as a function of the compute used for fine-tuning at test-time.
Such scaling laws can be an important tool to determine how to spend test-time compute.
There are many open questions in this direction, which we do not address in this work.
For example, how does model size affect the scaling laws of test-time fine-tuning? Or, can a model be fine-tuned at test-time to build reasoning chains?
Based on our results and previous evaluations of fine-tuning and in-context learning~\citep[e.g.,][]{hu2021lora,mosbach2023few,hardt2023test}, we conjecture that test-time fine-tuning may lead to a more efficient use of compute than repeatedly prompting a base LLM.
We believe that these open questions are exciting directions for future work.\looseness=-1

\vspace{-3pt}
\section{Discussion and Future Work}\label{sec:conclusion}
\vspace{-3pt}

We propose a data selection algorithm, \method, unifying ideas from retrieval and active learning.
\method estimates the uncertainty about the response to a given prompt after having been fine-tuned on some data~(\sref{sec:preliminaries}), and then selects the data that minimizes this uncertainty~(\sref{sec:method}).
This addresses the limitations of Nearest Neighbor retrieval~(\sref{sec:problem_setting}).
\method can be seen as a generalization of Nearest Neighbor retrieval from a search method to a learning method, which ensures explicitly that the retrieved data is maximally informative.
We show on the Pile dataset that \method consistently outperforms Nearest Neighbor retrieval in prompt-specific fine-tuning at test-time and that this kind of local learning can be more effective than locally learning from examples in-context~(\sref{sec:results}).
Finally, we observe that our uncertainty estimates can predict the performance gain of test-time fine-tuning, and use this to develop an adaptive algorithm which achieves compute-proportional performance gains~(\sref{sec:compute_proportional}).\looseness=-1

Test-time fine-tuning addresses a fundamental limitation of in-context learning, namely that in-context learning is typically limited to a fixed and finite context window.
In contrast, test-time fine-tuning allows the LLM to dynamically and effectively access a potentially unbounded non-parametric memory.
By improving the effectiveness of test-time fine-tuning, this work opens up several exciting directions for future research.
Test-time fine-tuning may be used to ground the model on a trusted dataset, mitigate biases against under-represented groups in the training data, or to dynamically include private data depending on user privileges.
Particularly interesting would be a broad evaluation on non-perplexity tasks such as code generation or in the life sciences with large-scale medical or protein data.
Unlike few-shot in-context learning which is limited in scope to autoregressive models, test-time fine-tuning and \method may be extended to other model classes such as diffusion models.
Furthermore, \method may be used effectively in other settings that require automatic data selection, such as targeted instruction tuning during post-training of LLMs.
Finally, our results suggest scaling laws for test-time fine-tuning and we outline several exciting open questions~(\sref{sec:compute_proportional}).\looseness=-1

\clearpage

\section*{Contributions}

JH conceived and led the project, being involved in all its components and leading the theory, implementation of the \method algorithm, design of experiments, and writing.
SB set up and ran the first experiments validating the approach, and contributed to running the final ablation studies.
IH ran additional experiments, especially those with larger models, and optimized the code.
AK advised.\looseness=-1

\section*{Acknowledgements}

We would like to thank Armin Lederer, Vignesh Ram Somnath, Bhavya Sukhija, Scott Sussex, and Lenart Treven for feedback on early versions of the paper.
This project was supported in part by the European Research Council~(ERC) under the European Union's Horizon 2020 research and Innovation Program Grant agreement no.~815943, and the Swiss National Science Foundation under NCCR Automation, grant agreement~51NF40~180545.
Ido Hakimi was supported by an ETH AI Center Postdoctoral fellowship.\looseness=-1

\bibliography{sources}
\bibliographystyle{sources}

\clearpage\appendix
\section*{\LARGE Appendices}

\section*{Contents}
\startcontents
\printcontents{}{0}[2]{}

\section{Comparison to the State-of-the-Art on the Pile Language Modeling Benchmark}\label{sec:pile_benchmark}

\Cref{table:pile_benchmark} summarizes the state-of-the-art in the \href{https://paperswithcode.com/sota/language-modelling-on-the-pile}{Pile language modeling benchmark}.

\begin{table}[H]
  \centering
  \begin{tabular}{lcc}
    \toprule
    \textbf{Model} & \textbf{Bits per Byte} & Bits per Byte (without Wikipedia) \\
    \midrule
    Jurassic-1 \scriptsize\citep[178B,][]{lieber2021jurassic} & n/a & 0.601* \\
    GLM \scriptsize\citep[130B,][]{zeng2022glm} & n/a & 0.622* \\
    GPT-2 \scriptsize\citep[124M,][]{radford2019language} & 1.241 &  \\
    GPT-2 \scriptsize\citep[774M,][]{radford2019language} & 1.093 &  \\
    Llama-3.2-Instruct \scriptsize\citep[1B,][]{dubey2024llama} & 0.807 &  \\
    Llama-3.2-Instruct \scriptsize\citep[3B,][]{dubey2024llama} & 0.737 &  \\
    Gemma-2 \scriptsize\citep[2B,][]{team2024gemma} & 0.721 &  \\
    Llama-3.2 \scriptsize\citep[1B,][]{dubey2024llama} & 0.697 & 0.684 \\
    Phi-3.5 \scriptsize\citep[3.8B,][]{abdin2024phi} & 0.690 &  \\
    Phi-3 \scriptsize\citep[3.8B,][]{abdin2024phi} & 0.679 & 0.678 \\
    Phi-3 \scriptsize\citep[7B,][]{abdin2024phi} & 0.678 &  \\
    Gemma-2 \scriptsize\citep[9B,][]{team2024gemma} & 0.670 &  \\
    GPT-3 \scriptsize\citep[175B,][]{brown2020language} & 0.666* &  \\ %
    Phi-3.5-MoE \scriptsize\citep[16$\times$3.8B,][]{abdin2024phi} & 0.656 &  \\
    Phi-3 \scriptsize\citep[14B,][]{abdin2024phi} & 0.651 &  \\
    Llama-3.2 \scriptsize\citep[3B,][]{dubey2024llama} & 0.640 & 0.627 \\
    Gemma-2 \scriptsize\citep[27B,][]{team2024gemma} & 0.629 &  \\
    \midrule
    \emph{Test-Time FT with} \method + GPT-2\reso{124M} & 0.862 &  \\
    \emph{Test-Time FT with} \method + GPT-2\reso{774M} & 0.762 &  \\
    \emph{Test-Time FT with} \method + Llama-3.2\reso{1B} & \underline{0.606} & 0.607 \\
    \emph{Test-Time FT with} \method + Phi-3\reso{3.8B} & \underline{0.595} & \underline{0.599} \\
    \emph{Test-Time FT with} \method + Llama-3.2\reso{3B} & \underline{\textbf{0.557}} & \underline{\textbf{0.559}} \\
    \bottomrule
  \end{tabular}
  \caption{Evaluation of state-of-the-art models on the Pile language modeling benchmark, without copyrighted datasets. (*): Results with GPT-3 are from \cite{gao2020pile}; results with Jurassic-1 and GLM are from \cite{zeng2022glm} and do not report on the Wikipedia dataset. For a complete comparison, we also evaluate our Phi-3 and Llama-3.2 with test-time fine-tuning when excluding the Wikipedia dataset.
  \textbf{Bold} numbers denote the best performing model.
  \underline{Underlined} numbers denote a model that is better than the previous state-of-the-art.\looseness=-1}
  \label{table:pile_benchmark}
\end{table}

Due to our dataset being restricted to the non-copyrighted part of the Pile, the data distribution changes slightly.
To account for this, we take the reported results of prior work and exclude the datasets that have copyright restrictions from the evaluation.
Notably, some prior reported results of state-of-the-art models miss evaluation of the Wikipedia dataset, which we therefore also exclude for a direct comparison.
To the best of our knowledge, our results with test-time fine-tuning and \method achieve a new state-of-the-art on the Pile benchmark.\looseness=-1

\section{Extended Related Work}\label{sec:extended_related_work}

\subsection{Learning at Test-Time}

The subject of learning at test-time has a rich history in statistics and machine learning.
By ``learning at test-time'' we refer to models that are constructed specifically for a given test instance, differing from the model used for other test instances.
The following discussion provides a brief overview with emphasis on the most recent developments.\looseness=-1

\paragraph{$k$-Nearest Neighbors and Kernel Regression (since 1950s)} One of the most basic forms of learning at test-time was developed by \cite{fix1951discriminatory} and \cite{cover1967nearest}.
Given the supervised data $\spD \subseteq \spX \times \spY$ with input domain $\spX \subseteq \R^d$ and labels $\spY = \{0, \dots, K\}$, the $k$-NN algorithm predicts the label of a test instance $\prompt \in \spX$ by taking the majority vote of the $k$ nearest neighbors of~$\prompt$ in~$\spD$ according to some distance metric on~$\spX$ such as Euclidean distance.
In the case of regression, $\spY = \R$ and the prediction is the average of the labels of the $k$ nearest neighbors.
Kernel regression extended upon this idea in the 1960s by weighting neighbors according to their distance to the test instance~\citep{nadaraya1964estimating,watson1964smooth}.
This is a simple and often effective method if the inputs are well-structured and low-dimensional, e.g., if $\spX$ is a learned low-dimensional manifold~\citep{geirhos2024towards}.
When $K$ is large, as for example when $\spY$ is the set of all tokens in a language modeling task, naive application of $k$-NNs is difficult, nevertheless they have been shown to be effective when mixed with parametric language models~\citep{khandelwal2019generalization}.\looseness=-1

\paragraph{Local Learning (since 1970s)}
Local learning is the idea of using data ``relevant'' to the test instance~$\prompt$ to train a parametric model.
Formally, given a test instance $\prompt$, conventually a model~$f$ is used to predict $f(\prompt)$ where $f$ is trained to minimize the average loss over the training data.
Instead, local learning trains a model $f_{\prompt}$ specifically for $\prompt$ and predicts $f_{\prompt}(\prompt)$.
Original works train a linear model by weighting data according to their proximity to $\prompt$~\citep{cleveland1979robust,cleveland1988locally,atkeson1997locally}.
Here, each test instance trains a model from scratch since the optimal solution of linear regression is independent of initialization.
This perspective has regained interest recently in the context of neural networks, with \cite{sun2020test} naming it \emph{``test-time training''}.\looseness=-1

\paragraph{Transductive Learning (since 1980s)}

Vladimir Vapnik developed the general principle of \emph{transduction} which he states in \cite{vapnik2013nature} as follows:
\begin{quote}{Vladimir Vapnik:}
  \emph{``When solving a problem of interest, do not solve a more general problem as an intermediate step. Try to get the answer that you really need but not a more general one.''}
\end{quote}
This is perhaps the most general principle behind learning at test-time, and directly opposed to the principle of \emph{induction} --- extracting the most general rules from data --- which has arguably dominated machine learning research over the last decades.
In a way, local learning is pushing the principle of transduction to the opposite extreme: Each test instance defines its own learning problem, with the test instance alone being the target of prediction.\looseness=-1

\paragraph{Local Fine-Tuning (since 1990s)}

\cite{bottou1992local} were the first to use local learning in conjunction with a \emph{pre-trained} parametric model.
They train (i.e., ``fine-tune'') the last layer of a convolutional neural network for handwritten digit classification based on the nearest neighbors to the test instance in pixel space.
Very recently, \cite{hardt2023test} applied the same idea to language models, showing that local fine-tuning can significantly improve the performance of large language models on standard benchmarks.
Previously, this idea has also been evaluated by \cite{li2016one} and \cite{basu2023statistical}.
\emph{``Test-time fine-tuning''} (as well as ``active inference'') has frequently been used to refer to this approach of locally fine-tuning a pre-trained model.
Within the last few years, test-time fine-tuning has regained substantial interest in the context of self-supervised learning, where the pre-trained model is fine-tuned on the \emph{test instance itself}.
Notable applications of this approach are in vision~\citep{jain2011online,shocher2018zero,luo2020consistent,sun2020test,wang2020tent} and in language modeling~\citep{krause2018dynamic,krause2019dynamic}, where it is called \emph{dynamic evaluation}.
As one would also naively expect, test-time fine-tuning yields the largest improvements when the prompt is not (well-) represented in the pre-training data, e.g., due to a distribution shift~\citep{gandelsman2022test,hardt2023test}.
Notably, test-time fine-tuning is the central component of the state-of-the-art approaches to the ARC challenge~\citep{chollet2019measure,cole2023dataset}, a non-saturated benchmark which is intended to test reasoning capabilities based on ``core knowledge'' rather than mere memorization.\looseness=-1

\paragraph{(Few-Shot) In-Context Learning (since 2020s)}

Very recently, with the advent of large language models (LLMs), learning at test-time has regained interest.
\cite{brown2020language} showed that GPT-3 can \emph{learn in-context} from input-label pairs that are appended to the prompt, an emergent phenomenon of LLMs that has been widely studied since~\citep{von2023transformers,kossen2024context,bhattamishra2023understanding}.
In contrast to standard in-weights learning, in-context learning requires no parameter updates.
Interestingly, in-context learning adopts the same paradigm as local learning wherein a model is adapted specifically for the test instance~$\prompt$, here by skewing the autoregressive distribution towards the data included in the prompt.
This is often combined with the automatic sourcing of nearest neighbors to $\prompt$ in an external dataset, which is known as \emph{``retrieval augmented generation''}~\citep[RAG,][]{lewis2020retrieval,borgeaud2022improving}, and is akin to the other methods of test-time learning discussed above.
A crucial difference between test-time fine-tuning and in-context learning appears to be that learning from context works by \emph{changing the test instance}~\citep{bhargava2023s} whereas in-weights learning works by \emph{changing the model}.
With small datasets, in-context learning is therefore often more computationally efficient than test-time fine-tuning, however this ceases to be the case when the dataset grows since the complexity of transformers grows quadratically in the number of context tokens whereas the complexity of test-time fine-tuning grows linearly.\looseness=-1

\subsection{Data Selection}

Clearly, the choice of data to learn from at test-time is crucial for predictive performance.
Selecting uninformative data can increase inference time or even degrade performance~\citep[see, e.g.,][]{kolossov2023towards}.
Today, datasets for fine-tuning are often hand-designed, however, this is not possible in a test-time setting.
Automatic data selection has a rich history in machine learning, studied extensively in \emph{search}, \emph{experimental design}~\citep{chaloner1995bayesian}, and \emph{active learning}~\citep{settles2009active}.
The following attempts to give a brief overview of the most recent developments.\looseness=-1

\paragraph{(Document) Retrieval (since 1970s)}

Retrieval methods aim to search a dataset $\spD$ for the most relevant data to a given query/prompt.
The most classical methods such as TF-IDF~\citep{sparck1972statistical} and BM25~\citep{robertson2009probabilistic} are based on keyword matching, and were developed alongside the first search engines.
Due to their reliance on ``bags of words'', i.e., sets of one-hot-encoded word vectors, they are known as \emph{sparse retrievers}.
An alternative idea is to select the data $\vx$ that maximizes the likelihood of the query $\prompt$ given the data, i.e., $\argmax_{\vx \in \spD} p(\prompt \mid \vx)$, known as \emph{query likelihood retrievers}~\citep{ponte1998language,wang2024large}.
Here, the conditional probability can be a non-parametric term frequency or a parametric language model.
More recently, due to significant advances in representation learning~\citep{devlin2018bert,reimers2019sentence}, dense retrievers have become popular~\citep[e.g.,][]{lewis2020retrieval,karpukhin2020dense,borgeaud2022improving}.
A \emph{dense retriever} embeds dataset and query into a metric vector space, and retrieves the nearest neighbors to the query.
Standard vector-based search methods use cosine similarity or (equivalently\footnote{Here we assume that vectors are normalized to unit length, cf. \cref{sec:proofs:insufficiency_nn}.}) Euclidean distance.
Recent advances in algorithms and implementation mean that (approximate) nearest neighbor retrieval can be performed efficiently with databases of billions or even trillions of tokens~\citep[e.g.,][]{johnson2019billion,aumuller2020ann}.
The most common metric is cosine distance, which coincides with Euclidean distance when vectors are normalized to unit length.
Nearest neighbor retrieval has been the de-facto standard for data selection in RAG and local learning.\footnote{There is substantial literature that investigates selection of ``informative'' data for RAG~\citep[e.g.,][]{ye2023compositional}.}\looseness=-1

\paragraph{Influence Functions (since 1970s)}

Influence functions measure the change in a model's prediction when a single data point is removed from the training data.
First proposed by \cite{cook1977detection} for linear regression, they have since been used extensively to \emph{interpret} predictions~\citep{koh2017understanding,pruthi2020estimating}.
Very recently, \cite{xia2024less} applied influence functions to select data that leads to the largest (approximate) reduction in test-loss.
Concretely, using a first-order Taylor approximation of the loss $\ell$ and if the model at time $t$ is updated via stochastic gradient descent with step size $\eta_t$ on data $\vx$, the loss reduction can be approximated as \begin{align*}
  \ell(\prompt; \vtheta_{t+1}) - \ell(\prompt; \vtheta_t) \approx - \eta_t \langle \grad_{\vtheta} \ell(\vx; \vtheta_t), \grad_{\vtheta} \ell(\prompt; \vtheta_t) \rangle.
\end{align*}
That is, the data $\vx$ whose loss gradient is most aligned with the loss gradient of the test instance~$\prompt$, can be expected to lead to the largest loss reduction.\footnote{\cite{xia2024less} normalize embeddings before computing the inner product (thus, maximizing cosine similarity) to account for varying gradient norms depending on sequence lengths.}
Note that this simply leads to nearest neighbor retrieval in an embedding space informed by the model at time $t$.
A major limitation of using influence functions for data selection is that they implicitly assume that the influence of selected data adds linearly \citep[i.e., two equally scored data points are expected to doubly improve the model performance,][Section 3.2]{xu2019understanding}.
This assumption does quite obviously not hold in practice as seen, e.g., by simply duplicating data.
The same limitation applies to the related approach of \emph{datamodels}~\citep{ilyas2022datamodels}.
A recent line of work aims to address this limitation by designing simulators that can be probed with datasets to estimate their effect on a prediction requiring less compute than training the full model~\citep{guu2023simfluence}, yet, this does not address the data selection problem as the space of possible datasets is exponentially large.\looseness=-1

\paragraph{Coverage \& \emph{Inductive} Active Learning}

Next we discuss an orthogonal line of work, which takes into account the interaction between selected data, but not the interaction of that data with respect to a test instance.
Roughly speaking classical active learning studies how to most effectively select data from a domain $\spX$ for learning a model over this domain $\spX$.
Intuitively, this task can be thought of as selecting a subset $X \subseteq \spX$ of fixed size that captures the most ``information'' about the target function~$f$.
As such, this task is of an \emph{inductive} nature: we aim to extract general rules from the data that can be applied to unseen data later, without concrete specification of the unseen data.
Approaches to (inductive) active learning are broadly aiming to select \emph{diverse} data that covers the data manifold in~$\spX$ well.
Methods include those that maximize the mutual distances between selected data~(e.g., \textsc{CoreSet}~\citep{sener2017active}, \textsc{BADGE}~\citep{ash2019deep}, and \textsc{ProbCover}~\citep{yehuda2022active}) with respect to a latent distance metric and those ``uncertainty sampling'' methods that select data that the model is most uncertain about~(e.g., \emph{D-optimal design}~\citep{wynn1970sequential} and \textsc{BatchBALD}~\citep{kirsch2019batchbald}).\footnote{Section 5.2 of \cite{holzmuller2023framework} provides a comprehensive overview.}
Both families of methods can be seen as determining some decent covering of the data manifold in $\spX$.
In a probabilistic sense, uncertainty sampling can be seen to minimize the ``posterior predictive entropy'' in expectation over the observed data.
\reviewtext{Approaches to inductive active learning have frequently been applied to pre-training models, with image classification as the canonical application~\citep[e.g,][]{holzmuller2023framework}.}\looseness=-1

\subsection{\method Unifies Work on Retrieval and Work on Coverage}\label{sec:method_synthesize_nn_and_al}

\reviewtext{Retrieval and inductive active learning fall on to two extreme ends of a spectrum:
Retrieval methods search for relevant data without ensuring that data is non-redundant.
As such, naive application of search methods is insufficient for a learning task since those generally do not take ``distinctiveness'' into account (cf. \cref{sec:problem_setting:nn_insufficient}).
In contrast, active learning methods select non-redundant data without ensuring that data is relevant.
Like \method, many active learning methods are based on some measure of ``uncertainty'', however how this measure is utilized for data selection differs fundamentally in \method:}\looseness=-1

\paragraph{\emph{Transductive} Active Learning: Unifying retrieval \& coverage}

\reviewtext{Transductive active learning is motivated from the central observation that learning and prediction requires synthesizing information that is both relevant and non-redundant.
Transductive active learning~\citep{hubotter2024transductive} bridges this gap by selecting data that is both relevant and non-redundant.}
In this work, we propose \method, an approach to test-time transductive active learning (i.e., transductive active learning with a single prediction target), which extends previously proposed algorithms~\citep{mackay1992information,seo2000gaussian,yu2006active,hubotter2024transductive}.
Similar algorithmic ideas have recently been evaluated empirically in a variety of other settings~\citep{kothawade2021similar,wang2021beyond,kothawade2022prism,smith2023prediction} such as Bayesian optimization~\citep{hubotter2024transductive}, multi-task reinforcement learning~\citep{bagatella2024active}, and the amortized fine-tuning of neural networks~\citep{hubotter2024transductive}.
\method aims to select data that is both relevant and non-redundant with respect to the already seen data, whereby the hyperparameter $\lambda'$ controls the trade-off between relevance and redundancy.
\cite{hubotter2024transductive} introduce extensions of \method to more than one prediction target, i.e., amortizing learning across multiple prompts.
They show that if the prediction targets include \emph{all of~$\spX$}, then the method reduces to a form of \emph{inductive active learning}.\looseness=-1

\section{Further Details on \method}\label{sec:method_details}

\subsection{How \method Balances Relevance and Diversity}\label{sec:method_details:balancing_relevance_diversity}

Let us look more closely at the points selected by \method.
We will assume here for ease of notation that embeddings have unit length.\footnote{See \cref{sec:proofs_balance} for the expressions with non-normalized embeddings.}
The first point selected by \method has the largest (absolute) cosine similarity to the prompt within the latent space: \begin{align*}
  \vx_1 = \argmin_{\vx \in \spD} \sigma_{\{\vx\}}^2(\prompt) = \argmax_{\vx \in \spD} \frac{(\transpose{\vphi(\prompt)} \vphi(\vx))^2}{1 + \lambda'} = \argmax_{\vx \in \spD} \Big(\hspace{-0.95cm}\underbrace{\measuredangle_{\vphi}(\prompt, \vx)}_{\text{cosine similarity of $\vphi(\prompt), \vphi(\vx)$}}\hspace{-0.95cm}\Big)^{\!\! 2}. \tag{\textbf{1st point}}
\end{align*}
This recovers the standard approach of Nearest Neighbor retrieval with respect to cosine similarity, provided cosine similarities are non-negative.
However, we show next that selecting more than one point, \method not only considers the relevance with respect to the prompt~$\prompt$, but also the redundancy with respect to the already seen data~$\vx_1$.
\begin{align*}
  \vx_2 = \argmin_{\vx \in \spD} \sigma_{\{\vx_1,\vx\}}^2(\prompt) = \argmax_{\vx \in \spD} \transpose{\begin{bmatrix}
    \measuredangle_{\vphi}(\prompt, \vx_1) \\
    \measuredangle_{\vphi}(\prompt, \vx)
  \end{bmatrix}} \!\! \inv{\begin{bmatrix}
    1 + \lambda' & \measuredangle_{\vphi}(\vx_1, \vx) \\
    \measuredangle_{\vphi}(\vx_1, \vx) & 1 + \lambda'
  \end{bmatrix}} \!\! \begin{bmatrix}
    \measuredangle_{\vphi}(\prompt, \vx_1) \\
    \measuredangle_{\vphi}(\prompt, \vx)
  \end{bmatrix}\!\!.
  \tag{\textbf{2nd point}}
\end{align*}
To illustrate how \method balances relevance and diversity, we compare the value of observing $\vx_1$ twice to observing a different $\vx$ with cosine similarity $\measuredangle_{\vphi}(\vx_1,\vx) = 0$.
We show in \cref{sec:proofs_balance} that \methodl prefers $\vx$ over $\vx_1$ for selecting $\vx_2$ \emph{if and only if}
\begin{align*}
  \measuredangle_{\vphi}(\prompt, \vx)^2 > \frac{\lambda'}{2 + \lambda'} \measuredangle_{\vphi}(\prompt, \vx_1)^2
\end{align*}
\begin{wraptable}{r}{0.4\textwidth}
  \vspace{-12pt}
  \centering
  \begin{tabular}{lc@{\hspace{3mm}}c}
    \toprule
    \textbf{Parameter} & \textbf{Relation} & \textbf{Div.} \\[1pt]
    \hline \\[-9pt]
    regularization $\lambda$ & $\lambda$ & $\textcolor{red}{\boldsymbol{\downarrow}}$ \\[2pt]
    step size $\eta$ & $1 / \eta$ & $\textcolor{blue}{\boldsymbol{\uparrow}}$ \\[2pt]
    noise $\rho$ (cf. \sref{sec:method_maximizes_info_gain}) & $\rho^2$ & $\textcolor{red}{\boldsymbol{\downarrow}}$ \\[2pt]
    \bottomrule
  \end{tabular}
  \caption{The effect of~$\lambda$ and its other interpretations on diversity of selected data (as the parameter is increased).\looseness=-1}
  \label{table:lambda}
  \vspace{-1cm}
\end{wraptable}
The hyperparameter $\lambda'$ controls the trade-off between relevance and diversity: if $\lambda' = 1$ then even if $\vx$ has one third the relevance of $\vx_1$, it is still preferred.
As $\lambda' \to \infty$, \methodl performs retrieval by repeatedly selecting the same point; and as $\lambda' \to 0$, \methodl aims only to select the most diverse points.
We observe the same relationship empirically on the Pile dataset~(cf.~\figref{fig:irreducible_uncertainty}{left}).
\Cref{table:lambda} summarizes the effect of the regularization parameter $\lambda$ and its interpretations.\looseness=-1

\subsection{The Uncertainty of \method Provably Vanishes}\label{sec:method_details:uncertainty_vanishes}

We now formally prove that unlike with Nearest Neighbor retrieval, the uncertainty $\sigma_n^2(\prompt)$ about the response to the prompt vanishes if \method is used to select data for fine-tuning.
As discussed in \sref{sec:convergence}, this requires that the data space contains sufficient information to determine the correct response.
In general, there might be an irreducible error remaining.
We will denote a basis of the embeddings $\{\vphi(\vx) : \vx \in \spD\}$ within the data space~$\spD$ by ${\mPhi \in \R^{m \times d}}$ with size $m$ and dimension $d$, and we denote by $\mPi_{\mPhi}$ its orthogonal projection onto the orthogonal complement of the span of~$\mPhi$.
\cite{hubotter2024transductive} show that for all $X \subseteq \spD$,\looseness=-1 \begin{align}
  \sigma_X^2(\prompt) \geq \norm{\vphi(\prompt)}_{\mPi_{\mPhi}}^2
\end{align} where ${\norm{\vv}_{\mA} = \sqrt{\transpose{\vv} \mA \vv}}$ denotes the Mahalanobis distance.
We call $\smash{\irred*{\prompt} \defeq \norm{\vphi(\prompt)}_{\mPi_{\mPhi}}^2}$ the \emph{irreducible uncertainty} about $\prompt$.
It can be seen that $\smash{\irred*{\vx^\parallel} = 0}$ for all $\smash{\vx^\parallel \in \spX}$ with $\smash{\vphi(\vx^\parallel) \in \spn{\mPhi}}$.
That is, the irreducible uncertainty is zero for points in the span of the data space.
In contrast, for points $\smash{\vx^\perp}$ with $\vphi(\vx^\perp) \in (\spn{\mPhi})^{\perp}$, the irreducible uncertainty equals the initial uncertainty: $\irred*{\vx^\perp} = \sigma_0^2(\vx^\perp)$.
The irreducible uncertainty of any prompt $\prompt$ can be computed by simple decomposition of $\vphi(\prompt)$ into parallel and orthogonal components.
Hence, if the data space is large and includes all relevant information to answer the prompt, the irreducible uncertainty is negligible.\looseness=-1

We will denote the \emph{uncertainty reduction} about the prompt~$\prompt$ achieved by fine-tuning on~$X$ by $\smash{\psi_{\prompt}(X) \defeq \sigma_0^2(\prompt) - \sigma_X^2(\prompt)}$ and note that \method selects $\vx_{n+1} = \argmax_{\vx \in \spD} \psi_{\prompt}(X_n \cup \{\vx\})$.
Stating the convergence guarantee of \method requires one straightforward assumption.\looseness=-1

\begin{assumption}\label{assumption:submodular}
  The uncertainty reduction $\psi_{\prompt}(X)$ is submodular.
\end{assumption}\vspace{-6pt}

\begin{figure}
  \incplt[\textwidth]{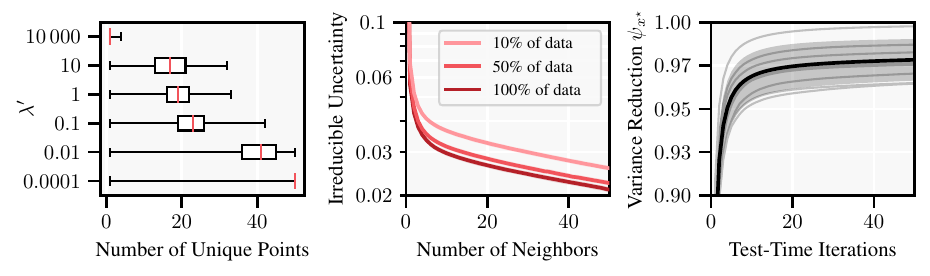}
  \vspace{-15pt}
  \caption{\textbf{Left:} The parameter $\lambda'$ controls the trade-off between relevance and diversity of the selected data. As $\lambda' \to \infty$, \method selects the same point repeatedly whereas as $\lambda' \to 0$, \method selects a diverse set of points.
  \textbf{Middle:} The irreducible uncertainty of test prompts from the Pile given neighbors selected from fractions of the Pile training dataset in the data space. The irreducible uncertainty captures how much information is available, and decays quickly.
  \textbf{Right:} We empirically observe that $\psi_{\prompt}$ is monotone submodular, i.e., its ``marginal gains'' decrease as the number of iterations increases. The shaded region denotes the standard deviation, gray lines are from $10$ randomly selected prompts.\looseness=-1}
  \label{fig:irreducible_uncertainty}
\end{figure}

Intuitively, \cref{assumption:submodular} states that the marginal uncertainty reduction achieved by adding a point to the selected data (i.e., the `marginal gain') decreases as the size of the selected data increases, which is a common assumption in prior work.\footnote{Similar assumptions have been made by \cite{bogunovic2016truncated} and \cite{kothawade2021similar}.}
Formally \cref{assumption:submodular} is satisfied if, for all $\vx \in \spD$ and $X' \subseteq X \subseteq \spD$, \begin{align}
  \Delta_{\prompt}(\vx \mid X') \geq \Delta_{\prompt}(\vx \mid X)
\end{align} where $\Delta_{\prompt}(\vx \mid X) \defeq \psi_{\prompt}(X \cup \{\vx\}) - \psi_{\prompt}(X)$ is the \emph{marginal uncertainty reduction} of $\vx$ given~$X$.

Though theoretically this assumption may be violated by some instances~\citep[Example C.8]{hubotter2024transductive}, we observe that it is satisfied in practice~(cf.~\figref{fig:irreducible_uncertainty}{right}).
Under this assumption, $\psi_{\prompt}(X_n) \geq (1-1/e) \max_{X \subseteq \spD, \abs{X} \leq n}\psi_{\prompt}(X)$ due to the seminal result on monotone submodular function maximization of \cite{nemhauser1978analysis}.
That is, the iterative scheme of \method achieves a constant factor approximation of the optimal uncertainty reduction. Moreover, recent work on transductive active learning of \cite{hubotter2024transductive} which we restate here shows that the uncertainty of \method converges to the irreducible uncertainty.
We assume w.l.o.g.\ that $\smash{\norm{\vphi(\vx)}_2^2 \leq 1}$ for all $\vx \in \spX$.\looseness=-1

\begin{theorem}[Convergence Guarantee, formalization of \cref{informal_thm:convergence}]\label{thm:convergence}
  Let \cref{assumption:submodular} hold and $X_n$ be selected by \methodl from the data space $\spD$.
  Then for all $n \geq 1$ and $\prompt \in \spX$, \begin{align*}
    \sigma_n^2(\prompt) \leq \irred*{\prompt} + \frac{d (1 + 2 d \lambda' \inv{\lambda_{\min}}) \log(1 + \frac{\hat{\lambda}_{n}}{\lambda'})}{\sqrt{n}}
  \end{align*} where $\lambda_{\min}$ is the smallest eigenvalue of $\mPhi \transpose{\mPhi}$ with $\mPhi \in \R^{m \times d}$ a basis of ${\{\vphi(\vx) : \vx \in \spD\}}$, and where $\hat{\lambda}_n \leq \BigO{n}$ is the largest eigenvalue of $\mPhi_n \transpose{\mPhi_n}$.
\end{theorem}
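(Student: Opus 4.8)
The plan is to derive this as a specialization of the transductive active learning convergence theorem of \cite{hubotter2024transductive} to our inner-product kernel $k(\vx,\vxp) = \transpose{\vphi(\vx)}\vphi(\vxp)$ and to the single prediction target $\prompt$. First I would check that the two settings line up: \method's update $\vx_{n+1} = \argmin_{\vx\in\spD}\sigma_{X_n\cup\{\vx\}}^2(\prompt) = \argmax_{\vx\in\spD}\psi_{\prompt}(X_n\cup\{\vx\})$ is precisely their greedy rule for the set function $\psi_{\prompt}(X) = \sigma_0^2(\prompt) - \sigma_X^2(\prompt)$, which is monotone and, by \cref{assumption:submodular}, submodular; and the already-cited lower bound $\sigma_X^2(\prompt) \geq \norm{\vphi(\prompt)}_{\mPi_{\mPhi}}^2$ together with monotonicity of $\psi_{\prompt}$ shows $\irred*{\prompt} = \inf_{X\subseteq\spD}\sigma_X^2(\prompt)$, so that $\psi_{\prompt}^\star \defeq \sup_{X\subseteq\spD}\psi_{\prompt}(X) = \sigma_0^2(\prompt) - \irred*{\prompt}$. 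The theorem is then equivalent to the suboptimality bound $\psi_{\prompt}^\star - \psi_{\prompt}(X_n) \leq \BigO{d(1 + 2d\lambda'\inv{\lambda_{\min}})\log(1 + \hat{\lambda}_n/\lambda')}/\sqrt{n}$ for the greedily selected $X_n$, and what remains is to instantiate the generic rate with the kernel-specific constants.

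For that suboptimality bound I would combine two ingredients. By submodularity and \cite{nemhauser1978analysis}, the greedy iterate satisfies $\psi_{\prompt}(X_n) \geq (1 - (1 - 1/m)^n)\max_{\abs{X}\leq m}\psi_{\prompt}(X)$ for every $m \geq 1$, so the gap between $\psi_{\prompt}(X_n)$ and the best value attainable with $m$ observations is at most $e^{-n/m}\sigma_0^2(\prompt)$. Separately I would bound the residual gap $\psi_{\prompt}^\star - \max_{\abs{X}\leq m}\psi_{\prompt}(X)$, i.e.\ how much is lost by using finitely many rather than infinitely many (possibly repeated) observations: by the push-through identity one writes $\sigma_X^2(\prompt) = \lambda'\norm{\vphi(\prompt)}_{\inv{(\transpose{\mPhi_X}\mPhi_X + \lambda'\mI_d)}}^2$ (with $\mPhi_X$ stacking the features of $X$, so $\mPhi_{X_n} = \mPhi_n$), the only reducible part is the component of $\vphi(\prompt)$ in $\spn \mPhi$, a well-chosen size-$m$ multiset drives it down at rate $\BigO{\lambda'/(m\lambda_{\min})}$, and the smallest eigenvalue $\lambda_{\min}$ of $\mPhi\transpose{\mPhi}$ controls the worst-case direction; the log-determinant / information-gain potential $\log\det(\mI_d + \transpose{\mPhi_X}\mPhi_X/\lambda') \leq d\log(1 + \hat{\lambda}_n/\lambda')$, with $\hat{\lambda}_n = \lambda_{\max}(\mPhi_n\transpose{\mPhi_n}) \leq n$ by $\norm{\vphi(\vx)}_2 \leq 1$, supplies the logarithmic factor and the dependence on $\hat{\lambda}_n$. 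Balancing the two error terms by taking $m$ of order $\sqrt{n}$ (so that $e^{-n/m}$ is super-polynomially small) yields the claimed $\BigOTil{1/\sqrt{n}}$ rate with the stated constant.

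The main obstacle — and the reason the naive ``greedy versus an optimal set of $d$ linearly independent points'' argument does not suffice — is that for any fixed $\lambda' > 0$ the supremum $\psi_{\prompt}^\star$ is attained only in the limit of infinitely many repeated observations: a finite multiset keeps $\transpose{\mPhi_X}\mPhi_X$ bounded, so $\sigma_X^2(\prompt)$ stays strictly above $\irred*{\prompt}$ by a term of order $\lambda'/(m\lambda_{\min})$. This is exactly the quantity the irreducible uncertainty isolates, and handling it forces the two-scale argument above (first compare the greedy iterate to the best size-$m$ solution, then let $m$ grow with $n$) and the careful balancing of $\lambda'/(m\lambda_{\min})$ against $e^{-n/m}$; keeping the log-determinant bookkeeping tight enough to land the constant $d(1 + 2d\lambda'\inv{\lambda_{\min}})\log(1 + \hat{\lambda}_n/\lambda')$ is the remaining delicate point. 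Everything else — the push-through identity, Sherman–Morrison for the individual marginal gains, and the estimate $\hat{\lambda}_n \leq n$ — is routine.
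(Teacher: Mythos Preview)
Your proposal is correct and takes essentially the same approach as the paper: both derive the bound by specializing Theorem~3.2 of \cite{hubotter2024transductive} to the singleton target $\spA = \{\prompt\}$ with the linear kernel, together with the standard log-determinant bound $\gamma_n \leq d\log(1+\hat{\lambda}_n/\lambda')$. The paper's proof simply invokes that theorem as a black box (noting only that \method is VTL with $\spA=\{\prompt\}$, that the result extends to finite-dimensional RKHS, and the $\gamma_n$ bound), whereas you additionally sketch the two-scale Nemhauser-plus-residual argument that underlies the cited theorem; this extra detail is sound but not required here.
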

\begin{proof}
  \Cref{thm:convergence} follows from Theorem 3.2 of \cite{hubotter2024transductive} noting that \begin{itemize}
    \item The \method objective is a special case of VTL (Variance-based Transductive Active Learning) with ``target space'' $\spA = \{\prompt\}$.
    \item Theorem 3.2 of \cite{hubotter2024transductive} can be extended to finite-dimensional reproducing kernel Hilbert spaces~\citep[Appendix C.6.4]{hubotter2024transductive}.
    \item The ``maximum information gain of $n$ iterations'', $\gamma_n$, in the statement of \cite{hubotter2024transductive} is bounded as follows~\citep[Appendix C.3]{srinivas2009gaussian}: $\smash{\gamma_n \leq d \log(1 + \hat{\lambda}_n / \lambda')}$.
  \end{itemize}
\end{proof}

\section{Further Insights on Active Fine-Tuning}\label{sec:analysis_active_fine_tuning}

We expand the analysis of our results that we summarized in \sref{sec:results}.
We analyze aspects of the two key contributions of our work separately: In the following, we analyze the performance of \method in active fine-tuning, and in \sref{sec:analysis_test_time_fine_tuning}, we analyze the performance of test-time fine-tuning more generally.\looseness=-1

\paragraph{\insight \method's improvement over NN grows with dataset size.}
As shown in \cref{fig:dataset_size_advantages}, we find that the relative improvement of \method over Nearest Neighbor retrieval grows with dataset size.
We suspect that going from a small-size dataset to a medium-size dataset, the additional performance stems mainly from the ability of \method to adaptively select the same data for multiple gradient steps.
Going from a medium-size dataset to a large-size dataset, we suspect that the additional performance stems mainly from the ability of \method to select more diverse data points.\looseness=-1
\begin{figure}[H]
  \centering
  \incplt[0.62\textwidth]{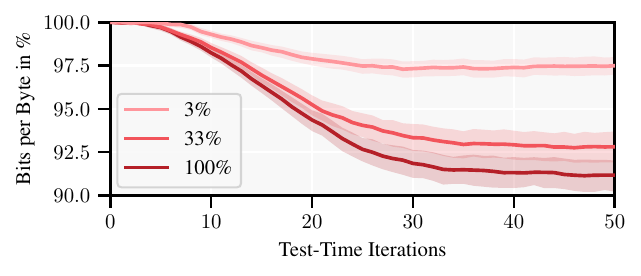}
  \vspace{-15pt}
  \caption{Bits per byte (in \% relative to the Nearest Neighbor retrieval baseline, $\downarrow$ better).
  We evaluate data selection from 3\%, 33\%, and 100\% of the Pile training dataset.
  We see a clear trend that \method's improvement over Nearest Neighbor retrieval grows with dataset size --- even from 33\% to 100\% with the highly curated Pile dataset.}
  \label{fig:dataset_size_advantages}
\end{figure}

\paragraph{\insight Points with high negative cosine similarity \emph{may} help.}
With the Roberta embedding model, we find that there are no negative cosine similarities in the data~(cf.~\cref{fig:cosine_similarity} in \sref{sec:additional_results}).
Choosing different embeddings such as influence embeddings can give negative cosine similarities~\citep[Appendix K.2]{xia2024less}.
Inspection of those points found by \cite{xia2024less} suggests that they can be equally informative as points with high positive cosine similarity.
Our derivation of \method naturally%
\setlength\parfillskip{0pt}\par\setlength\parfillskip{0pt plus 1fil}
\begin{wrapfigure}{r}{0.4\textwidth}
  \vspace{-15pt}
  \incplt[0.4\textwidth]{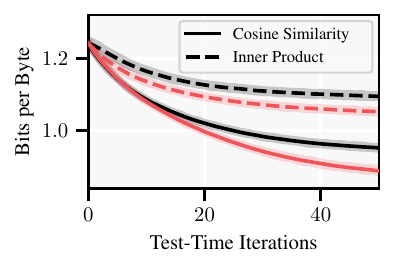}
  \vspace{-15pt}
  \caption{Data selection via \method~(\textcolor{red}{red}) and Nearest Neighbor~(black) performs best with normalized embeddings.}
  \label{fig:metrics_ablation}
  \vspace{-15pt}
\end{wrapfigure}
\vspace{-6pt}
addresses this by selecting points with large \emph{absolute} cosine similarity.
Geometrically, points with positive or negative cosine similarity are both equally ``parallel'' to the test prompt.
Our theoretical results suggest that the informativeness of a data point is closely related to how parallel its embedding is to the test prompt.
We leave further investigation to future work.\looseness=-1

\paragraph{\insight Normalizing embeddings helps.}
We evaluate the performance of Nearest Neighbor retrieval and \method with or without explicitly normalized embeddings in \cref{fig:metrics_ablation}.
We find that for both selection strategies, normalizing embeddings consistently improves performance.
Previously, \cite{hardt2023test} minimized the Euclidean distance between unnormalized embeddings, which we find to perform identically to maximizing cosine similarity.\looseness=-1

\section{Further Insights on Test-Time Fine-Tuning}\label{sec:analysis_test_time_fine_tuning}

\paragraph{\insight Scaling pre-training compute may not be all you need.\!\!}
In \cref{table:pile_benchmark} of \sref{sec:pile_benchmark}, we compare state-of-the-art LLMs to our test-time fine-tuned models.
We show that our Phi-3 with test-time fine-tuning outperforms all evaluated base models, from a wide selection of state-of-the-art LLMs, by a large margin.
Notably, we see a clear advantage of using stronger base models, i.e., better initializations.
The leading base model Gemma-2~\citep[27B,][]{team2024gemma}, which is $7\times$ larger and more recent than Phi-3, achieves $0.629$ bits per byte, whereas our test-time fine-tuned Phi-3 achieves $0.595$ bits per byte.
This indicates that scaling pre-training compute is not all you need to achieve state-of-the-art performance, and that test-time fine-tuning can be an effective method for improving the performance of a base LLM.\looseness=-1

\paragraph{\insight Test-time fine-tuning outperforms in-context learning in ``hard'' tasks.}
Interestingly, we observe that across all evaluated models, updating the base model via fine-tuning as opposed to augmenting the models' context leads to large improvements on the DeepMind Math, GitHub, ArXiv, and FreeLaw datasets.
We include the per-dataset results in \sref{sec:full_results_ttft}.
These datasets contain school-level math problems, code, scientific papers, and court opinions, which are often colloquially understood as tasks that require ``understanding'' or ``reasoning''.
In the case of DeepMind Math and ArXiv, augmenting the models' context does consistently not improve the performance of the base model at all, whereas test-time fine-tuning can lead to significant performance improvements.\looseness=-1

\begin{wrapfigure}{r}{0.4\textwidth}
  \vspace{-23pt}
  \incplt[0.4\textwidth]{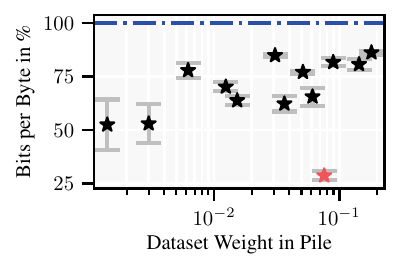}
  \vspace{-15pt}
  \caption{Improvement of $50$ test-time iterations over the base model (\textcolor{blue}{blue}; $\downarrow$~better) with \method against the percentage of bytes occupied by the dataset in the Pile.
  Error bars correspond to standard errors.
  We observe the trend that test-time fine-tuning benefits prompts at the ``boundary'' of the data distribution most.
  The ``outlier'' GitHub dataset is highlighted in \textcolor{red}{red}.}
  \label{fig:performance_against_dataset_weight}
  \vspace{-15pt}
\end{wrapfigure}

\paragraph{\insight Test-time fine-tuning yields largest gains at the boundary of the data distribution.}
In \cref{fig:performance_against_dataset_weight}, we plot the improvement of test-time fine-tuning with \method over the base model against the weight of a dataset in the Pile.
We observe the trend that test-time fine-tuning yields largest performance improvements for datasets that have a smaller weight in the Pile.
We hypothesize that this trend occurs because the weight of a dataset in the Pile corresponds roughly to the weight of similar data in the pre-training dataset of GPT-2, in which case the performance gains would be largest for prompts that are at the ``boundary'' of the data distribution.
Notable is the outlier of the large GitHub dataset where test-time fine-tuning leads to large performance gains.
We hypothesize that this is because coding is relatively dissimilar to other data in the Pile, and therefore the GitHub dataset can be seen as ``small'' relative to the rest of the data.\looseness=-1

We make the observation that if the problem domain is large (like general language modeling), almost every sub-task can be seen as at the ``boundary'' / as an ``outlier''.
We see that datasets closest to the center of mass of the data distribution do not benefit as much from test-time fine-tuning as datasets that are further away from the center of mass.
Therefore, we expect test-time fine-tuning to benefit those models most that are learning a diverse data distribution as opposed to models that are learning a very concentrated data distribution.\looseness=-1

\paragraph{\insight The order of fine-tuning data does not matter.}
In \cref{fig:order_of_examples}, we evaluate the performance of test-time fine-tuning with Nearest Neighbor retrieval when taking gradient steps in the order of selected data compared to reversed order.
We find that the order of gradient steps does not affect the final performance.
This indicates that sequentially fine-tuning on selected data is not necessary, and that batched gradient steps can be used to further speed up test-time fine-tuning.
We leave a detailed exploration of batched updates to future work.\looseness=-1
\vfill\clearpage
\begin{figure}[H]
  \incplt[0.62\textwidth]{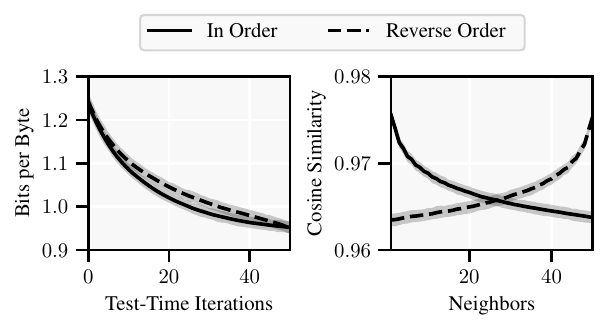}
  \vspace{-15pt}
  \caption{Taking gradient steps in order of selected data compared to reversed order. Data is selected using Nearest neighbor retrieval. We observe that the order of gradient steps does not affect the final performance.\looseness=-1}
  \label{fig:order_of_examples}
\end{figure}

\begin{wrapfigure}{r}{0.4\textwidth}
  \vspace{-25pt}
  \incplt[0.4\textwidth]{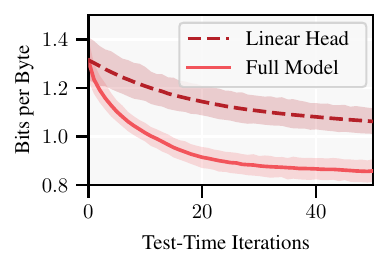}
  \vspace{-15pt}
  \caption{Bits per byte ($\downarrow$ better) against the number of test-time iterations. We compare fine-tuning only the linear head to fine-tuning the full model. We use learning rate $1\mathrm{e-}{4}$ and evaluate on 0.1\% of the full test set.}
  \label{fig:linear}
  \vspace{-15pt}
\end{wrapfigure}

\paragraph{\insight Test-time fine-tuning works also when fine-tuning only the last linear layer.}
Motivated by the linear representation hypothesis~(cf.~\cref{assumption:linear}) which informs \method's surrogate model for data selection, we evaluate whether we can fine-tune this surrogate model directly instead of fine-tuning the full model.
Concretely, we fine-tune only the last linear layer of the LLM, keeping its latent space fixed. %
The gradients for this linear surrogate model can be computed efficiently at almost no cost.
Remarkably, we find in \cref{fig:linear} that large gains of test-time fine-tuning can already be realized by fine-tuning only the last linear layer.
Given these preliminary results with GPT-2 it would be interesting to evaluate the performance gains of fine-tuning the linear head of larger base models.\looseness=-1

\paragraph{\insight Test-time fine-tuning works also with parameter-efficient fine-tuning.}
In our experiments with Phi-3, we use Low-Rank Adaptation~\citep[Lora,][]{hu2021lora} with a rank of $64$.
We find that LoRA converges slower than fine-tuning the full model, and therefore use the learning rate $5\mathrm{e}{-4}$, which is a factor $10$ larger than the learning rate used for fine-tuning the full model.
In \cref{fig:lora}, we evaluate the performance of LoRA compared to fine-tuning the full model.
On the smaller GPT-2 and GPT-2-large we use a rank of $32$.
We generally observe that fine-tuning with LoRA can recover roughly the same performance as fine-tuning the full model.
We expect that with more careful tuning of the learning rate, learning curves could be made more similar.\looseness=-1
\begin{figure}[H]
  \incplt[0.62\textwidth]{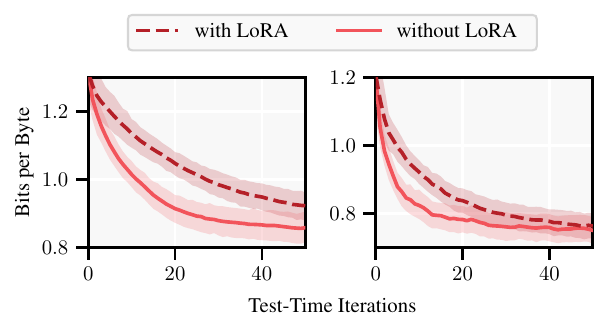}
  \vspace{-10pt}
  \caption{Bits per byte ($\downarrow$ better) against the number of test-time iterations. We compare parameter-efficient fine-tuning with LoRA and fine-tuning the full model. We use 0.1\% of the full test set.}
  \label{fig:lora}
\end{figure}

\begin{wrapfigure}{r}{0.4\textwidth}
  \vspace{-20pt}
  \incplt[0.4\textwidth]{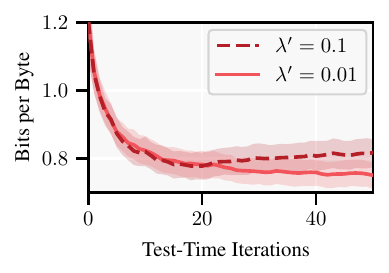}
  \vspace{-15pt}
  \caption{Bits per byte ($\downarrow$ better) with GPT-2-large and varying $\lambda'$. A larger $\lambda'$ can lead to overfitting in later iterations. We use 0.1\% of the full test set.}
  \label{fig:gpt2large_lambda}
  \vspace{-15pt}
\end{wrapfigure}

\paragraph{\insight Larger models appear to learn faster at test-time.}
We find that with a larger model (e.g., GPT-2-large vs GPT-2), a smaller $\lambda'$ tends to be more beneficial.
For example, keeping the learning rate fixed at $5\mathrm{e}{-5}$, using $\method(0.1)$ is the best choice for GPT-2, but leads to slight overfitting at later iterations for GPT-2-large as shown in \cref{fig:gpt2large_lambda}.
Recall that a smaller $\lambda'$ leads to more diverse sampling of the data space.
Thus, this observed trend indicates that larger models learn faster, and therefore benefit more from less redundant training data.
The same trend can also be observed from the behavior of NN-F from \cref{fig:spotlight_extended}:
GPT-2-large overfits much faster with NN-F than GPT-2.
This offers a potential explanation why the advantage of \method over Nearest Neighbor retrieval grows with larger models~(cf.~\sref{sec:full_results_aft}).\looseness=-1

\begin{figure}[H]
  \incplt[\textwidth]{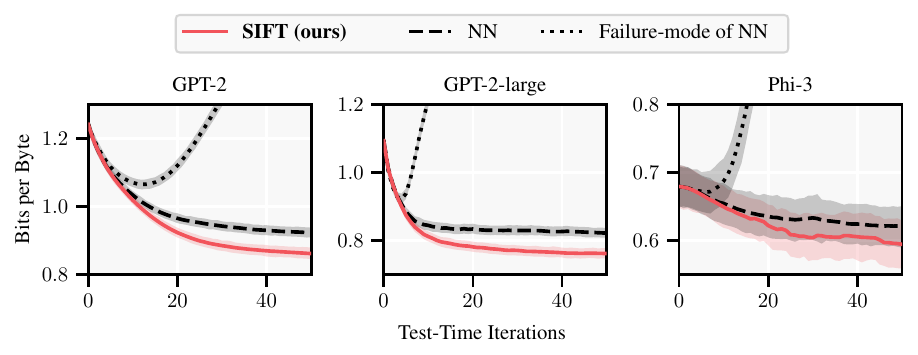}
  \vspace{-10pt}
  \caption{Bits per byte ($\downarrow$ better) against the number of test-time iterations with various base models.}
  \label{fig:spotlight_extended}
\end{figure}

\section{Extended Results}\label{sec:full_results}

This section includes additional per-dataset results to support our findings on active fine-tuning and test-time fine-tuning.

\subsection{Active Fine-Tuning}\label{sec:full_results_aft}

We compare \method against the data selection baselines Uncertainty Sampling (US), Nearest Neighbor retrieval (NN), and the failure-mode of Nearest Neighbor retrieval (with information duplication) that repeatedly retrieves the same point (NN-F).
Our results with GPT-2 as base model are summarized in the main text in \cref{table:main_results_per_dataset}.
\begin{itemize}
  \item In \cref{table:gpt2large_aft}, we include the comparison with \textbf{GPT-2-large}.
  \item In \cref{table:phi3_aft}, we include the comparison with \textbf{Phi-3}.
\end{itemize}
We find that our results on GPT-2 are consistent across all models.
In particular, test-time fine-tuning with \method improves the base model on \emph{all} datasets of the Pile, often significantly.
\method outperforms Uncertainty Sampling and Nearest Neighbor retrieval consistently.
Notably, we find that the improvement of \method over Nearest Neighbor retrieval is larger with stronger base models, indicating that informativeness of data becomes more important the stronger the base model.

\subsection{Test-Time Fine-Tuning}\label{sec:full_results_ttft}

We compare the in-context baseline against test-time fine-tuning.
\begin{itemize}
  \item In \cref{table:gpt2_ttft}, we include the comparison with \textbf{GPT-2}.
  \item In \cref{table:gpt2large_ttft}, we include the comparison with \textbf{GPT-2-large}.
  \item In \cref{table:phi3_ttft}, we include the comparison with \textbf{Phi-3}.
\end{itemize}
We find that test-time fine-tuning consistently outperforms in-context learning with GPT-2 and GPT-2-large.
With Phi-3, in-context learning and test-time fine-tuning have roughly matching performance, though test-time fine-tuning is more computationally efficient~(cf.~\cref{fig:absolute_perf_larger_models}).
Interestingly, we observe that test-time fine-tuning leads to large gains on math (``DeepMind Math'') and coding (``GitHub'') on all models, two tasks that require more complex reasoning.

\begin{table}[H]
  \centering
  \begin{tabularx}{0.69\textwidth}{@{\hspace{0mm}}l@{\hspace{2mm}}c@{\hspace{2mm}}c@{\hspace{2mm}}c@{\hspace{2mm}}c@{\hspace{2mm}}r}
    \toprule
    & \textbf{US} & \textbf{NN} & \textbf{\textsc{NN-F}} & \textbf{\method} & $\Delta$ \\[1pt]
    \hline \\[-6pt]
    NIH Grants & 96.6\reso{1.6} & 77.9\reso{4.8} & 107.6\reso{19.8} & \textbf{51.9}\reso{9.3} & \downres{26.0} \\[2pt]
    US Patents & 86.8\reso{2.3} & 78.9\reso{2.6} & 129.1\reso{7.7} & \textbf{64.7}\reso{3.8} & \downres{14.2} \\[2pt]
    Enron Emails & \textbf{73.9}\reso{12.3} & \textbf{68.6}\reso{13.6} & 102.9\reso{23.1} & \textbf{55.5}\reso{12.2} & \downres{13.1} \\[2pt]
    GitHub & 45.2\reso{2.4} & 42.8\reso{2.2} & 62.0\reso{4.5} & \textbf{31.0}\reso{2.2} & \downres{11.8} \\[2pt]
    Wikipedia & 71.0\reso{2.0} & 71.5\reso{2.0} & 141.3\reso{3.5} & \textbf{64.4}\reso{2.2} & \downres{6.6} \\[2pt]
    PubMed Abstr. & 94.5\reso{0.4} & 93.7\reso{0.6} & 202.6\reso{1.6} & \textbf{87.8}\reso{0.7} & \downres{5.9} \\[2pt]
    ArXiv & 90.6\reso{1.8} & 90.2\reso{2.0} & 175.8\reso{5.7} & \textbf{84.8}\reso{2.1} & \downres{5.4} \\[2pt]
    Hacker News & \textbf{79.4}\reso{2.6} & \textbf{79.0}\reso{2.9} & 138.7\reso{4.4} & \textbf{75.6}\reso{3.6} & \downres{3.4} \\[2pt]
    Stack Exchange & 84.1\reso{0.7} & 84.6\reso{0.8} & 165.2\reso{1.8} & \textbf{80.7}\reso{0.9} & \downres{3.4} \\[2pt]
    Common Crawl & 93.7\reso{0.6} & 89.9\reso{0.7} & 163.6\reso{2.1} & \textbf{87.1}\reso{1.0} & \downres{2.8} \\[2pt]
    PubMed Central & \textbf{87.9}\reso{2.7} & \textbf{87.6}\reso{2.7} & 157.8\reso{4.6} & \textbf{85.4}\reso{3.1} & \downres{2.2} \\[2pt]
    FreeLaw & \textbf{66.8}\reso{4.2} & \textbf{67.4}\reso{4.1} & 132.0\reso{6.4} & \textbf{68.3}\reso{4.2} & \upres{1.5} \\[2pt]
    DeepMind Math & \textbf{71.2}\reso{2.2} & \textbf{72.2}\reso{2.0} & 186.1\reso{4.1} & \textbf{74.2}\reso{2.3} & \upres{3.0} \\[2pt]
    \hline \\[-8pt]
    \emph{All} & 82.6\reso{0.6} & 80.6\reso{0.6} & 153.3\reso{1.4} & \textbf{74.9}\reso{0.7} & \downres{5.7} \\
    \bottomrule
  \end{tabularx}
  \caption{Results with \textbf{GPT-2-large}. Bits per byte (in \% relative to the base model, $\downarrow$) after $50$ test-time iterations on individual datasets of the Pile. We only include datasets with at least $10$ examples in our test set. \textbf{Bold} numbers denote the best performing selected subset. Numbers in parentheses are standard errors. $\Delta$ denotes the performance gain of \method over the strongest baseline.}
  \label{table:gpt2large_aft}
\end{table}

\begin{table}[H]
  \centering
  \begin{tabularx}{0.69\textwidth}{@{\hspace{0mm}}l@{\hspace{2mm}}c@{\hspace{2mm}}c@{\hspace{2mm}}c@{\hspace{2mm}}c@{\hspace{2mm}}r}
    \toprule
    & \textbf{US} & \textbf{NN} & \textbf{\textsc{NN-F}} & \textbf{\method} & $\Delta$ \\[1pt]
    \hline \\[-6pt]
    GitHub & 80.6 & 80.8 & 105.2 & 46.5 & \downres{34.1} \\[2pt]
    US Patents & 95.4 & 94.2 & 274.6 & 83.7 & \downres{10.5} \\[2pt]
    Enron Emails & 113.6 & 86.6 & 319.9 & 78.7 & \downres{7.9} \\[2pt]
    Wikipedia & 84.6 & 85.5 & 263.2 & 79.2 & \downres{5.4} \\[2pt]
    PubMed Abstr. & 93.5 & 93.3 & 301.8 & 89.5 & \downres{3.8} \\[2pt]
    NIH Grants & 100.4 & 100.1 & 327.6 & 98.6 & \downres{1.5} \\[2pt]
    ArXiv & 95.5 & 96.5 & 282.4 & 94.3 & \downres{1.2} \\[2pt]
    Common Crawl & 95.3 & 94.9 & 257.0 & 93.7 & \downres{1.2} \\[2pt]
    PubMed Central & 80.3 & 82.1 & 204.9 & 79.7 & \downres{0.6} \\[2pt]
    DeepMind Math & 76.4 & 75.5 & 221.4 & 75.3 & \downres{0.2} \\[2pt]
    Hacker News & 95.1 & 94.8 & 243.8 & 95.0 & \upres{0.2} \\[2pt]
    FreeLaw & 66.9 & 67.8 & 178.0 & 67.2 & \upres{0.3} \\[2pt]
    Stack Exchange & 99.7 & 98.7 & 309.9 & 99.4 & \upres{0.7} \\[2pt]
    \hline \\[-8pt]
    \emph{All} & 92.0\reso{1.1} & 91.6\reso{1.1} & 256.6\reso{7.1} & \textbf{85.7}\reso{2.0} & \downres{5.9} \\
    \bottomrule
  \end{tabularx}
  \caption{Results with \textbf{Phi-3}. Bits per byte (in \% relative to the base model, $\downarrow$) after $50$ test-time iterations on individual datasets of the Pile. \textbf{Bold} numbers denote the best performing selected subset. Numbers in parentheses are standard errors. $\Delta$ denotes the performance gain of \method over the strongest baseline.}
  \label{table:phi3_aft}
\end{table}

\begin{table}[H]
  \centering
  \begin{tabularx}{0.52\textwidth}{@{\hspace{0mm}}l@{\hspace{2mm}}c@{\hspace{2mm}}c@{\hspace{2mm}}r}
    \toprule
    & \textbf{Context} & \textbf{Fine-Tuning} & $\Delta$ \\[1pt]
    \hline \\[-6pt]
    GitHub & 74.5\reso{2.5} & \textbf{28.6}\reso{2.2} & \downres{45.9} \\[2pt]
    DeepMind Math & 100.4\reso{0.1} & \textbf{70.1}\reso{2.1} & \downres{30.3} \\[2pt]
    US Patents & 86.8\reso{2.5} & \textbf{62.2}\reso{3.6} & \downres{24.6} \\[2pt]
    Enron Emails & \textbf{73.3}\reso{9.8} & \textbf{52.4}\reso{11.8} & \downres{20.9} \\[2pt]
    FreeLaw & 85.5\reso{4.0} & \textbf{65.5}\reso{4.2} & \downres{20.0} \\[2pt]
    Stack Exchange & 96.7\reso{0.3} & \textbf{77.0}\reso{0.7} & \downres{19.7} \\[2pt]
    ArXiv & 99.2\reso{1.4} & \textbf{81.6}\reso{1.9} & \downres{17.6} \\[2pt]
    Wikipedia & 77.4\reso{2.1} & \textbf{63.7}\reso{2.1} & \downres{13.7} \\[2pt]
    PubMed Central & 92.8\reso{3.1} & \textbf{80.6}\reso{2.7} & \downres{12.2} \\[2pt]
    Hacker News & 89.0\reso{3.8} & \textbf{77.8}\reso{3.5} & \downres{11.2} \\[2pt]
    NIH Grants & \textbf{63.7}\reso{9.5} & \textbf{52.9}\reso{9.0} & \downres{10.8} \\[2pt]
    Common Crawl & 93.4\reso{0.7} & \textbf{86.1}\reso{0.9} & \downres{7.3} \\[2pt]
    PubMed Abstr. & 91.8\reso{0.6} & \textbf{84.8}\reso{0.7} & \downres{7.0} \\[2pt]
    \hline \\[-8pt]
    \emph{All} & 89.3\reso{0.5} & \textbf{73.2}\reso{0.7} & \downres{16.1} \\
    \bottomrule
  \end{tabularx}
  \caption{Comparison between the in-context baseline and test-time fine-tuning with \textbf{GPT-2}. Bits per byte (in \% relative to the base model, $\downarrow$) after $50$ test-time iterations on individual datasets of the Pile. We only include datasets with at least $10$ examples in our test set. \textbf{Bold} numbers denote the best performing selected subset. Numbers in parentheses are standard errors. $\Delta$ denotes the performance gain of test-time fine-tuning over in-context learning.}
  \label{table:gpt2_ttft}
\end{table}

\begin{table}[H]
  \centering
  \begin{tabularx}{0.52\textwidth}{@{\hspace{0mm}}l@{\hspace{2mm}}c@{\hspace{2mm}}c@{\hspace{2mm}}r}
    \toprule
    & \textbf{Context} & \textbf{Fine-Tuning} & $\Delta$ \\[1pt]
    \hline \\[-6pt]
    GitHub & 74.6\reso{2.5} & \textbf{31.0}\reso{2.2} & \downres{43.6} \\[2pt]
    DeepMind Math & 100.2\reso{0.7} & \textbf{74.2}\reso{2.3} & \downres{26.0} \\[2pt]
    US Patents & 87.4\reso{2.5} & \textbf{64.7}\reso{3.8} & \downres{22.7} \\[2pt]
    FreeLaw & 87.2\reso{3.6} & \textbf{68.3}\reso{4.2} & \downres{18.9} \\[2pt]
    Hacker News & 92.6\reso{2.7} & \textbf{75.6}\reso{3.6} & \downres{17.0} \\[2pt]
    Stack Exchange & 97.2\reso{0.4} & \textbf{80.7}\reso{0.9} & \downres{16.5} \\[2pt]
    NIH Grants & \textbf{67.7}\reso{9.4} & \textbf{51.9}\reso{9.3} & \downres{15.8} \\[2pt]
    Enron Emails & \textbf{71.9}\reso{10.2} & \textbf{55.5}\reso{12.2} & \downres{15.5} \\[2pt]
    ArXiv & 98.8\reso{1.8} & \textbf{84.8}\reso{2.1} & \downres{14.0} \\[2pt]
    Wikipedia & 76.6\reso{2.1} & \textbf{64.4}\reso{2.2} & \downres{12.2} \\[2pt]
    PubMed Central & 92.3\reso{3.3} & \textbf{85.4}\reso{3.1} & \downres{6.9} \\[2pt]
    Common Crawl & 93.5\reso{0.7} & \textbf{87.1}\reso{1.0} & \downres{6.4} \\[2pt]
    PubMed Abstr. & 91.6\reso{0.6} & \textbf{87.8}\reso{0.7} & \downres{3.8} \\[2pt]
    \hline \\[-8pt]
    \emph{All} & 89.4\reso{0.5} & \textbf{74.9}\reso{0.7} & \downres{14.5} \\
    \bottomrule
  \end{tabularx}
  \caption{Comparison between the in-context baseline and test-time fine-tuning with \textbf{GPT-2-large}. Bits per byte (in \% relative to the base model, $\downarrow$) after $50$ test-time iterations on individual datasets of the Pile. We only include datasets with at least $10$ examples in our test set. \textbf{Bold} numbers denote the best performing selected subset. Numbers in parentheses are standard errors. $\Delta$ denotes the performance gain of test-time fine-tuning over in-context learning.}
  \label{table:gpt2large_ttft}
\end{table}

\begin{table}[H]
  \centering
  \begin{tabularx}{0.52\textwidth}{@{\hspace{0mm}}l@{\hspace{2mm}}c@{\hspace{2mm}}c@{\hspace{2mm}}r}
    \toprule
    & \textbf{Context} & \textbf{Fine-Tuning} & $\Delta$ \\[1pt]
    \hline \\[-6pt]
    DeepMind Math & 100.8 & 75.3 & \downres{25.5} \\[2pt]
    GitHub & 71.3 & 46.5 & \downres{24.8} \\[2pt]
    FreeLaw & 78.2 & 67.2 & \downres{11.0} \\[2pt]
    ArXiv & 101.0 & 94.3 & \downres{6.4} \\[2pt]
    Enron Emails & 81.8 & 78.7 & \downres{3.1} \\[2pt]
    Hacker News & 97.6 & 95.0 & \downres{2.6} \\[2pt]
    Stack Exchange & 100.9 & 99.4 & \downres{1.4} \\[2pt]
    PubMed Central & 79.9 & 79.7 & \downres{0.2} \\[2pt]
    US Patents & 83.3 & 83.7 & \upres{0.4} \\[2pt]
    Wikipedia & 77.1 & 79.2 & \upres{2.1} \\[2pt]
    NIH Grants & 95.1 & 98.6 & \upres{3.5} \\[2pt]
    Common Crawl & 89.9 & 93.7 & \upres{3.8} \\[2pt]
    PubMed Abstr. & 85.7 & 89.5 & \upres{3.8} \\[2pt]
    \hline \\[-8pt]
    \emph{All} & \textbf{87.1}\reso{1.7} & \textbf{85.7}\reso{2.0} & \downres{1.4} \\
    \bottomrule
  \end{tabularx}
  \caption{Comparison between the in-context baseline and test-time fine-tuning with \textbf{Phi-3}. Bits per byte (in \% relative to the base model, $\downarrow$) after $50$ test-time iterations on individual datasets of the Pile. \textbf{Bold} numbers denote the best performing selected subset. Numbers in parentheses are standard errors. $\Delta$ denotes the performance gain of test-time fine-tuning over in-context learning.}
  \label{table:phi3_ttft}
\end{table}

\section{\method Maximizes Information Gain}\label{sec:method_maximizes_info_gain}

We discuss here briefly that \method can be interpreted as maximizing the information gain of data~$X_n$ on the response to the prompt~$\prompt$.\looseness=-1

This probabilistic interpretation takes the perspective that the sequence model predicting the next token is a \emph{probabilistic model with a prior belief} over its state $\mW$ which induces an epistemic prior belief over what might be the next token.\footnote{This \emph{epistemic} uncertainty is distinct from the irreducible \emph{aleatoric} uncertainty of natural language, such as uncertainty about the continuation of ``I love \dots''.}
Our main text describes a closed loop where this sequence model interacts with a non-parametric memory (i.e., the data space) to update its epistemic beliefs about $\mW$, obtaining posterior beliefs $\mW \mid D$ conditional on the selected data $D$.
Again, these posterior epistemic beliefs induce an epistemic uncertainty over what might be the next token.
We discuss in the following how \method can be interpreted probabilistically; as the model interacting with the non-parametric memory with the goal of reducing its posterior uncertainty about the next token.\looseness=-1

Our brief overview will proceed as follows: \begin{itemize}
  \item We establish fundamentals from information theory and Gaussian processes, which are a tractable probabilistic model~(\sref{sec:method_maximizes_info_gain:preliminaries}).
  \item We define the prior belief and probabilistic observation model and derive the posterior belief~(\sref{sec:method_maximizes_info_gain:probabilistic_model}).
  \item We show that, in this probabilistic model, \method can be interpreted as maximizing the information gain of the data about the response to the prompt~$\prompt$~(\sref{sec:method_maximizes_info_gain:probabilistic_interpretation}).
  \item We show that balancing relevance and diversity of data is a natural consequence of maximizing information gain~(\sref{sec:method_maximizes_info_gain:balancing_relevance_diversity}).
\end{itemize}

\method uses relatively simple probabilistic surrogate models that are tractable, and which remarkably lead to strong empirical performance.
\cite{hubotter2024transductive} cover the probabilistic interpretation in greater detail.\looseness=-1

\subsection{Preliminaries: Information Theory and Gaussian Processes}\label{sec:method_maximizes_info_gain:preliminaries}

\paragraph{Information Theory}

We briefly recap several important concepts from information theory.
The (differential) entropy~${\H{\vf} \defeq \E[p(\vf)]{- \log p(\vf)}}$ of a random vector~$\vf$ is one possible measure of uncertainty about~$\vf$.
Here, $- \log p(\vf)$ is also called the suprisal about an event with density $p(\vf)$.
The entropy can be interpreted as the expected suprisal about~$\vf$ upon realization.
The conditional entropy ${\H{\vf}[\vy] \defeq \E[p(\vf, \vy)]{- \log p(\vf \mid \vy)}}$ is the (expected) posterior uncertainty about~$\vf$ after observing the random vector~$\vy$.
The information gain~${\I{\vf}{\vy} = \H{\vf} - \H{\vf}[\vy]}$ measures the (expected) reduction in uncertainty about~$\vf$ due to~$\vy$.
Refer to \cite{cover1999elements} for more details.\looseness=-1

\paragraph{Gaussian Processes}

The stochastic process $f$ is a Gaussian process (GP, \cite{williams2006gaussian}), denoted ${f \sim \GP{\mu}{k}}$, with mean function $\mu$ and kernel $k$ if for any finite subset ${X = \{\vx_1, \dots, \vx_n\} \subseteq \spX}$, ${\vfsub{X} \sim \N{\vmusub{X}}{\mKsub{X}}}$ is jointly Gaussian with mean vector ${(\vmusub{X})_i = \mu(\vx_i)}$ and covariance matrix~${(\mKsub{X})_{i,j} = k(\vx_i, \vx_j)}$.
A Gaussian process can be interpreted as capturing an epistemic functional belief, i.e., a belief over functions.
Our linear surrogate model from \cref{assumption:linear} leads to a Gaussian process with the linear kernel described in the main text.
That is, our surrogate model assumption can be interpreted as the \emph{prior belief} that the ground truth function predicting the next token is a logit-linear function in a latent representation space.
This is closely linked to the hypothesis that LLMs learn linear representations of high-level concepts, which is widely known as the ``linear representation hypothesis''~\citep[e.g.,][]{park2024linear,mikolov2013linguistic,arora2016latent,elhage2022toy}.
There are two lenses through which to view such linear Gaussian processes: the \emph{weight-space} view which considers a belief about weights $\mW$, or the \emph{function-space} view which directly considers the belief about functions $f$.
Both views are equivalent, and we will focus on the function-space view in the following.\looseness=-1

For Gaussian random vectors $\vf$ and $\vy$, the entropy is ${\H{\vf} = \frac{d}{2} \log(2 \pi e) + \frac{1}{2} \log \det{\Var{\vf}}}$ and the information gain is ${\I{\vf}{\vy} = \frac{1}{2} \parentheses*{\log\det{\Var{\vf}} - \log\det{\Var{\vf \mid \vy}}}}$.\looseness=-1

\subsection{Probabilistic Observation Model}\label{sec:method_maximizes_info_gain:probabilistic_model}

We will focus in the following on the case of regression, which we introduced in \cref{sec:proofs_regression}.
We suppose that observations of $f$ follow the probabilistic model \begin{align*}
  y_{\vx} = f_{\vx} + \varepsilon_{\vx},
\end{align*} where we make the following assumptions about the prior distribution of~$f$ and the noise~$\varepsilon_{\vx}$:\looseness=-1

\begin{assumption}[Gaussian prior]\label{asm:bayesian_prior}
   We assume that ${f \sim \GP{\mu}{k}}$ with known mean function $\mu$ and kernel $k$.
\end{assumption}

\begin{assumption}[Gaussian noise]\label{asm:bayesian_noise}
   We assume that the noise $\varepsilon_{\vx}$ is mutually independent and zero-mean Gaussian with known variance $\rho^2 > 0$.
\end{assumption}

Under \cref{asm:bayesian_prior,asm:bayesian_noise}, the posterior distribution of~$f$ after observing points~$X$ with values~$\vysub{X}$ is~$\GP{\mu_n}{k_n}$ with \begin{align*}
    \mu_n(\vx) &= \mu(\vx) + \transpose{\vk_X}(\vx) \inv{(\mKsub{XX} + \rho^2 \mI)} (\vysub{X} - \vmusub{X}), \\
    k_n(\vx,\vxp) &= k(\vx,\vxp) - \transpose{\vk_X}(\vx) \inv{(\mKsub{XX} + \rho^2 \mI)} \vk_X(\vxp), \\
    \sigma_n^2(\vx) &= k_n(\vx,\vx).
\end{align*}

\subsection{The Probabilistic Interpretation of \method}\label{sec:method_maximizes_info_gain:probabilistic_interpretation}

Observe that the above definition of $\sigma_n^2$ matches the definition from \cref{eq:variance}.\footnote{Notably, it can also be shown that $\mu_n$ is the closed-form solution to the regularized loss from \cref{eq:reg_nll_loss_reg}.}
That is, under the above probabilistic model, \begin{align*}
  \sigma_n^2(\vx) = \Var{f(\vx) \mid y_{1:n}}.%
\end{align*}
As such, \methodp is minimizing the variance of the response to the prompt~$\prompt$ after observing the data~$X_n$: \begin{align}
  \vx_{n+1} &= \argmin_{\vx \in \spD} \Var{f(\prompt) \mid y_{1:n}, y(\vx)}. \nonumber
  \intertext{By simple algebraic manipulation this can be seen to be equivalent to maximizing the information gain of the data on the response to the prompt~$\prompt$:}
  \vx_{n+1} &= \argmax_{\vx \in \spD} \frac{1}{2} \Big(\underbrace{\log \Var{f(\prompt) \mid y_{1:n}}}_{\const} - \log \Var{f(\prompt) \mid y_{1:n}, y(\vx)}\Big) \nonumber \\
  &= \argmax_{\vx \in \spD} \I{f(\prompt)}{y(\vx)}[y_{1:n}]. \label{eq:probabilistic_interpretation}
\end{align}

\paragraph{Discussion}

The above offers a very intuitive probabilistic interpretation of \methodp.
In this probabilistic interpretation, the regularization parameter $\lambda'$ of \method is equal to the observation noise~$\rho^2$.
Intuitively, larger observation noise leads to slower convergence of the estimate of $f$, analogously to our discussion of larger regularization parameter and smaller step size in \cref{prop:reg_loss_min_vs_ttft}.\looseness=-1

The reason why \methodp can be interpreted \emph{both} as minimizing the variance and as minimizing the entropy of the response to the prompt~$\prompt$ is that for Gaussians, variance is proportional to the entropy of the response to the prompt~$\prompt$.
As observed by \cite{hubotter2024transductive}, if learning is amortized with respect to multiple prompts~$\{\prompt_1, \dots, \prompt_m\} = \spA$, this ceases to be the case and the two objectives lead to different data selection schemes.
It appears to be a special property of non-amortized transductive active learning that measures of uncertainty and resulting data selection schemes are interchangeable.\looseness=-1

A quick remark is in order.
\method does not only maximize the marginal information gain as shown in \cref{eq:probabilistic_interpretation}, if \cref{assumption:submodular} is satisfied, it also maximizes the joint information gain $\I{f(\prompt)}{y_{1:n}}$.
That is, in this case the ``entropy reduction'' of data~$X_n$ selected by \method achieves a constant factor approximation of the maximum possible joint information gain $\max_{X \subseteq \spD, \abs{X} \leq n} \I{f(\prompt)}{\vy(X)}$ due to the seminal result on monotone submodular function maximization of \cite{nemhauser1978analysis}.\looseness=-1

\subsection{How \method Balances Relevance and Diversity}\label{sec:method_maximizes_info_gain:balancing_relevance_diversity}

In \sref{sec:method_details:balancing_relevance_diversity}, we discussed how \method chooses data that is both relevant and diverse.
The probabilistic interpretation offers a simple explanation for how this behavior naturally emerges from selecting the most informative data.
To this end, observe that the information gain from \cref{eq:probabilistic_interpretation} can be expressed as\looseness=-1 \begin{align}
  \I{f(\prompt)}{y(\vx)}[y_{1:n}] = \underbrace{\I{f(\prompt)}{y(\vx)}}_{\text{relevance}} - \underbrace{\I{f(\prompt)}{y(\vx); y_{1:n}}}_{\text{redundancy}} \label{eq:information_gain_decomposition}
\end{align} where $\I{\vf}{\vx; \vy} \defeq \I{\vf}{\vx} - \I{\vf}{\vx}[\vy] = \I{\vf}{\vx} + \I{\vf}{\vy} - \I{\vf}{\vx, \vy}$ denotes the multivariate information gain~\citep{murphy2023probabilistic}.
The multivariate information gain is a measure of the redundancy of $\vx$ and $\vy$ in predicting $\vf$, and is therefore often called simply ``redundancy'' (which is the opposite of ``synergy'').
\Cref{eq:information_gain_decomposition} shows that the balancing of relevance and non-redundancy (i.e., diversity) arises naturally from maximizing the information gain.
Note that the tradeoff between relevance and diversity is governed by the noise parameter $\rho^2 \approx \lambda'$ of the probabilistic model.\looseness=-1

\subsection{The Perspective of Classification}

The above interpretation takes the perspective of regression.
However, the above interpretation can be extended to classification.
We will focus here on the case of binary classification for notational convenience, but the same argument can be made for multi-class classification \citep[Section 3.5]{williams2006gaussian}.\looseness=-1

In (binary) Gaussian Process Classification the logit $f \sim \GP{\mu}{k}$ is modeled as a Gaussian process, and the likelihood follows the model introduced in \cref{sec:preliminaries}: $y(\vx) \sim \mathrm{Bern}(s(f(\vx)))$ where we have Bernoulli rather than categorical feedback and use the logistic function $s(a) \defeq 1/(1+e^{-a})$ rather than the softmax by virtue of restricting to binary classification.\looseness=-1

The standard approach \citep[Section 3.4]{williams2006gaussian} is to approximate the posterior distribution of the latent function $f$ given observations $y_{1:n}$ by a Gaussian using Laplace's method.
This Gaussian can be shown to have covariance $\smash{\inv{(\inv{\mKsub{X_n}} + \mW)}}$ with $\mW \succeq \inv{\kappa} \mI_n$ where $\smash{\kappa \defeq \sup_{a \leq B} 1/\dot{s}(a)}$ and $\smash{\dot{s}(a) = s(a)(1-s(a))}$ denotes the derivative of the logistic function.\footnote{In the binary case, this is equal to the more general $\kappa$ from the main text.}
It is then straightforward to derive that \begin{align*}
  \sigma_n^2(\prompt) &= k(\prompt,\prompt) - \transpose{\vk_{X_n}}(\prompt) \inv{(\mKsub{X_n} + \inv{\mW})} \vk_{X_n}(\prompt) \\
  &\leq k(\prompt,\prompt) - \transpose{\vk_{X_n}}(\prompt) \inv{(\mKsub{X_n} + \kappa \mI_n)} \vk_{X_n}(\prompt)
\end{align*}
Thus, \method minimizes a tight upper bound to the (approximate) posterior variance of the latent function $f$ at the prompt $\prompt$.
The same relationship to maximizing information gain that was discussed above applies.\looseness=-1

\section{Efficient Computation of \method}\label{sec:efficient_computation_via_conditional_embeddings}

In the following, we show how to select data via \method at low computational cost.
Our implementation extends the Faiss library~\citep{johnson2019billion,douze2024faiss} for Nearest Neighbor retrieval.
We open-source the \href{https://github.com/jonhue/activeft}{\texttt{activeft}} (Active Fine-Tuning) library which can be used as a drop-in replacement for Nearest Neighbor retrieval. %

In our runtime analysis, we will denote by $K$ the size of the data space $\spD$, and by $N$ the number of points to be selected.
We describe two implementations of \method: \begin{enumerate}
  \item The first exact implementation has sequential computation cost $\BigO{K^2 N}$, however, computation can be effectively parallelized on a GPU.
  \item The second ``fast'' implementation assumes submodularity (i.e., \cref{assumption:submodular}) and has computation cost $\smash{\BigOTil{K + N^3}}$ where $\smash{\BigOTil{\cdot}}$ suppresses log-factors.
  This cost is only marginally above the cost of Nearest Neighbor retrieval.
\end{enumerate}
Both implementations achieve virtually identical performance gains~(cf.~\figref{fig:k_ablation_ext}{right}), which is further evidence that \cref{assumption:submodular} is satisfied in our language modeling setting.

\subsection{Exact Implementation}

The central object of the first implementation is the conditional kernel matrix of the data space given the selected points $X_n$: \begin{align*}
  \mKsub{n} \defeq \mKsub{\spD} - \mKsub{\spD, X_n} \inv{(\mKsub{X_n} + \lambda' \mI_n)} \mKsub{X_n, \spD}.
\end{align*}
The entries $k_n(\vx,\vxp)$ of this matrix can be updated efficiently via the following relation~\citep[Appendix F]{chowdhury2017kernelized} arising from properties of the Schur complement: \begin{align}
  k_{n}(\vx,\vxp) = k_{n-1}(\vx,\vxp) - \frac{k_{n-1}(\vx,\vx_n) k_{n-1}(\vx_n,\vxp)}{k_{n-1}(\vx_n,\vx_n) + \lambda'}. \label{eq:recursive_kernel_update}
\end{align}
The implementation is detailed in \cref{alg:first_implementation}.
The computation of the objective value in \cref{alg:first_implementation:selection} and the kernel matrix update in \cref{alg:first_implementation:update} can be parallelized on a GPU.
Thus, the main bottleneck of this implementation is the requirement that the kernel matrix of size $K \times K$ fits onto a GPU.
In case this is not possible, such as with large data spaces, the following two sections detail methods to reduce the computational cost.\looseness=-1

\begin{algorithm}
  \caption{\methodl}\label{alg:first_implementation}
  \begin{algorithmic}[1]
  \STATE \textbf{Input:} prompt $\prompt$, data space $\spD$, (initial) kernel matrix $k_0(\vx,\vxp) = \transpose{\vphi(\vx)} \vphi(\vxp),\,\vx,\vxp \in \spD$, number of points to select $N$
  \STATE \textbf{Output:} set of selected points $\{\vx_1, \dots, \vx_N\}$
  \FOR{$n$ from $1$ to $N$}
    \STATE\label{alg:first_implementation:selection} $\vx_n \gets \argmax_{\vx \in \spD} \frac{k_{n-1}^2(\prompt, \vx)}{k_{n-1}(\vx, \vx) + \lambda'}$ \hfill \COMMENT{Select next point}
    \FOR{each $\vx, \vxp \in \spD$}\label{alg:first_implementation:update}
      \STATE Update $k_{n}(\vx, \vxp) \leftarrow k_{n-1}(\vx, \vxp) - \frac{k_{n-1}(\vx, \vx_n) k_{n-1}(\vx_n, \vxp)}{k_{n-1}(\vx_n, \vx_n) + \lambda'}$ \hfill \COMMENT{Update kernel matrix}
    \ENDFOR
  \ENDFOR
  \end{algorithmic}
  \end{algorithm}

\subsection{Fast (Exact) Implementation}

The following ``fast'' implementation of \method rests on the assumption that the objective function optimized by \method is submodular (cf. \cref{assumption:submodular}).
Recall that this objective function can be expressed as $\vx_{n+1} = \argmax_{\vx \in \spD} \psi_{\prompt}(X_n \cup \{\vx\})$ where $\psi_{\prompt}(X) = \sigma_0^2(\prompt) - \sigma_X^2(\prompt)$ denotes the \emph{uncertainty reduction} about $\prompt$ upon fine-tuning the model on data $X$.\looseness=-1

The ``trick'' of the fast implementation is to use a max-heap (with $\BigO{1}$ lookup and $\BigO{\log K}$ insertion) to keep track of upper bounds of $\psi_{\prompt}(X_n \cup \{\vx\})$ for each $\vx \in \spD$.
The upper bounds come directly from the submodularity assumption: \begin{align*}
  \psi_{\prompt}(X_i \cup \{\vx\}) \geq \psi_{\prompt}(X_j \cup \{\vx\}) \quad \forall j \geq i.
\end{align*}
At iteration $n$, we evaluate $\psi_{\prompt}(X_{n-1} \cup \{\vx\})$ for $\vx$ in max-heap order.
As soon as we find a $\vx$ whose re-computed upper bound is smaller than a previously re-computed upper bound, we stop the evaluation.
In the worst case, one might iterate through all $K$ points in each iteration, but in practice, it can sometimes be reasonable to assume that one only needs to consider $\BigO{1}$ points per iteration.
This algorithm is known as the ``lazy greedy algorithm'' in submodular function maximization~\citep{minoux1978accelerated} where it is typically seen to result in large speed-ups.\looseness=-1

We summarize the fast implementation in \cref{alg:second_implementation}.
The kernel matrix $\mK$ tracks the conditional kernel matrix of the prompt $\prompt$ and the previously selected data $X_{n-1}$.
$\mLambda$ tracks the (regularized) inverse of the kernel matrix of the previously selected data $X_{n-1}$.
Whenever necessary, the cached kernel matrix and cached inverse are updated.
We denote by $\mPhi \in \R^{(n-1) \times d}$ the matrix of embeddings of previously selected points and by $\smash{\tilde{\mPhi} \in \R^{n \times d}}$ the same matrix extended by $\vphi(\prompt)$ as the first row.\looseness=-1

Initializing the max-heap takes time $\BigOTil{K}$ and is analogous to standard Nearest Neighbor retrieval.
Additionally, \methodfast performs a data selection loop for $N$ iterations where each operation takes $\BigO{N^2}$ time requiring persistent memory of size $\BigO{N^2}$.
Notably, only the kernel matrix of the prompt and the previously selected data is kept in memory.\looseness=-1

\begin{algorithm}
  \caption{\methodfastl}\label{alg:second_implementation}
  \begin{algorithmic}[1]
  \STATE \textbf{Input:} prompt $\prompt$, data space $\spD$, number of points to select $N$
  \STATE \textbf{Output:} set of selected points $\{\vx_1, \dots, \vx_N\}$
  \\[5pt]
  \COMMENT{Initializing max-heap (``Nearest Neighbor retrieval'')}
  \FOR{$\vx \in \spD$}
    \STATE $\alpha_{\vx} \gets \frac{(\transpose{\vphi(\prompt)} \vphi(\vx))^2}{\norm{\vphi(\vx)}_2^2 + \lambda'}$
    \STATE Insert $(\vx, \alpha_{\vx})$ into max-heap
  \ENDFOR
  \\[5pt]
  \COMMENT{Data selection}
  \STATE Initialize $\mK = \begin{bmatrix}
    \norm{\vphi(\prompt)}_2^2
  \end{bmatrix}$ and $\mLambda$ as an empty square matrix
  \FOR{$n$ from $1$ to $N$}
    \STATE Initialize lower bound $\opt{\alpha} \gets -\infty$
    \FOR{each popped $(\vx,\alpha)$ in max-heap order}
      \IF{$\alpha = \opt{\alpha}$}
        \STATE $\vx_n \gets \vx$ \hfill \COMMENT{$\vx$ maximizes the \methodl objective}
        \STATE \textbf{break}
      \ENDIF
      \STATE\label{alg:second_implementation:selection} $\alpha_{\vx}, \mLambda, \mK' \gets \hyperref[alg:second_implementation:recompute]{\textsc{Recompute}}(\vx, \mK, \mLambda)$ \hfill \COMMENT{Recompute objective value}
      \STATE $\opt{\alpha} \gets \max\{\opt{\alpha}, \alpha_{\vx}\}$
      \STATE Insert $(\vx, \alpha_{\vx})$ into max-heap
    \ENDFOR
    \STATE $\mK \gets \hyperref[alg:second_implementation:update]{\textsc{UpdateState}}(\vx_n, \mK')$ \hfill \COMMENT{Update cached kernel matrix}
  \ENDFOR
  \end{algorithmic}
\end{algorithm}

\begin{algorithm}
  \caption{\methodfastl: \textsc{Recompute}}\label{alg:second_implementation:recompute}
  \begin{algorithmic}[1]
  \STATE \textbf{Input:} prompt $\prompt$, current iteration $n$, candidate $\vx$, cached kernel matrix $\mK$, cached inverse $\mLambda$
  \STATE \textbf{Output:} objective value $\alpha_{\vx}$, updated cached inverse $\mLambda$, expanded kernel matrix $\mK$
  \\[5pt]
  \COMMENT{Expand cached kernel matrix $\mK$ (if required)}
  \IF{$\vx$ is has not been selected yet}
    \STATE
    \COMMENT{Update $\mLambda$ with the Sherman-Morrison-Woodbury formula~\citep{sherman1950adjustment}}
    \STATE Let $i$ denote the size of $\mLambda$
    \IF{$i < n-1$}
      \STATE $\mA \gets \mPhi_i \mPhi_{i+1:n-1}^\top$
      \STATE $\mB \gets \mPhi_{i+1:n-1} \mPhi_{i+1:n-1}^\top$
      \STATE $\mC \gets \inv{(\mB - \mA^\top \mLambda \mA)}$
      \STATE $\mLambda \gets \begin{bmatrix}
        \mLambda + \mLambda \mA \mC \mA^\top \mLambda & -\mLambda \mA \mC \\
        -\mC \mA^\top \mLambda & \mC
      \end{bmatrix}$
    \ENDIF
    \\[5pt]
    \COMMENT{Expand kernel matrix $\mK$}
    \STATE $\mA \gets \mI - \mPhi^\top \mLambda \mPhi$
    \STATE $\vk \gets \tilde{\mPhi} \mA \vphi(\vx)$
    \STATE $\mK \gets \begin{bmatrix}
      \mK & \vk \\
      \vk^\top & \norm{\vphi(\vx)}_{\mA}^2
    \end{bmatrix}
    $
  \ENDIF
  \\[5pt]
  \STATE $\alpha_{\vx} \gets \frac{k^2(\prompt,\vx)}{k(\vx,\vx) + \lambda'}$ \hfill \COMMENT{Compute objective value using the relation from \cref{eq:recursive_kernel_update}}
  \end{algorithmic}
\end{algorithm}

\begin{algorithm}
  \caption{\methodfastl: \textsc{UpdateState}}\label{alg:second_implementation:update}
  \begin{algorithmic}[1]
  \STATE \textbf{Input:} selected point $\vx_n$, expanded kernel matrix $\mK'$
  \STATE \textbf{Output:} new conditional kernel matrix $\mK$
  \\[5pt]
  \COMMENT{Update kernel matrix using the relation from \cref{eq:recursive_kernel_update}}
  \FOR{each $\vx, \vxp \in \{\prompt\} \cup X_n$}
    \STATE Update $k(\vx, \vxp) \leftarrow k'(\vx, \vxp) - \frac{k'(\vx, \vx_n) k'(\vx_n, \vxp)}{k'(\vx_n, \vx_n) + \lambda'}$
  \ENDFOR
  \end{algorithmic}
\end{algorithm}

\subsection{Pre-Selecting Data via Nearest Neighbor Retrieval}\label{sec:pre_selecting_data_via_nearest_neighbor_retrieval}

\begin{figure}
  \centering
  \incplt[0.4\textwidth]{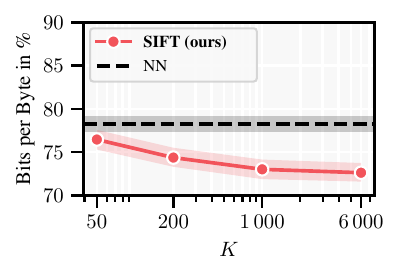}
  \vspace{-15pt}
  \caption{We run \method$\!(\lambda' = 1)$ with various values of $K$ and report the bits per byte ($\downarrow$ better) after $50$ test-time iterations. We find that performance on the Pile plateaus after $K=1`000$. Even at $K=50$, which equals the number of points selected, \method outperforms Nearest Neighbor retrieval due to being able to select the same points multiple times.}
  \label{fig:k_ablation_perf}
\end{figure}

The reason for \methodfast being so efficient is that it effectively ``discards'' all points in $\spD$ that are completely irrelevant to the prompt.
Whereas \method recomputes the objective value of every point in $\spD$ at each iteration, \methodfast only reevaluates points that are potentially relevant.
An alternative to make \method fast is therefore simply to preemptively discard irrelevant points.
In our experiments we do so by pre-selecting a subset of size $K=200$ via Nearest Neighbor retrieval within~$\spD$~(cf.~\cref{sec:experiment_details} for more details).
This step aims to eliminate all points from the data space that \method would not end up picking anyway while retaining a diverse set of relevant points.
\Cref{fig:k_ablation_perf} shows the effect of $K$ on statistical performance and \cref{fig:k_ablation} shows the effect on computational performance.\looseness=-1

\subsection{Future Work: Improving GPU Utilization of \methodfast}

\begin{figure}[]
  \centering
  \incplt[\textwidth]{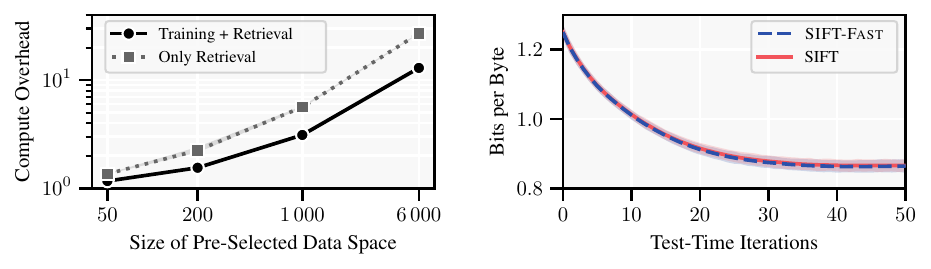}
  \vspace{-15pt}
  \caption{\textbf{Left:}~Computational overhead of \methodfast over Nearest Neighbor retrieval.
  This overhead is larger than the overhead of \method over Nearest Neighbor retrieval~(cf.~\cref{fig:k_ablation}).
  \textbf{Right:}~\methodfast achieves identical statistical performance to \method, which is further evidence that \cref{assumption:submodular} is satisfied in our language modeling setting.\looseness=-1}
  \label{fig:k_ablation_ext}
\end{figure}

In our experiments on the Pile dataset, we find that \methodfast is less efficient than \method~(cf.~\figref{fig:k_ablation_ext}{left}).
We attribute this to the fact that for any given prompt, the closest neighbors in the data space are all relatively similar to the prompt~(cf.~\cref{fig:cosine_similarity_zoom}), meaning that each iteration of \methodfast has to loop (sequentially) over the entire priority queue.
In contrast, \method performs this operation in parallel on a GPU.

\begin{wrapfigure}{r}{0.4\textwidth}
  \vspace{-20pt}
  \centering
  \incplt[0.4\textwidth]{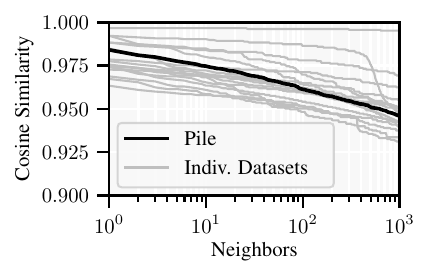}
  \vspace{-15pt}
  \caption{Average cosine similarities of test prompts to closest $1`000$ neighbors in the data space of the Pile; with the Roberta embedding model.}
  \label{fig:cosine_similarity_zoom}
  \vspace{-15pt}
\end{wrapfigure}

We believe that a promising computational approach is to combine the advantages of the \method and \methodfast implementations.
This could be achieved by keeping a large sub-selected kernel matrix on the GPU (akin to the \method implementation) and selectively using the \methodfast implementation if points on the priority queue that are not in the sub-selected kernel matrix may be selected.
This would allow for a more efficient use of the GPU memory of \methodfast, which we expect to yield comparable computational performance to the \method implementation in most cases, while still being able to handle large data spaces.

\section{Experiment Details}\label{sec:experiment_details}

We fine-tune the pre-trained model for a single gradient step each on $N=50$ selected data points.
We evaluate the performance on 1\% of the test instances of the Pile.
We use the Pile training dataset as data space for data selection, which notably does \emph{not} include data from the validation and test sets.\looseness=-1

\paragraph{Evaluation}
We use the standard implementation of the \texttt{lm-evaluation-harness} library~\citep{gao2024framework} for computing the bits per byte.
This implementation computes the log-likelihood of a document using a rolling-window approach, which ensures that the model's maximum context window is fully utilized.\looseness=-1

\paragraph{Truncation of Long Sequences}
Analogously to \cite{hardt2023test}, to generate embeddings, we naively truncate long sequences to the maximum sequence length of the embedding model, that is, we only consider the prefixes of long sequences for data selection.\looseness=-1

\paragraph{Learning Rate and Optimizer}
Following \cite{hardt2023test}, we use the Adam optimizer \citep{kingma2014adam} with $\epsilon$-value~$1\mathrm{e}{-8}$.
We use the default learning rate $5\mathrm{e}{-5}$ of the \texttt{transformers} library~\citep{wolf2020huggingface} unless noted otherwise.
\cite{hardt2023test} used a learning rate of $2\mathrm{e}{-5}$ for their experiments.
We show in \cref{fig:learning_rate_ablation} that $5\mathrm{e}{-5}$ leads to strictly better performance of the Nearest Neighbor baseline.
In our ablation study over metrics for Nearest Neighbor retrieval~(cf.~\cref{fig:metrics_ablation}), which was conducted concurrently, we still used learning rate $2\mathrm{e}{-5}$ of \cite{hardt2023test}.\looseness=-1

\paragraph{Low-Rank Adaptation (LoRA)}
We use LoRA~\citep{hu2021lora} for fine-tuning Phi-3, and also evaluate the performance of LoRA with GPT-2 and GPT-2-large~(cf.~\sref{sec:analysis_test_time_fine_tuning}).
We use LoRAs with rank~$64$, output scaling~$16$, without dropout and bias.
When fine-tuning with LoRA, we use the learning rate $5\mathrm{e}{-4}$.\looseness=-1

\paragraph{Gradient Checkpointing}
We additionally use gradient checkpointing~\citep{chen2016training} for fine-tuning Phi-3 to reduce memory footprint and allow fine-tuning on our hardware.\looseness=-1

\paragraph{Uncopyrighted Pile Dataset}
We use only those datasets of the Pile where our use is in compliance with the terms of service of the data host~\citep{gao2020pile}.
This excludes the Books3, BookCorpus2, OpenSubtitles, YTSubtitles, and OWT2 datasets.

We provide an overview of all hyperparameters of test-time fine-tuning in \cref{table:hyperparameters}.

\begin{table}[H]
  \centering
  \begin{tabular}{lccc}
    \toprule
    \textbf{Model family} & \textbf{GPT-2} & \textbf{Phi-3} & \textbf{Llama-3.2} \\[1pt]
    \hline \\[-6pt]
    $\lambda'$ & $0.01$ & $0.01$ & $0.01$ \\
    Learning rate & $5\mathrm{e}{-5}$ & $5\mathrm{e}{-4}$ & $1\mathrm{e}{-4}$ \\
    Adam's $\epsilon$-value & $1\mathrm{e}{-8}$ & $1\mathrm{e}{-8}$ & $1\mathrm{e}{-8}$ \\
    Max.\ sequence length (in tokens) & $1024$ & $4096$ & $4096$ \\
    LoRA & no & yes & yes \\
    Gradient checkpointing & no & yes & yes \\
    \bottomrule
  \end{tabular}
  \caption{Hyperparameters during test-time fine-tuning, unless noted otherwise.}
  \label{table:hyperparameters}
\end{table}

\clearpage
\subsection{Properties of the Pile Dataset}

\Cref{fig:cosine_similarity} shows the average cosine similarities of test prompts to neighbors in the data space of the Pile.
\Cref{table:dataset_weights} shows the weight of each dataset in the Pile.

\begin{figure}[H]
  \centering
  \incplt[0.6\textwidth]{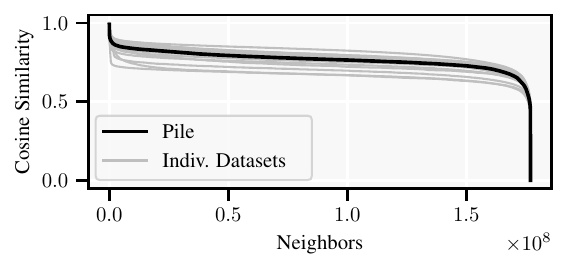}
  \vspace{-15pt}
  \caption{Average cosine similarities of test prompts to neighbors in the data space of the Pile; with the Roberta embedding model.}
  \label{fig:cosine_similarity}
\end{figure}

\begin{table}[H]
  \centering
  \begin{tabular}{lr}
    \toprule
    & \textbf{Weight} \\[1pt]
    \hline \\[-6pt]
    Common Crawl & 24.14\% \\[2pt]
    PubMed Central & 19.19\% \\[2pt]
    ArXiv & 11.94\% \\[2pt]
    GitHub & 10.12\% \\[2pt]
    FreeLaw & 8.18\% \\[2pt]
    Stack Exchange & 6.84\% \\[2pt]
    US Patents & 4.87\% \\[2pt]
    PubMed Abstracts & 4.09\% \\[2pt]
    Project Gutenberg & 2.89\% \\[2pt]
    Wikipedia & 2.04\% \\[2pt]
    DeepMind Math & 1.65\% \\[2pt]
    Ubuntu IRC & 1.17\% \\[2pt]
    EuroParl & 0.97\% \\[2pt]
    Hacker News & 0.83\% \\[2pt]
    PhilPapers & 0.51\% \\[2pt]
    NIH ExPorter Grants & 0.40\% \\[2pt]
    Enron Emails & 0.19\% \\
    \bottomrule
  \end{tabular}
  \caption{Overview of datasets in the (uncopyrighted) Pile. Weight is the percentage of bytes in the final dataset occupied by each dataset. Numbers are taken from~\cite{gao2020pile} and renormalized.}
  \label{table:dataset_weights}
\end{table}

\subsection{In-Context Baseline}

In our evaluation of in-context learning, we use the following format to insert the selected data into the context of the model: We separate all retrieved token sequences with the string \texttt{"{\textbackslash}n{\textbackslash}n"} which can be seen as a paragraph separator, and additionally add this string between the data string and the prompt.\looseness=-1

Notably, our results with in-context learning on GPT-2-large outperform the results previously reported by \cite{hardt2023test}.
We suspect that this is due to a combination of a more reasonable evaluation and using \method as opposed to Nearest Neighbor retrieval for data selection.\looseness=-1

\paragraph{Evaluation of Inference Cost of In-Context Baseline}
We estimate the inference cost of in-context learning as follows.
We evaluate the time it takes compute the rolling log-likelihood of the test instance with context included and subtract the time it takes to compute the rolling log-likelihood of the test instance without context.
This is a lower-bound of the inference cost of in-context learning, as unlike autoregressive generation, computing the log-likelihood is partially parallelized.

To compute the token throughput of the in-context baseline, we divide the total compute time by the number of tokens added to the context.

\subsection{Inference Cost with Test-Time Fine-Tuning}

\Cref{fig:training_time} evaluates the inference cost of test-time fine-tuning on all the Pile and the largest datasets.\looseness=-1

\begin{figure}[H]
  \incplt[\textwidth]{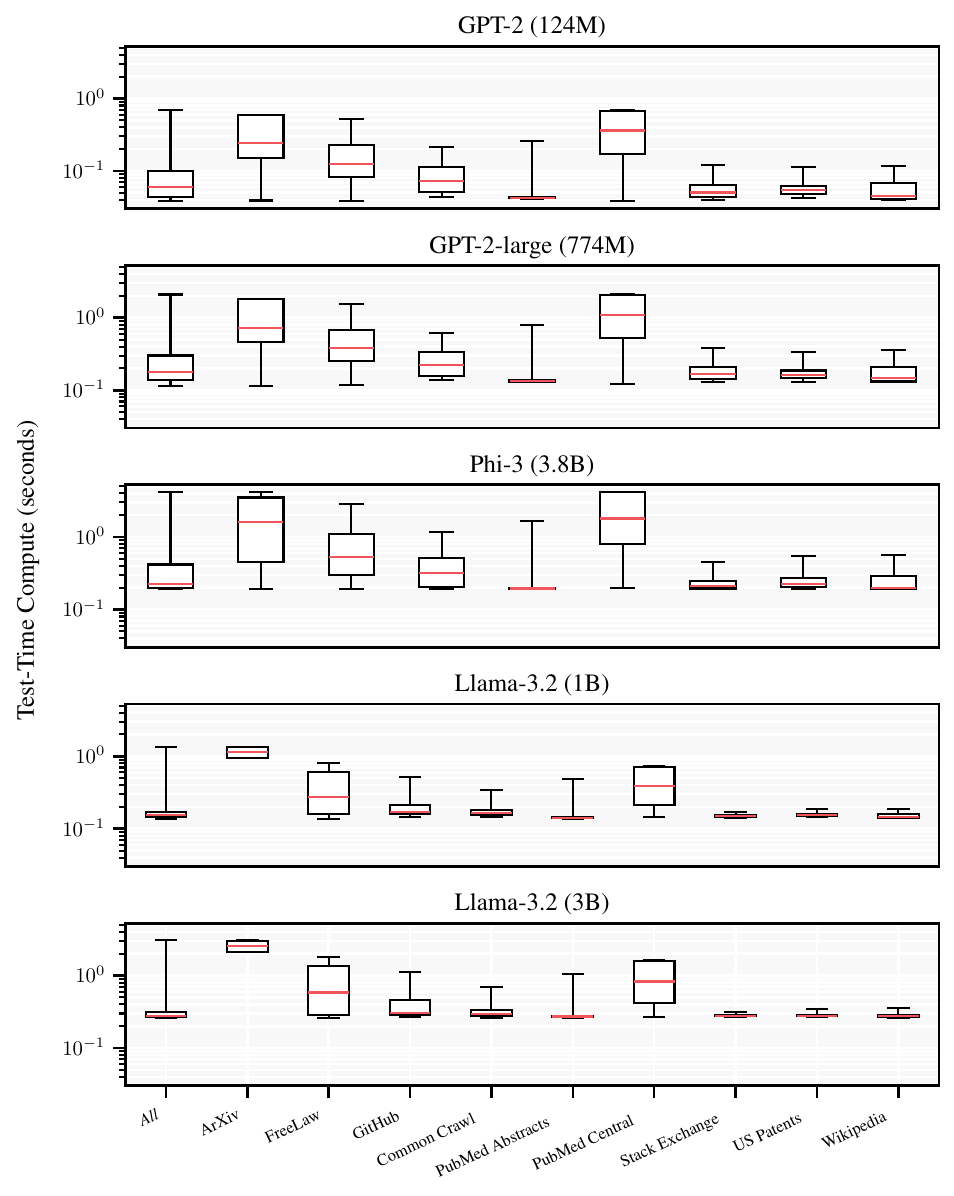}
  \vspace{-10pt}
  \caption{Cost of taking a single gradient step. Results are with an NVIDIA GH200.}
  \label{fig:training_time}
\end{figure}

\clearpage\section{Ablations}\label{sec:additional_results}

This section summarizes ablations that we conducted to investigate test-time fine-tuning and \method.
\begin{itemize}
  \item \textbf{Hyperparameter $\lambda'$}: \cref{table:lambda_results}
  \item \textbf{Learning Curves for Individual Datasets of the Pile}: \cref{fig:main_some_datasets}
  \item \textbf{Learning Rate}: \cref{fig:learning_rate_ablation}
  \item \textbf{Uncertainty Estimation}: \begin{itemize}
    \item Summary of correlations~(\cref{table:uncertainty_estimation})
    \item Visualization of $\sigma_n$~(\cref{fig:uncertainty_estimation_unscaled})
  \end{itemize}
  \item \textbf{Compute-proportional Performance Gain}: \begin{itemize}
    \item Details on \adamethod~(\cref{fig:early_stopping_details})
  \end{itemize}
\end{itemize}

\begin{landscape}
\begin{table}
  \centering
  \small
  \adjustbox{center}{%
  \begin{tabular}{S@{\hspace{2mm}}c@{\hspace{2mm}}c@{\hspace{2mm}}c@{\hspace{2mm}}c@{\hspace{2mm}}c@{\hspace{2mm}}c@{\hspace{2mm}}c@{\hspace{2mm}}c@{\hspace{2mm}}c@{\hspace{2mm}}c@{\hspace{2mm}}c@{\hspace{2mm}}c@{\hspace{2mm}}r}
    \toprule
    & $1\mathrm{e}{-12}$ & $1\mathrm{e}{-8}$ & $1\mathrm{e}{-4}$ & $0.01$ & $0.1$ & $1$ & $10$ & $100$ & $10`000$ &\raisebox{-1pt}{\vline height 7pt}& \textbf{NN} & \textbf{NN-F} & $\Delta$ \\[1pt]
    \hline \\[-6pt]
    NIH Grants & 123.9\reso{6.9} & \underline{79.0}\reso{6.4} & \underline{70.2}\reso{6.7} & \underline{\textbf{53.8}}\reso{8.9} & \underline{\textbf{52.9}}\reso{9.0} & \underline{\textbf{53.3}}\reso{9.1} & \underline{\textbf{54.2}}\reso{9.1} & \underline{\textbf{64.5}}\reso{10.9} & \underline{93.5}\reso{16.9} &\vline& 84.9\reso{2.1} & 91.6\reso{16.7} & \downres{32.0} \\[2pt]
    US Patents & 119.9\reso{3.9} & \underline{82.9}\reso{2.7} & \underline{70.2}\reso{3.1} & \underline{\textbf{62.9}}\reso{3.5} & \underline{\textbf{62.2}}\reso{3.6} & \underline{\textbf{62.7}}\reso{3.7} & \underline{\textbf{63.2}}\reso{3.7} & \underline{72.9}\reso{4.2} & 105.4\reso{6.4} &\vline& 80.3\reso{1.9} & 108.8\reso{6.6} & \downres{18.1} \\[2pt]
    GitHub & 54.6\reso{3.1} & \underline{41.4}\reso{2.2} & \underline{35.9}\reso{2.3} & \underline{\textbf{30.0}}\reso{2.2} & \underline{\textbf{28.6}}\reso{2.2} & \underline{\textbf{28.6}}\reso{2.2} & \underline{\textbf{29.2}}\reso{2.2} & \underline{36.1}\reso{2.6} & 51.3\reso{4.0} &\vline& 42.1\reso{2.0} & 53.2\reso{4.0} & \downres{13.5} \\[2pt]
    Enron Emails & \underline{87.1}\reso{16.5} & \underline{\textbf{68.6}}\reso{9.4} & \underline{\textbf{63.1}}\reso{9.1} & \underline{\textbf{53.1}}\reso{11.4} & \underline{\textbf{52.4}}\reso{11.8} & \underline{\textbf{53.8}}\reso{12.2} & \underline{\textbf{54.1}}\reso{12.2} & \underline{\textbf{59.6}}\reso{13.4} & \underline{89.4}\reso{20.4} &\vline& \textbf{64.4}\reso{10.1} & 91.6\reso{20.6} & \downres{12.0} \\[2pt]
    Common Crawl & 117.9\reso{1.3} & \underline{91.0}\reso{0.5} & \underline{90.7}\reso{0.5} & \underline{\textbf{87.5}}\reso{0.7} & \underline{\textbf{86.1}}\reso{0.9} & \underline{\textbf{87.8}}\reso{0.9} & \underline{88.3}\reso{0.9} & 99.3\reso{1.0} & 146.2\reso{1.6} &\vline& 90.4\reso{0.5} & 148.8\reso{1.5} & \downres{4.3} \\[2pt]
    ArXiv & 145.9\reso{7.0} & \underline{\textbf{83.5}}\reso{1.3} & \underline{\textbf{83.6}}\reso{1.3} & \underline{\textbf{82.5}}\reso{1.4} & \underline{\textbf{81.6}}\reso{1.9} & \underline{\textbf{81.2}}\reso{1.8} & \underline{\textbf{82.8}}\reso{1.9} & 94.6\reso{2.8} & 158.0\reso{6.1} &\vline& 85.0\reso{1.6} & 166.8\reso{6.4} & \downres{3.8} \\[2pt]
    Wikipedia & 104.2\reso{3.0} & \underline{\textbf{64.9}}\reso{2.1} & \underline{\textbf{63.9}}\reso{2.2} & \underline{\textbf{62.7}}\reso{2.1} & \underline{\textbf{63.7}}\reso{2.1} & \underline{\textbf{64.8}}\reso{2.2} & \underline{\textbf{65.6}}\reso{2.3} & 77.5\reso{2.5} & 118.1\reso{3.7} &\vline& \textbf{66.3}\reso{2.0} & 121.2\reso{3.5} & \downres{3.6} \\[2pt]
    PubMed Abstr. & 132.3\reso{1.6} & \underline{87.0}\reso{0.4} & \underline{87.0}\reso{0.4} & \underline{\textbf{84.4}}\reso{0.6} & \underline{\textbf{84.8}}\reso{0.7} & \underline{86.4}\reso{0.7} & \underline{86.7}\reso{0.7} & 102.0\reso{0.9} & 158.9\reso{1.4} &\vline& 87.2\reso{0.4} & 162.6\reso{1.3} & \downres{2.8} \\[2pt]
    PubMed Central & 131.9\reso{4.9} & \underline{\textbf{80.5}}\reso{2.5} & \underline{\textbf{80.0}}\reso{2.7} & \underline{\textbf{79.5}}\reso{2.6} & \underline{\textbf{80.6}}\reso{2.7} & \underline{\textbf{82.0}}\reso{2.7} & \underline{\textbf{83.8}}\reso{2.9} & 98.6\reso{3.7} & 151.6\reso{5.5} &\vline& \textbf{81.7}\reso{2.6} & 155.6\reso{5.1} & \downres{2.2} \\[2pt]
    Stack Exchange & 118.0\reso{1.7} & \underline{\textbf{77.6}}\reso{0.7} & \underline{\textbf{77.6}}\reso{0.7} & \underline{\textbf{76.7}}\reso{0.7} & \underline{\textbf{77.0}}\reso{0.7} & \underline{\textbf{77.8}}\reso{0.7} & \underline{78.1}\reso{0.7} & 85.9\reso{0.9} & 136.9\reso{1.6} &\vline& 78.2\reso{0.7} & 141.9\reso{1.5} & \downres{1.5} \\[2pt]
    Hacker News & 113.9\reso{7.2} & \underline{\textbf{78.8}}\reso{2.7} & \underline{\textbf{78.9}}\reso{2.7} & \underline{\textbf{78.4}}\reso{2.8} & \underline{\textbf{77.8}}\reso{3.5} & \underline{\textbf{78.1}}\reso{3.6} & \underline{\textbf{78.4}}\reso{3.6} & 86.2\reso{3.3} & 131.3\reso{6.2} &\vline& \textbf{79.2}\reso{2.8} & 133.1\reso{6.3} & \downres{1.4} \\[2pt]
    DeepMind Math & 104.7\reso{6.2} & \underline{\textbf{69.3}}\reso{2.1} & \underline{\textbf{69.1}}\reso{2.1} & \underline{\textbf{69.7}}\reso{2.1} & \underline{\textbf{70.1}}\reso{2.1} & \underline{\textbf{69.0}}\reso{2.0} & \underline{\textbf{70.1}}\reso{2.1} & \underline{\textbf{71.9}}\reso{2.2} & 103.5\reso{5.6} &\vline& \textbf{69.6}\reso{2.1} & 121.8\reso{3.1} & \downres{0.6} \\[2pt]
    FreeLaw & 102.5\reso{6.3} & \underline{\textbf{64.0}}\reso{3.9} & \underline{\textbf{63.5}}\reso{4.0} & \underline{\textbf{64.0}}\reso{4.1} & \underline{\textbf{65.5}}\reso{4.2} & \underline{\textbf{65.7}}\reso{4.1} & \underline{\textbf{67.0}}\reso{4.2} & 80.3\reso{5.0} & 114.1\reso{7.1} &\raisebox{-1pt}{\vline height 8pt}& \textbf{64.1}\reso{4.0} & 122.4\reso{7.1} & \downres{0.6} \\\\[-8pt]
    \hline \\[-8pt]
    \emph{All} & 112.9\reso{0.9} & \underline{78.5}\reso{0.6} & \underline{76.7}\reso{0.6} & \underline{\textbf{73.5}}\reso{0.6} & \underline{\textbf{73.2}}\reso{0.7} & \underline{\textbf{74.3}}\reso{0.7} & \underline{74.9}\reso{0.7} & 85.6\reso{0.8} & 129.8\reso{1.2} &\raisebox{-1pt}{\vline height 7pt}& 78.3\reso{0.5} & 133.3\reso{1.2} & \downres{5.4} \\
    \bottomrule
  \end{tabular}}
  \caption{Percentage of bits per byte after $50$ test-time iterations for varying $\lambda'$, relative to the bits per byte of the base model. We only include datasets with at least $10$ examples in our test set. \textbf{Bold} numbers denote the best performing selected subset. \underline{Underlined} numbers denote better or on-par performance with Nearest Neighbor retrieval. $\Delta$ denotes the performance gain of \method with the strongest $\lambda'$ \emph{per dataset} over Nearest Neighbor retrieval.
  Numbers in partentheses are standard errors.
  We remark that $\lambda'$ is on a logarithmic scale.
  For any choice of $\lambda' \in [1\mathrm{e-}8, 10]$, \method \emph{always} performs at least on-par with Nearest Neighbor retrieval.}
  \label{table:lambda_results}
\end{table}
\end{landscape}

\begin{figure}[H]
  \incplt[\textwidth]{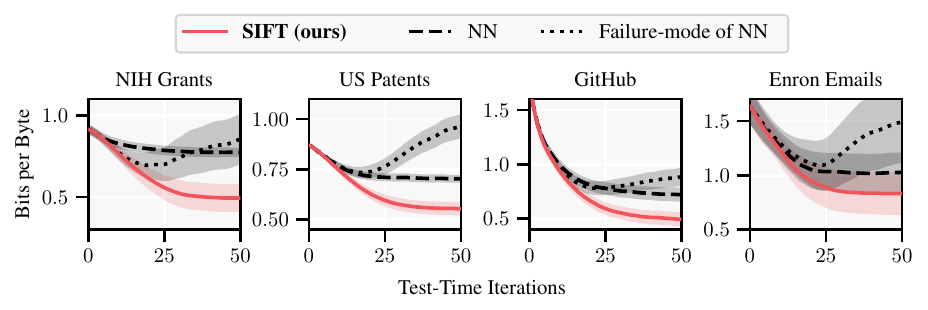}
  \vspace{-10pt}
  \caption{Performance in some of the datasets of the Pile, with GPT-2 as base model. Error bars correspond to standard errors.}
  \label{fig:main_some_datasets}
\end{figure}

\begin{figure}[H]
  \incplt[\textwidth]{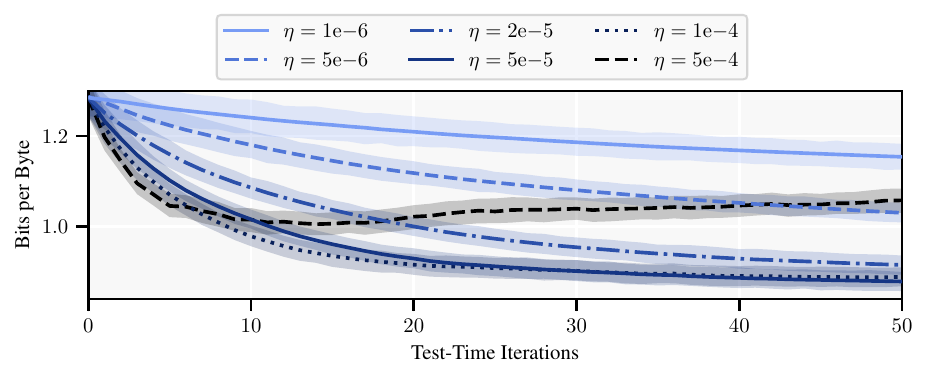}
  \caption{Ablation of the learning rate with data selected by Nearest Neighbor retrieval.
  We find that the default learning rate $5\mathrm{e}{-5}$ of the \texttt{transformers} library~\citep{wolf2020huggingface} works best, and conduct our other experiments with this learning rate unless noted otherwise.
  \cite{hardt2023test} had previously used $2\mathrm{e}{-5}$ which we find to be suboptimal.}
  \label{fig:learning_rate_ablation}
\end{figure}

\begin{table}
  \centering
  \begin{tabular}{llc@{\hspace{3mm}}c}
    \toprule
    && \textbf{Spearman} & \textbf{Pearson} \\[1pt]
    \hline \\[-6pt]
    \parbox[t]{8mm}{\multirow{2}{*}{$\sigma_n$}} & all steps & 0.485 & 0.421 \\[2pt]
    & final step & 0.496 & 0.443 \\[2pt]
    \hline \\[-7pt]
    \parbox[t]{8mm}{\multirow{2}{*}{$\hat{\sigma}_n$}} & all steps & 0.722 & 0.581 \\[2pt]
    & final step & 0.682 & 0.482 \\[2pt]
    \hline \\[-7pt]
    \parbox[t]{8mm}{\multirow{2}{*}{$\log \sigma_n$}} & all steps & 0.485 & 0.468 \\[2pt]
    & final step & 0.496 & 0.466 \\[2pt]
    \hline \\[-7pt]
    \parbox[t]{8mm}{\multirow{2}{*}{$\log \hat{\sigma}_n$}} & all steps & 0.722 & 0.618 \\[2pt]
    & final step & 0.682 & 0.526 \\[2pt]
    \bottomrule
  \end{tabular}
  \caption{We find a strong / moderate correlation between the uncertainty estimates $\hat{\sigma}_n$ / $\sigma_n$ and bits per byte.
  We further consider the correlation at all test-time iterations (from $0$ to $50$) as well as only at the final iteration.
  We report both the Spearman and Pearson correlation coefficients, measuring monotonic and linear relationships, respectively.
  Before determining the Pearson correlation, we exclude the 0.25\% of the data points with the lowest and highest uncertainty estimates to avoid the influence of outliers.
  The p-value of all correlations is below $1\mathrm{e}{-5}$ due to the large sample size.}
  \label{table:uncertainty_estimation}
\end{table}

\begin{figure}[H]
  \vspace{-5pt}
  \centering
  \incplt[0.62\textwidth]{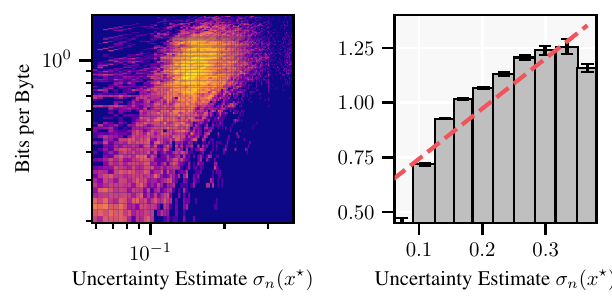}
  \vspace{-15pt}
  \caption{We visualize the predictive ability of the uncertainty estimates $\sigma_n$ analogously to \cref{fig:uncertainty_estimation}.}
  \label{fig:uncertainty_estimation_unscaled}
\end{figure}

\begin{figure}[H]
  \centering
  \incplt[0.8\textwidth]{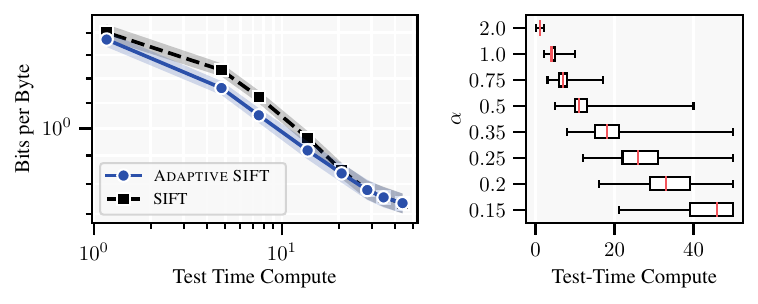}
  \caption{We evaluate \adamethod with the same choices of $\alpha$ as in \figref{fig:uncertainty_estimation}{right}.
  \textbf{Left:}~Bits per byte of \adamethod ($\downarrow$ better) against test-time compute.
  Every marker corresponds to the performance of \adamethod with a given $\alpha$, where the associated test-time compute is the average number of test-time iterations on prompts.
  We compare \adamethod to \method, where we spend the same test-time compute on every prompt.
  We see a slight advantage of \adamethod over \method, due to adaptively stopping depending on the prompt.
  Our current experiment exhibits a bias as test-time compute approaches $50$, since we force-stop the compute at $50$ iterations.
  This biases \adamethod to perform similarly to \method.
  We hypothesize that the initial advantage of \adamethod over \method may grow with more test-time compute if compute is not force-stopped at $50$ iterations.
  \textbf{Right:}~Frequency of stopping at a given iteration for given values of~$\alpha$.}
  \label{fig:early_stopping_details}
\end{figure}

\section{Proofs}\label{sec:proofs}

This section provides the formal proofs of the results presented in the main text.
\begin{itemize}
  \item \sref{sec:proofs:insufficiency_nn} proves the insufficiency of Nearest Neighbor retrieval~(\cref{prop:insufficiency_nn_informal}).
  \item \sref{sec:proofs_reg_loss_min_vs_ttft} shows the close relationship of regularized loss minimization and test-time fine-tuning~(\cref{prop:reg_loss_min_vs_ttft}).
  \item \sref{sec:proofs_balance} details how \method balances relevance and diversity~(\sref{sec:method_details:balancing_relevance_diversity}).
  \item \sref{sec:proofs_regression} states confidence sets for fine-tuning regression models that are analogous to the confidence sets for classification from the main text.
  \item \sref{sec:proofs_confidence_sets} derives the confidence sets from the main text~(\cref{thm:confidence_sets}).
\end{itemize}

\subsection{Notation}

Throughout this work, $\log$ denotes the natural logarithm.
Unless noted otherwise $\{\dots\,\!\}$ denotes a multiset.
We define the feature map $\mPhi_n \defeq (\vphi(\vx_1), \dots, \vphi(\vx_n)) \in \R^{n \times d}$, which gives rise to the kernel matrix $\smash{\mKsub{n} \defeq \mKsub{X_n} = \mPhi_n \transpose{\mPhi_n} \in \R^{n \times n}}$ and the covariance operator $\smash{\mSigma_n \defeq \transpose{\mPhi_n} \mPhi_n \in \R^{d \times d}}$.

\subsection{Insufficiency of Nearest Neighbor Retrieval (\cref{prop:insufficiency_nn_informal})}\label{sec:proofs:insufficiency_nn}

We refer to \sref{sec:method_details:uncertainty_vanishes} for the formal definition of the irreducible uncertainty~$\irred[\spD]{\prompt}$.

We remark that if embeddings are of unit length, the cosine similarity scoring function is equivalent to the (negative) Euclidean distance scoring function: \begin{align*}
  \norm{\prompt - \vx}_2^2 &= \transpose{(\prompt - \vx)}(\prompt - \vx) = \norm{\prompt}_2^2 + \norm{\vx}_2^2 - 2 \transpose{{\prompt}} \vx = 2 - 2 \cos(\prompt, \vx).
\end{align*}
We henceforth consider the Euclidean distance scoring function.

\begin{proposition}[Insufficiency of Nearest Neighbor Retrieval]\label{prop:insufficiency_nn}
  Suppose w.l.o.g.\ that~$\vphi(\vx) = \vx$.
  Consider the data space~$\spD = \bigcup_{i=1}^d \spD_i$ where $\spD_i = \{\ve_i \mid j \in \Nat\}$ with $\ve_i$ the $i$-th basis vector of~$\R^d$.\footnote{We remark that $\{\dots\,\!\}$ denotes a multiset.}
  Let $\prompt = \frac{1}{\sqrt{4 + (d-1)}} (2, 1, 1, \dots, 1) \in \R^d$.

  Then, for all $n \geq 1$: \begin{enumerate}
    \item If $X_n$ are the $n$ nearest neighbors of~$\prompt$ in~$\spD$, $\sigma_n^2(\prompt) \geq \irred*[\spD_1]{\prompt} \gg 0$.

    \item If $X_n$ is selected by \method, $\sigma_n^2(\prompt) \overset{n \to \infty}{\longrightarrow} \irred*[\spD]{\prompt} = 0$.
  \end{enumerate}
\end{proposition}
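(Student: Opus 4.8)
The plan is to exploit the extremely rigid structure of this instance: every embedding lies on the unit sphere and the data points are copies of the standard basis vectors $\ve_1,\dots,\ve_d$, so the kernel matrix of any selected multiset block‑diagonalises along coordinates and the posterior variance decomposes into a coordinate‑wise sum. First I would record the preliminary reductions. Since $\norm{\prompt}_2 = \norm{\ve_i}_2 = 1$, cosine similarity coincides with the inner product, and $\transpose{\ve_1}\prompt = 2/\sqrt{d+3}$ strictly exceeds $\transpose{\ve_i}\prompt = 1/\sqrt{d+3}$ for every $i \geq 2$; as $\ve_1$ occurs with infinite multiplicity in $\spD_1$, the $n$ nearest neighbours of $\prompt$ are exactly $n$ copies of $\ve_1$, hence $X_n \subseteq \spD_1$. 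For the irreducible uncertainties: a basis of $\{\vphi(\vx) : \vx \in \spD\}$ spans all of $\R^d$, so $\mPi_{\mPhi} = \mzero$ and $\irred*[\spD]{\prompt} = \norm{\prompt}_{\mzero}^2 = 0$; a basis of $\{\vphi(\vx) : \vx \in \spD_1\}$ spans only $\spn\{\ve_1\}$, so there $\mPi_{\mPhi}$ is the projection onto $\spn\{\ve_2,\dots,\ve_d\}$ and $\irred*[\spD_1]{\prompt} = \sum_{i=2}^d (\transpose{\ve_i}\prompt)^2 = (d-1)/(d+3)$, which is bounded away from $0$ (and tends to $1$ as $d\to\infty$).

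For part~1, since $X_n \subseteq \spD_1$ I would invoke the general lower bound $\sigma_X^2(\prompt) \geq \norm{\vphi(\prompt)}_{\mPi_{\mPhi}}^2$ from \sref{sec:method_details:uncertainty_vanishes}, applied with the data space $\spD_1$, to conclude $\sigma_n^2(\prompt) \geq \irred*[\spD_1]{\prompt} = (d-1)/(d+3) \gg 0$. As a concrete cross‑check one can also compute directly: with $\mKsub{X_n} = \vone_n\transpose{\vone_n}$ and $\vk_{X_n}(\prompt) = (2/\sqrt{d+3})\,\vone_n$, the Sherman--Morrison formula gives $\sigma_n^2(\prompt) = 1 - \tfrac{4}{d+3}\cdot\tfrac{n}{n+\lambda'}$, which decreases monotonically towards $(d-1)/(d+3)$ but never reaches it.

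For part~2, I would first note $\irred*[\spD]{\prompt} = 0$ from the reductions above. The key computation is that for any multiset $X$ with $m_i \defeq \abs{X \cap \spD_i}$ copies of $\ve_i$, orthonormality of $\ve_1,\dots,\ve_d$ makes $\mKsub{X}$ block‑diagonal with blocks $\vone_{m_i}\transpose{\vone_{m_i}}$, whence Sherman--Morrison yields $\sigma_X^2(\prompt) = \sum_{i=1}^d (\transpose{\ve_i}\prompt)^2\,\tfrac{\lambda'}{\lambda'+m_i}$. Thus $\sigma_n^2(\prompt) \to 0$ precisely when $m_i^{(n)} \to \infty$ for every $i$. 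I would finish by either route: (i) the uncertainty reduction $\psi_{\prompt}(X) = \sum_i (\transpose{\ve_i}\prompt)^2 \tfrac{m_i}{\lambda'+m_i}$ is a sum of increasing concave functions of the $m_i$, hence monotone and submodular, so \cref{assumption:submodular} holds in this instance and \cref{thm:convergence} gives $\sigma_n^2(\prompt) \leq \irred*[\spD]{\prompt} + \BigOTil{1/\sqrt n} = \BigOTil{1/\sqrt n} \to 0$; or (ii) a self‑contained argument --- if some coordinate $i_0$ were selected only finitely often its marginal uncertainty reduction would remain above a fixed positive constant, while the marginal gain of any coordinate selected infinitely often tends to $0$, so \method would eventually be forced to pick $\ve_{i_0}$, and a pigeonhole argument over the finitely many bounded coordinates forces all $m_i^{(n)} \to \infty$.

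The hard part will be the guarantee in part~2 that greedily minimising $\sigma^2_{X_n \cup \{\vx\}}(\prompt)$ actually spreads selections over all $d$ coordinate directions rather than oversampling the dominant vector $\ve_1$; everything else is bookkeeping. This is exactly what \cref{thm:convergence} abstracts away once submodularity is checked (which is immediate here because $\sigma^2$ decouples coordinate‑wise), and the alternative pigeonhole argument handles it by hand. The single simplification that makes both the lower bound of part~1 and the vanishing of part~2 essentially mechanical is the closed form $\sigma_X^2(\prompt) = \sum_i (\transpose{\ve_i}\prompt)^2\,\lambda'/(\lambda'+m_i)$.
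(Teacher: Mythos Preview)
Your proposal is correct and follows essentially the same route as the paper: for part~1 you observe that the nearest neighbours are all copies of $\ve_1$ and invoke the irreducible-uncertainty lower bound with data space $\spD_1$, and for part~2 your route~(i) is exactly the paper's argument (invoke \cref{thm:convergence} and note $\spn\spD=\R^d$). Your treatment is in fact more careful than the paper's, since you explicitly verify \cref{assumption:submodular} in this instance via the coordinate-wise closed form $\sigma_X^2(\prompt)=\sum_i(\transpose{\ve_i}\prompt)^2\,\lambda'/(\lambda'+m_i)$, whereas the paper simply appeals to \cref{thm:convergence} without checking its hypothesis; your alternative pigeonhole argument~(ii) and the explicit computation of $\irred*[\spD_1]{\prompt}=(d-1)/(d+3)$ are additional, self-contained extras beyond what the paper provides.
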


\begin{proof}
  \leavevmode\begin{enumerate}
    \item Clearly, $\norm{\prompt - \ve_1}_2^2 < \norm{\prompt - \ve_i}_2^2$ for all $i > 1$.
    Hence, $X_n = \{\ve_1 \mid i \in [n]\} \subset \spD_1$.
    This is as if the data space was restricted to $\spD_1$, and hence $\sigma_n^2(\prompt) \geq \irred*[\spD_1]{\prompt}$.\looseness=-1

    \item This follows readily from \cref{thm:convergence} and noting that $\spn{\spD} = \R^d$, implying $\irred*[\spD]{\prompt} = 0$.
  \end{enumerate}
\end{proof}

\paragraph{Discussion}

The setting examined in \cref{prop:insufficiency_nn} is an extreme case (where data exists with exact duplication), yet we deem that it illustrates a realistic scenario.
Particularly nowadays that similar information is accessible from many sources in different forms, it is crucial to explicitly select diverse data for fine-tuning.
We show here theoretically and in \cref{sec:examples:balancing_relevance_and_diversity} qualitatively that \method does not have this limitation.\looseness=-1

\subsection{The close relationship of Regularized Loss Minimization and Test-Time Fine-Tuning (\cref{prop:reg_loss_min_vs_ttft})}\label{sec:proofs_reg_loss_min_vs_ttft}

\begin{proof}

  We note that the regularized negative log-likelihood loss $\loss^\lambda$ from \cref{eq:reg_nll_loss}, \begin{align*}
    \loss^\lambda(\mW; D) &= \underbrace{- \sum_{(\vx,y) \in D} \log s_y(\mW \vphi(\vx))}_{\loss(\mW; D)} + \frac{\lambda}{2} \normF{\mW - \mWpre}^2,
  \end{align*} is strictly convex in $\mW$ and has a unique minimizer $\mWsub{\lambda}$ which satisfies \begin{align*}
    \grad \loss^\lambda(\mWsub{\lambda}; D) = \grad \loss(\mWsub{\lambda}; D) + \lambda(\mWsub{\lambda} - \mWpre) = \vzero.
  \end{align*}
  It follows that $\mWsub{\lambda} = \mWpre - \frac{1}{\lambda} \grad \loss(\mWsub{\lambda}; D)$.

  Finally, recall that $\mWhat_\eta = \mWpre - \eta \grad \loss(\mWpre; D)$.
  We obtain \begin{align*}
    \normF*{\mWsub{1/\eta} - \mWhat_{\!\eta}} &= \normF*{\eta \grad \loss(\mWpre; D) - \eta \grad \loss(\mWsub{1/\eta}; D)} \\
    &= \eta \normF*{\grad \loss(\mWpre; D) - \grad \loss(\mWsub{1/\eta}; D)}.
  \end{align*}
\end{proof}

\subsection{How \method Balances Relevance and Diversity}\label{sec:proofs_balance}

\paragraph{1st point}

For non-unit length embeddings, the first selected point can be expressed as follows: \begin{align*}
  \vx_1 = \argmin_{\vx \in \spD} \sigma_{\{\vx\}}^2(\prompt) = \argmax_{\vx \in \spD} \frac{(\transpose{\vphi(\prompt)} \vphi(\vx))^2}{\norm{\vphi(\vx)}_2^2 + \lambda'} = \argmax_{\vx \in \spD}
  \begin{cases}
    \measuredangle_{\vphi}(\prompt, \vx)^2 & \text{as $\lambda' \to 0$} \\
    (\transpose{\vphi(\prompt)} \vphi(\vx))^2 & \text{as $\lambda' \to \infty$}.
  \end{cases}
\end{align*}

\paragraph{2nd point}

Next, we consider the second selected point.
We derive the results in terms of the dot product kernel $k(\vx,\vxp) = \transpose{\vphi(\vx)} \vxp$ which is such that $k(\vx,\vxp) = \measuredangle_{\vphi}(\vx, \vxp)$ for unit length embeddings.
Let $\vx$ be such that $k(\vx_1, \vx) = 0$.
We have \begin{align*}
  \psi_{\prompt}(\{\vx_1,\vx_1\}) &= \transpose{\begin{bmatrix}
    k(\prompt, \vx_1) \\
    k(\prompt, \vx_1)
  \end{bmatrix}} \inv{\begin{bmatrix}
    1 + \lambda' & 1 \\
    1 & 1 + \lambda'
  \end{bmatrix}} \begin{bmatrix}
    k(\prompt, \vx_1) \\
    k(\prompt, \vx_1)
  \end{bmatrix} \\
  &= \frac{1}{(1+\lambda')^2 - 1} \transpose{\begin{bmatrix}
    k(\prompt, \vx_1) \\
    k(\prompt, \vx_1)
  \end{bmatrix}} \begin{bmatrix}
    1 + \lambda' & -1 \\
    -1 & 1 + \lambda'
  \end{bmatrix} \begin{bmatrix}
    k(\prompt, \vx_1) \\
    k(\prompt, \vx_1)
  \end{bmatrix} \\
  &= \frac{2 \lambda' k(\prompt, \vx_1)^2}{(1+\lambda')^2 - 1} \\
  &= \frac{2 k(\prompt, \vx_1)^2}{2 + \lambda'}.
\end{align*}
For $\vx$, we have \begin{align*}
  \psi_{\prompt}(\{\vx_1,\vx\}) &= \transpose{\begin{bmatrix}
    k(\prompt, \vx_1) \\
    k(\prompt, \vx)
  \end{bmatrix}} \inv{\begin{bmatrix}
    1 + \lambda' & 0 \\
    0 & 1 + \lambda'
  \end{bmatrix}} \begin{bmatrix}
    k(\prompt, \vx_1) \\
    k(\prompt, \vx)
  \end{bmatrix} \\
  &= \frac{1}{(1+\lambda')^2} \transpose{\begin{bmatrix}
    k(\prompt, \vx_1) \\
    k(\prompt, \vx)
  \end{bmatrix}} \begin{bmatrix}
    1 + \lambda' & 0 \\
    0 & 1 + \lambda'
  \end{bmatrix} \begin{bmatrix}
    k(\prompt, \vx_1) \\
    k(\prompt, \vx)
  \end{bmatrix} \\
  &= \frac{k(\prompt, \vx_1)^2 + k(\prompt, \vx)^2}{1 + \lambda'}.
\end{align*}
We see that $\vx$ is preferred over $\prompt$ if and only if \begin{align*}
  \frac{k(\prompt, \vx_1)^2 + k(\prompt, \vx)^2}{1 + \lambda'} > \frac{2 k(\prompt, \vx_1)^2}{2 + \lambda'} \quad\iff\quad
  k(\prompt, \vx)^2 > \underbrace{\frac{\lambda'}{2 + \lambda'}}_{c(\lambda')} k(\prompt, \vx_1)^2.
\end{align*}
As $\lambda' \to \infty$, $c(\lambda') \to 1$; whereas as $\lambda' \to 0$, $c(\lambda') \to 0$.

We interpret the expressions extensively in \cref{sec:method}.

\subsection{Confidence Sets for Regression}\label{sec:proofs_regression}

Before moving on to deriving confidence sets for the setting with categorical feedback, we state analogous results for the regression setting under the following standard assumptions.
For ease of notation, we consider the scalar case.\looseness=-1

\begin{assumption}[Linear function in a known latent space]\label{asm:linear_reg}
  We assume $f^\star(\vx) = \vphi(\vx)^\top \opt{\vw}$ with ${\vw^\star \in \R^{d}}$ and where ${\vphi(\cdot) \in \R^d}$ denotes known embeddings.
  We assume that $\opt{\vw}$ has bounded norm, i.e., $\norm{\opt{\vw} - \vwpre}_2 \leq B$ for some finite $B \in \R$.
\end{assumption}

\begin{assumption}[Sub-Gaussian Noise]\label{asm:sub_gaussian_noise}
  We assume that the data follows \begin{align*}
    y_n = \opt{f}(\vx_n) + \varepsilon_n
  \end{align*} where each $\varepsilon_{n}$ from the noise sequence $\{\varepsilon_n\}_{n=1}^\infty$ is conditionally zero-mean $\rho$-sub-Gaussian with known constant ${\rho > 0}$.
  Formally, \begin{align*}
    \forall n \geq 1, \lambda \in \R : \quad \E{e^{\lambda \epsilon_{n}}}[D_{n-1}] \leq \exp\parentheses*{\frac{\lambda^2 \rho^2}{2}}
  \end{align*} where $D_{n-1}$ corresponds to the $\sigma$-algebra generated by the random variables $\{\vx_i,\epsilon_i\}_{i=1}^{n-1}$ and $\vx_n$.
\end{assumption}

We consider the standard squared loss $\loss(\vw; D) \defeq \frac{1}{2} \sum_{(\vx,y) \in D} (f(\vx; \vw) - y)^2$ where we write $f(\vx; \vw) \defeq \vphi(\vw)^\top \vw$.
The regularized loss with minimizer $\vw_n$ is then
\begin{align}
  \loss^\lambda(\vw; D_n) \defeq \loss(\vw; D_n) + \frac{\lambda}{2} \norm{f - \vwpre}_2^2 \label{eq:reg_nll_loss_reg}
\end{align} where $\lambda > 0$ is the regularization parameter.
In the following, we write $f_n(\vx) \defeq f(\vx; \vw_n)$ and $\fpre(\vx) \defeq f(\vx; \vwpre)$.
The closed-form solution to the optimization problem from \cref{eq:reg_nll_loss_reg} is well-known~\citep[see, e.g.,][Section 6.2.2]{williams2006gaussian} to be \begin{align*}
  f_n(\vx) = \fpre(\vx) + \transpose{\vk_{X_n}}(\vx) \inv{(\mKsub{X_n} + \lambda\mI_n)} (\vy_{n} - \vfpre_{n})
\end{align*} where $\vfpre_n$ is the vector of predictions of $\fpre$ at $X_n$ and $\vy_n$ is the vector of observations in $D_n$.

The below result is an almost immediate consequence of the results of \cite{abbasi2013online} and \cite{chowdhury2017kernelized}.

\begin{theorem}[Confidence Sets for Regression]\label{thm:confidence_sets_regression}
  Pick ${\delta \in (0,1)}$ and let \cref{asm:linear_reg,asm:sub_gaussian_noise} hold.
  Let \begin{align*}
    \beta_{n}(\delta) \defeq B + \rho \sqrt{2(\gamma_{n} + 1 + \log(1 / \delta))}
  \end{align*} where $\gamma_n \defeq \max_{\vx_1, \dots, \vx_n} \frac{1}{2} \log \det{(\mI_n + \inv{\lambda} \mK_{X_n})}$.
  Then \begin{align*}
    \Pr{\forall n \geq 1, \vx \in \spX : |\opt{f}(\vx) - f_{n}(\vx)| \leq \beta_{n}(\delta) \sigma_{n}(\vx)} \geq 1 - \delta.
  \end{align*}
\end{theorem}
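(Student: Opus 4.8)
The plan is to recenter the problem at the pre-trained predictor $\fpre$ and then read off the bound from the standard self-normalized confidence inequality for kernel ridge regression due to \cite{abbasi2013online} and \cite{chowdhury2017kernelized}. First I would pass to the residual target $\opt{g} \defeq \opt{f} - \fpre$, which by \cref{asm:linear_reg} satisfies $\opt{g}(\vx) = \transpose{\vphi(\vx)}(\opt{\vw} - \vwpre)$ with $\norm{\opt{\vw} - \vwpre}_2 \leq B$, and to the residual observations $\tilde y_i \defeq y_i - \fpre(\vx_i) = \opt{g}(\vx_i) + \varepsilon_i$, whose noise sequence is still conditionally $\rho$-sub-Gaussian by \cref{asm:sub_gaussian_noise}. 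A one-line computation from \cref{eq:reg_nll_loss_reg} shows that $\hat g_n \defeq f_n - \fpre$ is precisely the ridge estimate of $\opt{g}$ from $\{(\vx_i,\tilde y_i)\}_{i\le n}$, so in feature space, writing $\mM_n \defeq \mSigma_n + \lambda\mI_d$, we have $\hat g_n(\vx) = \transpose{\vphi(\vx)}\inv{\mM_n}\transpose{\mPhi_n}\tilde{\vy}_n$ with $\tilde{\vy}_n = (\tilde y_1,\dots,\tilde y_n)$. Since $\opt{f} - f_n = \opt{g} - \hat g_n$, it is enough to control $\abs{\opt{g}(\vx) - \hat g_n(\vx)}$ uniformly over $\vx$ and $n$.

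Next I would substitute $\tilde y_i = \transpose{\vphi(\vx_i)}(\opt{\vw}-\vwpre) + \varepsilon_i$ and use $\transpose{\mPhi_n}\mPhi_n = \mSigma_n = \mM_n - \lambda\mI_d$ to obtain the bias/noise decomposition
\begin{align*}
  \opt{g}(\vx) - \hat g_n(\vx) = \lambda\,\transpose{\vphi(\vx)}\inv{\mM_n}(\opt{\vw} - \vwpre) - \transpose{\vphi(\vx)}\inv{\mM_n}\transpose{\mPhi_n}\vec{\varepsilon}_n .
\end{align*}
By Cauchy--Schwarz in the $\inv{\mM_n}$-inner product together with $\mM_n \succeq \lambda\mI_d$ (hence $\norm{\opt{\vw}-\vwpre}_{\inv{\mM_n}} \le B/\sqrt\lambda$), the first term is at most $\sqrt\lambda\,B\,\norm{\vphi(\vx)}_{\inv{\mM_n}}$ and the second at most $\norm{\vphi(\vx)}_{\inv{\mM_n}}\,\norm{\transpose{\mPhi_n}\vec{\varepsilon}_n}_{\inv{\mM_n}}$. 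The push-through (Woodbury) identity rewrites the posterior variance from the statement (the analogue of \cref{eq:variance} with the ridge regularizer $\lambda$) as $\sigma_n^2(\vx) = k(\vx,\vx) - \transpose{\vk_{X_n}(\vx)}\inv{(\mKsub{n} + \lambda\mI_n)}\vk_{X_n}(\vx) = \lambda\,\norm{\vphi(\vx)}_{\inv{\mM_n}}^2$, which turns the bias term into exactly $B\,\sigma_n(\vx)$ and leaves the noise term bounded by $\lambda^{-1/2}\,\sigma_n(\vx)\,\norm{\transpose{\mPhi_n}\vec{\varepsilon}_n}_{\inv{\mM_n}}$.

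For the noise I would invoke the self-normalized martingale tail bound of \cite{abbasi2013online}, in the kernelized form of \cite{chowdhury2017kernelized}: with probability at least $1-\delta$, simultaneously for all $n \ge 1$,
\begin{align*}
  \norm{\transpose{\mPhi_n}\vec{\varepsilon}_n}_{\inv{\mM_n}}^2 \leq 2\rho^2\log\!\parentheses*{\det(\mM_n)^{1/2}\det(\lambda\mI_d)^{-1/2}\delta^{-1}} ,
\end{align*}
and Sylvester's determinant identity gives $\det(\mM_n)/\det(\lambda\mI_d) = \det(\mI_n + \inv{\lambda}\mKsub{n})$, so the right-hand side is at most $2\rho^2(\gamma_n + \log(1/\delta))$. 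Assembling the two terms on this event yields, for all $\vx \in \spX$ and all $n \ge 1$,
\begin{align*}
  \abs{\opt{f}(\vx) - f_n(\vx)} \leq \parentheses*{B + \rho\sqrt{2(\gamma_n + 1 + \log(1/\delta))}}\sigma_n(\vx) ,
\end{align*}
the spare additive $1$ inside the root, and the normalization $\lambda \ge 1$ used to absorb the factor $\lambda^{-1/2}$ into $\rho$, being handled exactly as in the cited references. The only genuinely non-routine ingredient is this last inequality: the supermartingale / method-of-mixtures construction is what makes it hold \emph{simultaneously for all $n$}, i.e.\ anytime-valid, and I would cite it rather than reprove it. Everything else is the cosmetic recentering of the first step plus linear algebra; the one point worth checking with care is that the penalty $\frac{\lambda}{2}\norm{\vw - \vwpre}_2^2$ in \cref{eq:reg_nll_loss_reg} is exactly the one that installs $\fpre$ as the prior mean, which is what makes $\hat g_n = f_n - \fpre$ the ridge estimate of $\opt{f} - \fpre$ and hence makes the recentering exact.
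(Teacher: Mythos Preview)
Your proof is correct and takes the same approach as the paper: recenter at $\fpre$ so that the residual target has bounded norm and the residual estimator is the unbiased ridge fit, then invoke the standard self-normalized confidence bound. The paper is simply terser---it cites Theorem~2 of \cite{chowdhury2017kernelized} as a black box immediately after the recentering and then observes $|\opt{\tilde f}-\tilde f_n|=|\opt f-f_n|$, whereas you unpack the bias/noise decomposition and the feature-space Abbasi--Yadkori inequality explicitly; your one loose end (the $\lambda^{-1/2}$ factor, handled by assuming $\lambda\ge 1$) is precisely the constant-matching that the paper sidesteps by citing the kernelized result directly.
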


\begin{proof}
  Let us define the \emph{residual} of the ground truth and pre-trained model as ${\opt{\tilde{f}}(\vx) \defeq \opt{f}(\vx) - \fpre(\vx)}$ with corresponding weight vector $\tilde{\vw}$.
  Analogously, let $\tilde{y}_n = \opt{\tilde{f}}(\vx_n) + \varepsilon_n$ be the observed error.
  We have that $\opt{\tilde{\vw}} \defeq \opt{\vw} - \vwpre \in \R^d$ with norm $\norm{\opt{\vw} - \vwpre}_k$.
  The unbiased estimate of the remaining error is \begin{align*}
    \tilde{f}_n = \transpose{\vk_{X_n}}(\vx) \inv{(\mKsub{X_n} + \lambda\mI_n)} \tilde{\vy}_{n}.
  \end{align*}
  By Theorem 2 of \cite{chowdhury2017kernelized}, for all $\vx \in \spX$ and $n \geq 1$, jointly with probability at least $1-\delta$, ${|\opt{\tilde{f}}(\vx) - \tilde{f}_{n}(\vx)| \leq \beta_n(\delta) \sigma_n(\vx)}$.
  It remains now only to observe that \begin{align*}
    |\opt{\tilde{f}}(\vx) - \tilde{f}_{n}(\vx)| = |\opt{f}(\vx) - f_{n}(\vx)|.
  \end{align*}
\end{proof}

\subsection{Confidence Sets for Classification (\cref{thm:confidence_sets})}\label{sec:proofs_confidence_sets}

We begin by re-stating Corollary 1 of \cite{amani2021ucb}.
Analogous results can be obtained from Theorem 1 of \cite{zhang2024online}.
Substantial work has studied the special case of binary feedback, $K = 2$~\cite{faury2020improved,pasztor2024bandits}.

Let $\mA(\vx; \mW) \in \R^{K \times K}$ be the matrix satisfying ${(\mA(\vx; \mW))_{i,j} \defeq s_i(\vx; \mW)(\Ind{i = j} - s_j(\vx; \mW))}$.
Equivalently, ${\mA(\vx; \mW) = \diag{\vs(\vx; \mW)} - \vs(\vx; \mW) \transpose{\vs(\vx; \mW)}}$.
Based on this matrix, we define ${L \defeq \sup_{\vx \in \spX, \mW \in \spW_B} \lambda_{\max}(\mA(\vx; \mW))}$ and ${\kappa \defeq \sup_{\vx \in \spX, \mW \in \spW_B} 1/\lambda_{\min}(\mA(\vx; \mW))}$.

\begin{lemma}[Corollary 1 of \cite{amani2021ucb}]\label{lem:confidence_sets}
  Assume $\mW^\star \in \spW_B$ and $\mWpre = \mzero$.
  Let $\delta \in (0,1)$ and set \begin{align}
    \tilde{\beta}_n(\delta) \defeq \sqrt{\lambda} \parentheses*{B + \frac{1}{2 \sqrt{K}}} + \frac{2 K^{3/2} d}{\sqrt{\lambda}} \log\parentheses*{\frac{2}{\delta}\sqrt{1 + \frac{n}{d \lambda}}}.
  \end{align}
  Then, \begin{align*}
    \Pr{\forall n \geq 1, \vx \in \spX : \norm*{\vs_n(\vx) - \vs^\star(\vx)}_2 \leq 2 L \tilde{\beta}_n(\delta) \sqrt{\kappa(1 + 2 B)} \norm{\vphi(\vx)}_{\inv{\mV_n}}} \geq 1 - \delta,
  \end{align*} where $\mVsub{n} \defeq \mSigma_n + \kappa\lambda \mI_d$.
\end{lemma}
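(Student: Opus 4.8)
Because this lemma is stated verbatim as Corollary~1 of \cite{amani2021ucb}, the primary plan is to invoke that result directly; the only work is a cosmetic check that our setting matches theirs. The original statement centers the regularizer at the origin, whereas the regularized loss $\loss^\lambda$ of \cref{eq:reg_nll_loss} is centered at $\mWpre$. Under the stated hypothesis $\mWpre = \mzero$ these coincide, so no reparametrization is needed and the constants $\tilde{\beta}_n(\delta)$, $L$, $\kappa$, and the matrix $\mVsub{n} = \mSigma_n + \kappa\lambda\mI_d$ carry over unchanged. For a general prior $\mWpre$ one would absorb the shift into the residual logits, exactly as in the regression proof of \cref{thm:confidence_sets_regression}.

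For completeness, I would reproduce the argument underlying Amani et al.'s corollary as follows. First I would write the stationarity condition of $\loss^\lambda$ at the regularized maximum-likelihood estimate, $\grad\loss(\mWsub{n}) + \lambda\mWsub{n} = \mzero$, where the score is $\grad\loss(\mW) = \sum_{i} (\vs(\vx_i;\mW) - \vy_i)\transpose{\vphi(\vx_i)}$ and $\vy_i$ is the one-hot label (if the constrained minimizer lies on the boundary of $\spW_B$ one replaces this by the corresponding KKT condition). Since the conditional mean of $\vy_i$ is $\vs^\star(\vx_i)$, the partial sums $\sum_i (\vs^\star(\vx_i) - \vy_i)\transpose{\vphi(\vx_i)}$ form a matrix-valued martingale, and I would control its self-normalized $\inv{\mVsub{n}}$-norm using the vector-valued self-normalized concentration inequality (the categorical-feedback analogue of \cite{abbasi2013online} used by \cite{amani2021ucb,faury2020improved}); this produces the $\log(\tfrac{2}{\delta}\sqrt{1 + n/(d\lambda)})$ term and the $K^{3/2}d$ factor in $\tilde{\beta}_n(\delta)$.

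Next I would pass from the score bound to a parameter-error bound and then to prediction space. Applying the fundamental theorem of calculus to the gradient of $\loss^\lambda$ along the segment from $\mW^\star$ to $\mWsub{n}$, the difference of gradients equals $\bar{\mH}(\mWsub{n} - \mW^\star)$ (vectorized), where $\bar{\mH}$ is the integrated Hessian plus $\lambda\mI$, whose blocks have the form $\mA(\vx_i;\cdot)\otimes\vphi(\vx_i)\transpose{\vphi(\vx_i)}$. Lower-bounding $\lambda_{\min}(\mA)\geq 1/\kappa$ gives the curvature estimate $\bar{\mH}\succeq \tfrac{1}{\kappa}\mVsub{n}$, and inverting together with the score bound yields $\norm{\mWsub{n} - \mW^\star}_{\mVsub{n}} \lesssim \kappa\,\tilde{\beta}_n(\delta)$ up to the self-concordance factor $\sqrt{1+2B}$ that controls the Hessian along the segment. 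Finally, the mean value theorem gives $\vs_n(\vx) - \vs^\star(\vx) = \mA(\vx;\bar{\mW})(\mWsub{n}-\mW^\star)\vphi(\vx)$ for some intermediate $\bar{\mW}$, so $\norm{\vs_n(\vx) - \vs^\star(\vx)}_2 \leq L\,\norm{(\mWsub{n}-\mW^\star)\vphi(\vx)}_2$, and a Cauchy--Schwarz step in the $\mVsub{n}$-norm converts the right-hand side into $\norm{\vphi(\vx)}_{\inv{\mVsub{n}}}$, assembling the claimed factor $2L\tilde{\beta}_n(\delta)\sqrt{\kappa(1+2B)}$.

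The main obstacle is the interaction between the self-normalized martingale concentration and the nonlinearity of the softmax: since the log-likelihood Hessian depends on $\mW$, bounding the integrated Hessian uniformly over the whole segment --- the step that yields the self-concordance factor $\sqrt{1+2B}$ and forces the curvature constant $\kappa$ into $\mVsub{n}$ --- is the delicate part, and it is precisely where the uniform-in-$n$ (anytime) validity is secured. Because \cite{amani2021ucb} have already carried this out, I would cite their Corollary~1 rather than reprove it.
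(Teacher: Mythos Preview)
Your proposal is correct and matches the paper's treatment: the paper does not prove this lemma at all but simply re-states it as Corollary~1 of \cite{amani2021ucb} (noting that analogous results follow from \cite{zhang2024online}), so your plan to invoke that result directly is exactly what the paper does. Your additional proof sketch goes beyond what the paper provides and is a reasonable outline of the underlying argument, but it is not needed here.
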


Our result follows from two auxiliary lemmas.

\begin{lemma}\label{lem:tv_dist_bound}
  For any $\vs, \vsp \in \R^K$, $\dTV{\vs, \vsp} \leq \frac{\sqrt{K}}{2} \norm{\vs - \vsp}_2$.
\end{lemma}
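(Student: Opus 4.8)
The plan is to unfold the definition of the total variation distance and then invoke the standard equivalence between the $\ell_1$ and $\ell_2$ norms on $\R^K$. By the definition recalled in the main text, $\dTV{\vs,\vsp} = \frac{1}{2}\sum_{i=1}^K \abs{s_i - s'_i} = \frac{1}{2}\norm{\vs - \vsp}_1$, so it suffices to prove the dimension-dependent norm inequality $\norm{\vs - \vsp}_1 \le \sqrt{K}\,\norm{\vs - \vsp}_2$ and then multiply through by $\tfrac12$.

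Concretely, I would apply the Cauchy--Schwarz inequality to the all-ones vector $\vone \in \R^K$ and the vector of absolute differences $\vu = (\abs{s_1 - s'_1}, \dots, \abs{s_K - s'_K})$, obtaining $\norm{\vs - \vsp}_1 = \transpose{\vone}\vu \le \norm{\vone}_2\,\norm{\vu}_2 = \sqrt{K}\,\norm{\vs - \vsp}_2$. Combining this with the previous identity gives $\dTV{\vs,\vsp} = \frac{1}{2}\norm{\vs - \vsp}_1 \le \frac{\sqrt{K}}{2}\norm{\vs - \vsp}_2$, which is exactly the claim.

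There is essentially no obstacle here: the argument is a single application of Cauchy--Schwarz (equivalently, of the power-mean inequality between the two norms), and it uses neither that $\vs,\vsp$ are probability vectors nor any property of the softmax. The only points deserving a moment's care are tracking the factor $\tfrac12$ in the definition of $\dTV$ and observing that the bound is tight precisely when $\abs{s_i - s'_i}$ is equal across all coordinates $i$, which is why the factor $\sqrt{K}$ is unavoidable in general. In the downstream use---combining this with \cref{lem:confidence_sets} to prove \cref{thm:confidence_sets}---this is the step that converts the $\ell_2$-norm confidence bound of \cite{amani2021ucb} into a total variation bound, and the resulting $\sqrt{K}$ (with $K = V$) is what feeds into the $\sqrt{V}$-type factors appearing in $\beta_n(\delta)$ in \cref{eq:beta}.
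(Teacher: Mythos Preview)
Your proof is correct and matches the paper's own argument essentially line for line: both unfold $\dTV{\vs,\vsp} = \tfrac{1}{2}\norm{\vs-\vsp}_1$ and then apply Cauchy--Schwarz to obtain $\norm{\vs-\vsp}_1 \le \sqrt{K}\,\norm{\vs-\vsp}_2$. The additional remarks on tightness and downstream use are accurate but go beyond what the paper records.
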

\begin{proof}
  We have \begin{align*}
    \dTV{\vs, \vsp} &= \frac{1}{2} \norm{\vs - \vsp}_1 = \frac{1}{2} \sum_{i=1}^K |s_i - s'_i| \leq \frac{1}{2} \sqrt{K} \sqrt{\sum_{i=1}^K (s_i - s'_i)^2} = \frac{\sqrt{K}}{2} \norm{\vs - \vsp}_2
  \end{align*} where the inequality follows from Cauchy-Schwarz.
\end{proof}

The following lemma is a standard result in the literature~\citep{srinivas2009gaussian,chowdhury2017kernelized,pasztor2024bandits}, which we include here for completeness.\looseness=-1

\begin{lemma}\label{lem:kernel_helper_lemma}
  Let $\sigma_n$ be as defined in \cref{eq:variance}.
  Then, $\sqrt{\kappa \lambda} \norm{\vphi(\vx)}_{\inv{\mVsub{n}}} = \sigma_n(\vx)$ for any $\vx \in \spX$.
\end{lemma}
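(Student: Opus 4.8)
The plan is to prove this purely as an algebraic identity, using the feature-space/kernel-space duality (the ``push-through'' form of the Woodbury identity); there is no probabilistic content. Fix $\vx \in \spX$ and abbreviate $\vphi \defeq \vphi(\vx) \in \R^d$ and $\mPhi \defeq \mPhi_n \in \R^{n \times d}$, so that by the conventions of the Notation subsection $\mKsub{n} = \mPhi \transpose{\mPhi}$, $\mSigma_n = \transpose{\mPhi} \mPhi$, $\vk_{X_n}(\vx) = \mPhi \vphi$, and $k(\vx,\vx) = \transpose{\vphi}\vphi$. Write $\nu \defeq \kappa\lambda > 0$, so that both $\mVsub{n} = \mSigma_n + \nu\mI_d$ and $\mKsub{n} + \nu\mI_n$ are symmetric positive definite, hence invertible.

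First I would rewrite $\nu\inv{\mVsub{n}}$ using the splitting $\nu\mI_d = (\mSigma_n + \nu\mI_d) - \mSigma_n$, which gives
\[
  \nu\inv{\mVsub{n}} = \mI_d - \mSigma_n\inv{(\mSigma_n + \nu\mI_d)} = \mI_d - \transpose{\mPhi}\mPhi\inv{(\transpose{\mPhi}\mPhi + \nu\mI_d)}.
\]
Next I would invoke the push-through identity $\mPhi\inv{(\transpose{\mPhi}\mPhi + \nu\mI_d)} = \inv{(\mPhi\transpose{\mPhi} + \nu\mI_n)}\mPhi$, which follows immediately from the elementary equality $\mPhi(\transpose{\mPhi}\mPhi + \nu\mI_d) = (\mPhi\transpose{\mPhi} + \nu\mI_n)\mPhi$ by multiplying on the left by $\inv{(\mPhi\transpose{\mPhi} + \nu\mI_n)}$ and on the right by $\inv{(\transpose{\mPhi}\mPhi + \nu\mI_d)}$. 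This lets me trade the $d\times d$ inverse for the $n\times n$ one:
\[
  \transpose{\mPhi}\mPhi\inv{(\transpose{\mPhi}\mPhi + \nu\mI_d)} = \transpose{\mPhi}\inv{(\mPhi\transpose{\mPhi} + \nu\mI_n)}\mPhi = \transpose{\mPhi}\inv{(\mKsub{n} + \nu\mI_n)}\mPhi,
\]
so that, combining the last two displays, $\nu\inv{\mVsub{n}} = \mI_d - \transpose{\mPhi}\inv{(\mKsub{n} + \nu\mI_n)}\mPhi$.

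Finally I would sandwich this matrix identity between $\transpose{\vphi}$ and $\vphi$:
\[
  \nu\,\transpose{\vphi}\inv{\mVsub{n}}\vphi = \transpose{\vphi}\vphi - \transpose{\vphi}\transpose{\mPhi}\inv{(\mKsub{n} + \nu\mI_n)}\mPhi\vphi = k(\vx,\vx) - \transpose{\vk_{X_n}(\vx)}\inv{(\mKsub{n} + \lambda\kappa\mI_n)}\vk_{X_n}(\vx),
\]
which is exactly $\sigma_n^2(\vx)$ by \cref{eq:variance}. Since $\norm{\vv}_{\mA} = \sqrt{\transpose{\vv}\mA\vv}$ and $\sigma_n(\vx) \geq 0$, taking square roots of $\kappa\lambda\,\transpose{\vphi}\inv{\mVsub{n}}\vphi = \sigma_n^2(\vx)$ gives $\sqrt{\kappa\lambda}\,\norm{\vphi(\vx)}_{\inv{\mVsub{n}}} = \sigma_n(\vx)$, as claimed. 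There is no genuine obstacle here: the only points that need (brief) care are the justification of the push-through identity, the fact that $\kappa\lambda > 0$ makes every matrix appearing in the inverses positive definite, and the bookkeeping observation that the regularizer $\lambda\kappa\mI_n$ inside \cref{eq:variance} is precisely $\nu\mI_n$ with the same $\nu = \kappa\lambda$ that defines $\mVsub{n} = \mSigma_n + \kappa\lambda\mI_d$.
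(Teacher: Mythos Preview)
Your proof is correct and takes essentially the same approach as the paper: both arguments hinge on the push-through identity $\mPhi\inv{(\transpose{\mPhi}\mPhi + \nu\mI_d)} = \inv{(\mPhi\transpose{\mPhi} + \nu\mI_n)}\mPhi$ and then read off $\sigma_n^2(\vx)$ from the resulting quadratic form. The only cosmetic difference is that you first establish the matrix identity $\nu\inv{\mVsub{n}} = \mI_d - \transpose{\mPhi}\inv{(\mKsub{n} + \nu\mI_n)}\mPhi$ and then sandwich, whereas the paper decomposes $\vphi(\vx)$ via $(\mSigma_n + \nu\mI_d)\vphi(\vx) = \transpose{\mPhi_n}\vk_n(\vx) + \nu\vphi(\vx)$ and then takes the inner product with $\vphi(\vx)$.
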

\begin{proof}
  Note that $(\mSigma_n + \kappa\lambda \mI_d) \transpose{\mPhi_n} = \transpose{\mPhi_n} (\mKsub{n} + \kappa\lambda \mI_n)$ which implies \begin{align}
    \inv{(\mSigma_n + \kappa\lambda \mI_d)} \transpose{\mPhi_n} = \transpose{\mPhi_n} \inv{(\mKsub{n} + \kappa\lambda \mI_n)}. \label{eq:kernel_helper_lemma_helper1}
  \end{align}
  Further, by definition of $\vk_n$, $\vk_n(\vx) = \mPhi_n \vphi(\vx)$ which permits writing \begin{align*}
    (\mSigma_n + \kappa\lambda \mI_d) \vphi(\vx) = \transpose{\mPhi_n} \vk_n(\vx) + \kappa\lambda \vphi(\vx)
  \end{align*} and implies \begin{align}
    \vphi(\vx) & = \inv{(\mSigma_n + \kappa\lambda \mI_d)} \transpose{\mPhi_n} \vk_n(\vx) + \kappa\lambda \inv{(\mSigma_n + \kappa\lambda \mI_d)} \vphi(\vx) \nonumber \\
    &\overset{(\ref{eq:kernel_helper_lemma_helper1})}{=} \transpose{\mPhi_n} \inv{(\mKsub{n} + \kappa\lambda \mI_n)} \vk_n(\vx) + \kappa\lambda \inv{(\mSigma_n + \kappa\lambda \mI_d)} \vphi(\vx) \label{eq:kernel_helper_lemma_helper2}
  \end{align}
  We have \begin{align*}
    k(\vx, \vx) &= \transpose{\vphi(\vx)} \vphi(\vx) \\
    &\overset{(\ref{eq:kernel_helper_lemma_helper2})}{=} \transpose{\parentheses*{\transpose{\mPhi_n} \inv{(\mKsub{n} + \kappa\lambda \mI_n)} \vk_n(\vx) + \kappa\lambda \inv{(\mSigma_n + \kappa\lambda \mI_d)} \vphi(\vx)}} \vphi(\vx) \\
    &= \transpose{\vk_n(\vx)} \inv{(\mKsub{n} + \kappa\lambda \mI_n)} \vk_n(\vx) + \kappa\lambda \transpose{\vphi(\vx)} \inv{(\mSigma_n + \kappa\lambda \mI_d)} \vphi(\vx) \\
    &= \transpose{\vk_n(\vx)} \inv{(\mKsub{n} + \kappa\lambda \mI_n)} \vk_n(\vx) + \kappa\lambda \transpose{\vphi(\vx)} \inv{\mVsub{n}} \vphi(\vx).
  \end{align*}
  Reordering this equation, we obtain \begin{align*}
    \kappa\lambda \norm{\vphi(\vx)}_{\inv{\mVsub{n}}}^2 = \kappa\lambda \transpose{\vphi(\vx)} \inv{\mVsub{n}} \vphi(\vx) = k(\vx, \vx) - \transpose{\vk_n(\vx)} \inv{(\mKsub{n} + \kappa\lambda \mI_n)} \vk_n(\vx) = \sigma_n^2(\vx),
  \end{align*} concluding the proof.
\end{proof}

We now proceed to prove a version of \cref{thm:confidence_sets} with $\mWpre = \mzero$.

\begin{theorem}\label{thm:confidence_sets_unbiased}
  Assume $\mW^\star \in \spW_B$ and $\mWpre = \mzero$.
  Let $\delta \in (0,1)$ and $\beta_n(\delta)$ as in \cref{eq:beta}.
  Then \begin{align*}
    \Pr{\forall n \geq 1, \vx \in \spX : \dTV*{\vs_n(\vx), \vs^\star(\vx)} \leq \beta_n(\delta) \cdot \sigma_n(\vx)} \geq 1 - \delta.
  \end{align*}
\end{theorem}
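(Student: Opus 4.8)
The claim is pure plumbing on top of the three lemmas just stated: \cref{lem:confidence_sets} (Corollary~1 of \cite{amani2021ucb}) does all the probabilistic work, and I only need to translate its conclusion into the form asserted here. \textbf{Step~1:} apply \cref{lem:confidence_sets} with $K=V$ and $\mWpre = \mzero$, which gives, with probability at least $1-\delta$, simultaneously for all $n \geq 1$ and $\vx \in \spX$,
\begin{align*}
  \norm*{\vs_n(\vx) - \vs^\star(\vx)}_2 \leq 2 L\, \tilde\beta_n(\delta)\, \sqrt{\kappa(1+2B)}\, \norm{\vphi(\vx)}_{\inv{\mVsub{n}}}, \qquad \mVsub{n} = \mSigma_n + \kappa\lambda\mI_d.
\end{align*}
\textbf{Step~2:} on this event, bound $\dTV*{\vs_n(\vx),\vs^\star(\vx)} \leq \tfrac{\sqrt V}{2}\norm*{\vs_n(\vx)-\vs^\star(\vx)}_2$ by \cref{lem:tv_dist_bound}, and rewrite $\norm{\vphi(\vx)}_{\inv{\mVsub{n}}} = \sigma_n(\vx)/\sqrt{\kappa\lambda}$ by \cref{lem:kernel_helper_lemma}. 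The two factors of $\sqrt\kappa$ cancel --- this is the one mildly delicate point, and it is precisely what makes the $B$-dependence of $\kappa$ (which also enters $\sigma_n$ through \cref{eq:variance}) irrelevant. The outcome is that, on the same probability-$(1-\delta)$ event,
\begin{align*}
  \dTV*{\vs_n(\vx), \vs^\star(\vx)} \leq \sqrt V\, L\, \sqrt{\tfrac{1+2B}{\lambda}}\, \tilde\beta_n(\delta)\, \sigma_n(\vx) \qquad \text{for all } n \geq 1,\ \vx \in \spX.
\end{align*}

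\textbf{Matching the prefactor.} It then remains to verify the deterministic inequality $\sqrt V\, L\, \sqrt{(1+2B)/\lambda}\,\tilde\beta_n(\delta) \leq \beta_n(\delta)$. Substituting $\tilde\beta_n(\delta)$ from \cref{lem:confidence_sets} and distributing, the left-hand side equals
\begin{align*}
  L\sqrt{1+2B}\parentheses*{\sqrt V B + \tfrac12} + \frac{2 L V^2 d \sqrt{1+2B}}{\lambda}\log\parentheses*{\tfrac2\delta\sqrt{1+\tfrac{n}{d\lambda}}},
\end{align*}
while $\beta_n(\delta) = 2 B\sqrt V\sqrt{1+2B} + \frac{2 L V^2 d\sqrt{1+2B}}{\lambda}\log\parentheses*{\tfrac2\delta\sqrt{1+\tfrac{n}{d\lambda}}}$; the logarithmic terms are literally identical, so the check reduces to $L\parentheses*{\sqrt V B + \tfrac12} \leq 2\sqrt V B$. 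This holds because $L \leq 1$ --- indeed $\mA(\vx;\mW) = \mathrm{diag}(\vs(\vx;\mW)) - \vs(\vx;\mW)\transpose{\vs(\vx;\mW)} \preceq \mathrm{diag}(\vs(\vx;\mW)) \preceq \mI_V$ since $\vs(\vx;\mW)$ is a probability vector, hence $\lambda_{\max}(\mA) \leq 1$ --- together with $\sqrt V B \geq \tfrac12$, a mild condition ($\tfrac1{2\sqrt V}$ being negligible for any realistic vocabulary). Either way, both prefactors are $\BigO{\log(n/\delta)}$, which is the property used downstream.

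\textbf{Where the difficulty (isn't).} There is no genuinely hard step here: all the concentration is imported from \cref{lem:confidence_sets}, and the rest is the norm-to-TV conversion, the $\sqrt\kappa$ cancellation, and constant bookkeeping. The only point I would treat with care is the last one --- verifying that the assembled constant is $\le \beta_n(\delta)$ in the exact form of \cref{eq:beta}; if one prefers not to invoke $\sqrt V B \geq \tfrac12$, one just records $\beta_n(\delta)$ with a slightly larger absolute constant, which does not change the $\BigO{\log(n/\delta)}$ rate. (The general-$\mWpre$ statement \cref{thm:confidence_sets} then follows from this case by the translation $\mW \mapsto \mW - \mWpre$, which reduces the regularizer $\tfrac\lambda2\normF{\mW-\mWpre}^2$ to $\tfrac\lambda2\normF{\mW}^2$ without affecting $\sigma_n$.)
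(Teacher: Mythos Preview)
Your proposal is correct and follows essentially the same route as the paper: apply \cref{lem:confidence_sets}, convert $\ell_2$ to total variation via \cref{lem:tv_dist_bound}, replace $\norm{\vphi(\vx)}_{\inv{\mVsub{n}}}$ by $\sigma_n(\vx)/\sqrt{\kappa\lambda}$ via \cref{lem:kernel_helper_lemma}, and then check that the resulting prefactor is bounded by $\beta_n(\delta)$. You are in fact more explicit than the paper about the last step --- the paper writes the final ``$\leq$'' without comment, whereas you correctly identify that it reduces to $L(\sqrt V B + \tfrac12) \leq 2\sqrt V B$ and supply the justification $L\leq 1$ together with the mild condition $\sqrt V B \geq \tfrac12$.
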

\begin{proof}
  We have \begin{align*}
    \dTV*{\vs_n(\vx), \vs^\star(\vx)} &\leq \frac{\sqrt{K}}{2} \norm{\vs_n(\vx) - \vs^\star(\vx)}_2 \tag{\cref{lem:tv_dist_bound}} \\
    \overset{\text{w.h.p.}}&{\leq} L \tilde{\beta}_n(\delta) \sqrt{K \kappa (1 + 2 B)} \norm{\vphi(\vx)}_{\inv{\mVsub{n}}} \tag{\cref{lem:confidence_sets}} \\
    &= L \tilde{\beta}_n(\delta) \sqrt{\frac{K (1 + 2 B)}{\lambda}} \sigma_n(\vx). \tag{\cref{lem:kernel_helper_lemma}}
  \end{align*}
  It remains to note that \begin{align*}
    L \tilde{\beta}_n(\delta) \sqrt{\frac{K (1 + 2 B)}{\lambda}} &= L \sqrt{K (1 + 2 B)} \parentheses*{B + \frac{1}{2 \sqrt{K}}} + \frac{2 L K^2 d \sqrt{1 + 2 B}}{\lambda} \log\parentheses*{\frac{2}{\delta}\sqrt{1 + \frac{n}{d \lambda}}} \\
    &\leq 2 \sqrt{K (1 + 2 B)} \brackets*{B + \frac{L K^{3/2} d}{\lambda} \log\parentheses*{\frac{2}{\delta}\sqrt{1 + \frac{n}{d \lambda}}}} = \beta_n(\delta).
  \end{align*}
\end{proof}

With this we are ready to prove \cref{thm:confidence_sets}.

\begin{proof}[Proof of \cref{thm:confidence_sets}]
  We will proceed analogously to the proof of \cref{thm:confidence_sets_regression}.
  That is, our objective will be to bound the deviation of our biased model, which we refer to as $\mWsub{n} = \argmin_{\mW \in \spW_B} \loss^\lambda(\mW; D_n)$, to $\mW^\star$.
  Let \begin{align*}
    \tilde{\loss}(\mW'; D) \defeq -\sum_{(\vx, y) \in D} \log s_y((\mW' + \mWpre) \vphi(\vx)) \quad\text{and}\quad \tilde{\loss}^\lambda(\mW'; D) \defeq \tilde{\loss}(\mW'; D) + \frac{\lambda}{2} \normF{\mW'}^2
  \end{align*} with minimizer $\mWsub{n}' \defeq \argmin_{\mW' : \normF{\mW'} \leq B} \tilde{\loss}^\lambda(\mW'; D_n)$.
  We further define the residual weights $\smash{{\tilde{\mW}^\star} \defeq \mW^\star - \mWpre}$.

  Next, we make the following observation:
  In their proof of \cref{lem:confidence_sets}, \cite{amani2021ucb} bound \begin{align}
    \norm*{\vs(\vf(\vx; \mWsub{n}')) - \vs(\vf(\vx; \tilde{\mW}^\star))}_2 \leq \const \cdot \norm*{\mathrm{vec}(\tilde{\mW}^\star) - \mathrm{vec}(\mWsub{n}')}_{\tilde{\mG}(\tilde{\mW}^\star, \mWsub{n}')} \label{eq:confidence_sets_helper1}
  \end{align} where $\const$ is independent of $\mW^\star, \mWpre, \mWsub{n}'$ and the matrix $\tilde{\mG}(\tilde{\mW}^\star, \mWsub{n}')$ is invariant to a change of variables, i.e., $\tilde{\mG}(\tilde{\mW}^\star, \mWsub{n}') = \mG(\mW^\star, \mWsub{n}' + \mWpre)$ with $\tilde{\mG}$ defined with respect to the loss $\tilde{\loss}^\lambda$ and $\mG$ defined with respect to the loss $\loss^\lambda$.
  \Cref{thm:confidence_sets_unbiased} applies to $\vs(\vf(\vx; \mWsub{n}'))$ and $\vs(\vf(\vx; \mW^\star - \mWpre))$ since the regularization of $\tilde{\loss}^\lambda$ is unbiased and the residual weights satisfy $\smash{\normF*{\tilde{\mW}^\star} = \normF{\mW^\star - \mWpre} \leq B}$ by assumption.

  Since ${\tilde{\mW}^\star - \mWsub{n}' = \mW^\star - (\mWsub{n}' + \mWpre)}$, the bounds of \cref{eq:confidence_sets_helper1} as well as \cref{thm:confidence_sets_unbiased} then also apply to $\vs(\vf(\vx; \mWsub{n}' + \mWpre)), \vs(\vf(\vx; \mW^\star))$.
  Observing that ${\mWsub{n} = \mWsub{n}' + \mWpre}$ as a direct consequence of the change of variables completes the proof.
\end{proof}

\section{Qualitative Examples}\label{sec:examples}

\subsection{Balancing Relevance and Diversity}\label{sec:examples:balancing_relevance_and_diversity}

The following details the data space and prompt used in the qualitative example of \cref{fig:qualitative_example}.
We evaluate \method with $\lambda' = 1\mathrm{e}{-4}$ and normalized embeddings, using the same embedding model as in our main experiments.

\begin{table}[h!]
  \centering
  \begin{tabularx}{\textwidth}{lX}
    \toprule
    & \textbf{Prompt} \\
    \midrule \\[-9pt]
    & What is the age of Michael Jordan and how many kids does he have? \\[10pt]
    & \textbf{Data space} \\
    \midrule \\[-9pt]
    1 & Michael Jordan was born on February 17, 1963, in Brooklyn, New York. \\[2pt]
    2 & The age of Michael Jordan is 61 years. \\[2pt]
    \textcolor{red}{3} & \textcolor{red}{Michael Jordan has five children.} \\[2pt]
    \textcolor{red}{4} & \textcolor{red}{Michael Jordan has 5 kids.} \\[2pt]
    \bottomrule
  \end{tabularx}
  \caption{Query and information about Michael Jordan within data space}
  \label{table:synthetic_example:michael_jordan}
\end{table}

\subsection{Examples from the Pile}

The following provides examples of the data selected by \method for some queries from the Pile dataset.

\begin{examplebox}{DeepMind Math}{}
  {\bfseries Query}\\[3pt]
  Find the second derivative of -222966*l*s**2 + 152*l*s - 8111*l + s**2 + 2 wrt s.\\-445932*l + 2\\What is the third derivative of 175*s**5 - 5*s**4 - 6106*s**3 + 53*s**2 + 169*s - 1753?\\10500*s**2 - 120*s - 36636\\What is the third derivative of 23679631*b**5 - 2*b**3 + 8*b**2 + 2*b - 6771326 wrt b?\\1420777860*b**2 - 12\\Find the second derivative of 3263785*m**4 + 141*m + 11251.\\39165420*m**2\\What is the second derivative of -47089*k*z**3 - 30997*k*z + 59*z**2 + 295*z wrt z?\\\dots
  \medskip\hrule\medskip
  {\bfseries 1st example}\\[3pt]
  What is the second derivative of 333510825*p**3 - 292254*p + 96 wrt p?\\2001064950*p\\What is the third derivative of -2862429*f**5 - 5*f**2 + 439*f - 557?\\-171745740*f**2\\What is the derivative of 32081*i**4 + 10*i**3 - 2*i - 9371139?\\128324*i**3 + 30*i**2 - 2\\Find the third derivative of -439900344*z**5 - 675051939*z**2 wrt z.\\-26394020640*z**2\\\dots
  \medskip\hrule\medskip
  {\bfseries 2nd example}\\[3pt]
  What is the third derivative of 2322809*k**3 + 38*k**2 + 105*k + 236 wrt k?\\13936854\\What is the third derivative of 1242810*p**4 - 5*p**3 + 8382*p**2 + 491*p wrt p?\\29827440*p - 30\\Differentiate -23915071*o**4 + 25970708.\\-95660284*o**3\\Find the first derivative of -73333026*k - 218757639 wrt k.\\-73333026\\What is the second derivative of -9350*n**4 + 2047*n**2 - n - 42762066?\\-112200*n**2 + 4094\\\dots
\end{examplebox}

\begin{examplebox}{Enron Emails}{}
  {\bfseries Query}\\[3pt]
  Patti,\\\\What do I do with this now?  How do I get the \$50?  Can I wait and get a \\series of months reimbursed later or do I have to go through this every month?\\\\Fletch Sturm
  \medskip\hrule\medskip
  {\bfseries 1st example}\\[3pt]
  Lucy, \\\\Here is a rentroll for this week.  \\\\What is the outstanding balance on \#1.  It looks like 190 + 110(this week)= \\300.  I don't think we should make him pay late fees if can't communicate \\clearly.\\\\\#2 still owe deposit?\\\\\#9  What day will she pay and is she going to pay monthly or biweekly.\\\\Have a good weekend.  I will talk to you next week.\\\\In about two weeks we should know for sure if these buyers are going to buy \\the property.  I will keep you informed.\\\\Phillip
  \medskip\hrule\medskip
  {\bfseries 2nd example}\\[3pt]
  Kim,\\\\I am getting parking deducted twice from my pay check. Who do I contact to \\straighten that out?\\\\Thanx\\\\Chris
\end{examplebox}

\begin{examplebox}{FreeLaw}{}
  {\bfseries Query}\\[3pt]
  In the United States Court of Federal Claims\\                                  OFFICE OF SPECIAL MASTERS\\                                           No. 15-349V\\                                      Filed: August 20, 2015\\                                           Unpublished\\\\****************************\\ARIKA BROWNE,                         *\\                                      *\\                  Petitioner,         *      Ruling on Entitlement; Concession;\\                                      *      Influenza; Shoulder Injury (“SIRVA”)\\                                      *      Special Processing Unit (“SPU”)\\SECRETARY OF HEALTH                   *\\AND HUMAN SERVICES,                   *\\                                      *\\                  Respondent.         *\\                                      *\\****************************\\Andrew Downing, Van Cott \& Talamante, PLLC, Phoenix, AZ, for petitioner.\\Claudia Barnes Gangi, U.S. Department of Justice, Washington, DC, for respondent.\\\\                                    RULING ON ENTITLEMENT 1\\\\Vowell, Chief Special Master:\\\\       On April 7, 2015, Arika Browne filed a petition for compensation under the\\National Vaccine Injury Compensation Program, 42 U.S.C. §300aa-10, et seq., 2 [the\\“Vaccine Act” or “Program”]. Petitioner alleges that she suffered a left shoulder injury as\\a result of the administration of an influenza vaccine. Petition at 1. The case was\\assigned to the Special Processing Unit of the Office of Special Masters.\\\\       On August 20, 2015, respondent filed her Rule 4(c) report in which she concedes\\\dots
  \medskip\hrule\medskip
  {\bfseries 1st example}\\[3pt]
  In the United States Court of Federal Claims\\                                  OFFICE OF SPECIAL MASTERS\\                                           No. 15-349V\\                                      Filed: October 5, 2015\\                                           Unpublished\\\\****************************\\ARIKA BROWNE,                         *\\                                      *\\                  Petitioner,         *      Damages Decision Based on Proffer;\\                                      *      Influenza; Shoulder Injury (“SIRVA”)\\                                      *      Special Processing Unit (“SPU”)\\SECRETARY OF HEALTH                   *\\AND HUMAN SERVICES,                   *\\                                      *\\                  Respondent.         *\\                                      *\\****************************\\Andrew Downing, Van Cott \& Talamante, PLLC, Phoenix, AZ, for petitioner.\\Claudia Barnes Gangi, U.S. Department of Justice, Washington, DC for respondent.\\\\                               DECISION AWARDING DAMAGES 1\\\\Dorsey, Chief Special Master:\\\\       On April 7, 2015, Arika Browne filed a petition for compensation under the\\National Vaccine Injury Compensation Program, 42 U.S.C. §300aa-10, et seq., 2 [the\\“Vaccine Act” or “Program”]. Petitioner alleges that she suffered a left shoulder injury as\\a result of the administration of an influenza vaccine. Petition at 1. The case was\\assigned to the Special Processing Unit of the Office of Special Masters.\\\\       On August 20, 2015, a ruling on entitlement was issued, finding petitioner entitled\\\dots
  \medskip\hrule\medskip
  {\bfseries 2nd example}\\[3pt]
  In the United States Court of Federal Claims\\                                 OFFICE OF SPECIAL MASTERS\\                                          No. 15-936V\\                                   Filed: November 23, 2015\\                                          Unpublished\\\\****************************\\JENNIFER SIEKIERSKI,                   *\\                                       *\\                   Petitioner,         *      Ruling on Entitlement; Concession;\\                                       *      Influenza;\\                                       *      Shoulder Injury (“SIRVA”);\\SECRETARY OF HEALTH                    *      Special Processing Unit (“SPU”)\\AND HUMAN SERVICES,                    *\\                                       *\\                   Respondent.         *\\                                       *\\****************************\\Katheryn Lee Bruns, Faraci Lange, LLP, Rochester, NY, for petitioner.\\Julia Wernett McInerny, U.S. Department of Justice, Washington, DC, for respondent.\\\\                                    RULING ON ENTITLEMENT 1\\\\Dorsey, Chief Special Master:\\\\       On August 26, 2015, Petitioner filed a petition for compensation under the\\National Vaccine Injury Compensation Program, 42 U.S.C. §300aa-10, et seq., 2 [the\\“Vaccine Act” or “Program”]. Petitioner alleges that she experienced a shoulder injury\\related to vaccine administration (“SIRVA”) as a result of her receipt of an influenza\\vaccine on November 4, 2014. Petition at 1. The case was assigned to the Special\\Processing Unit of the Office of Special Masters.\\\\       On November 23, 2015, respondent filed her Rule 4(c) report in which she\\\dots
\end{examplebox}

\begin{examplebox}{GitHub}{}
  {\bfseries Query}\\[3pt]
  $<$?php\\\\/*\\ * This file is part of PHPExifTool.\\ *\\ * (c) 2012 Romain Neutron <imprec@gmail.com>\\ *\\ * For the full copyright and license information, please view the LICENSE\\ * file that was distributed with this source code.\\ */\\\\namespace PHPExiftool\\Driver\\Tag\\QuickTime;\\\\use JMS\\Serializer\\Annotation\\ExclusionPolicy;\\use PHPExiftool\\Driver\\AbstractTag;\\\\/**\\ * @ExclusionPolicy("all")\\ */\\class UserDataDji extends AbstractTag\\\{\\\\    protected \$Id = 'xa9dji';\\\\    protected \$Name = 'UserData\_dji';\\\\    protected \$FullName = 'QuickTime::UserData';\\\\    protected \$GroupName = 'QuickTime';\\\\    protected \$g0 = 'QuickTime';\\\\    protected \$g1 = 'QuickTime';\\\\    protected \$g2 = 'Video';\\\\    protected \$Type = 'undef';\\\\    protected \$Writable = false;\\\\    protected \$Description = 'User Data dji';\\\\    protected \$flag\_Binary = true;\\\}\\
  \medskip\hrule\medskip
  {\bfseries 1st example}\\[3pt]
  $<$?php\\\\/*\\ * This file is part of PHPExifTool.\\ *\\ * (c) 2012 Romain Neutron <imprec@gmail.com>\\ *\\ * For the full copyright and license information, please view the LICENSE\\ * file that was distributed with this source code.\\ */\\\\namespace PHPExiftool\\Driver\\Tag\\QuickTime;\\\\use JMS\\Serializer\\Annotation\\ExclusionPolicy;\\use PHPExiftool\\Driver\\AbstractTag;\\\\/**\\ * @ExclusionPolicy("all")\\ */\\class UserDataUid extends AbstractTag\\\{\\\\    protected \$Id = 'xa9uid';\\\\    protected \$Name = 'UserData\_uid';\\\\    protected \$FullName = 'QuickTime::UserData';\\\\    protected \$GroupName = 'QuickTime';\\\\    protected \$g0 = 'QuickTime';\\\\    protected \$g1 = 'QuickTime';\\\\    protected \$g2 = 'Video';\\\\    protected \$Type = 'undef';\\\\    protected \$Writable = false;\\\\    protected \$Description = 'User Data uid';\\\\    protected \$flag\_Binary = true;\\\}\\
  \medskip\hrule\medskip
  {\bfseries 2nd example}\\[3pt]
  $<$?php\\\\/*\\ * This file is part of PHPExifTool.\\ *\\ * (c) 2012 Romain Neutron <imprec@gmail.com>\\ *\\ * For the full copyright and license information, please view the LICENSE\\ * file that was distributed with this source code.\\ */\\\\namespace PHPExiftool\\Driver\\Tag\\QuickTime;\\\\use JMS\\Serializer\\Annotation\\ExclusionPolicy;\\use PHPExiftool\\Driver\\AbstractTag;\\\\/**\\ * @ExclusionPolicy("all")\\ */\\class MovieData extends AbstractTag\\\{\\\\    protected \$Id = 'mdat';\\\\    protected \$Name = 'MovieData';\\\\    protected \$FullName = 'QuickTime::Main';\\\\    protected \$GroupName = 'QuickTime';\\\\    protected \$g0 = 'QuickTime';\\\\    protected \$g1 = 'QuickTime';\\\\    protected \$g2 = 'Video';\\\\    protected \$Type = '?';\\\\    protected \$Writable = false;\\\\    protected \$Description = 'Movie Data';\\\\    protected \$flag\_Binary = true;\\\}\\
\end{examplebox}

\end{document}